\newtheorem{assumption}{Assumption}
\def\##1\#{\begin{align}#1\end{align}}
\def\$#1\${\begin{align*}#1\end{align*}}
\begin{document}
\title{In-Context Learning as Nonparametric Conditional Probability Estimation: Risk Bounds and Optimality
}

\author{\name Chenrui Liu \email chenruiliu@mail.bnu.edu.cn \\
\addr Department of Statistics \\
Beijing Normal University at Zhuhai, Zhuhai, China
\AND
\name Falong Tan\thanks{All authors contributed equally to this work. 
Falong Tan and Lixing Zhu are co-corresponding authors 
(Emails: \texttt{falongtan@hnu.edu.cn}, \texttt{lzhu@bnu.edu.cn}).} \email falongtan@hnu.edu.cn \\
\addr Department of Statistics and Data Science \\
Hunan University, Changsha, China
\AND
\name Chuanlong Xie \email clxie@bnu.edu.cn \\
\addr Department of Statistics \\
Beijing Normal University at Zhuhai, Zhuhai, China
\AND
\name Yicheng Zeng \email zengyicheng@mail.sysu.edu.cn \\
\addr School of Science \\
Sun Yat-sen University, Shenzhen, China
\AND
\name Lixing Zhu\footnotemark[\value{footnote}] \email lzhu@bnu.edu.cn \\
\addr Department of Statistics \\
Beijing Normal University at Zhuhai, Zhuhai, China
}

\editor{My editor}

\maketitle

\begin{abstract}%
This paper investigates the expected excess risk of in-context learning (ICL) for multiclass classification. We formalize each task as a sequence of labeled examples followed by a query input; a pretrained model then estimates the query's conditional class probabilities. The expected excess risk is defined as the average truncated Kullback-Leibler (KL) divergence between the predicted and true conditional class distributions over a specified family of tasks.
We establish a new oracle inequality for this risk, based on KL divergence, in multiclass classification. This yields tight upper and lower bounds for transformer-based models, showing that the ICL estimator achieves the minimax optimal rate (up to logarithmic factors) for conditional probability estimation.
From a technical standpoint, our results introduce a novel method for controlling generalization error via uniform empirical entropy. We further demonstrate that multilayer perceptrons (MLPs) can also perform ICL and attain the same optimal rate (up to logarithmic factors) under suitable assumptions, suggesting that effective ICL need not be exclusive to transformer architectures.
\end{abstract}

\begin{keywords}
expected excess risk, in-context learning, multiclass classification, multilayer perceptrons, transformers
\end{keywords}

\section{Introduction}
The transformer architecture~\citep{vaswani2017attention} represents a pivotal milestone in modern machine learning.
It has been extensively applied across domains including natural language processing~\citep{devlin2019bert}, computer vision~\citep{dosovitskiy2020image}, and reinforcement learning~\citep{chen2021decision}, catalyzing significant research advancements in the field of large language models (LLMs).
\citet{brown2020language} observed that, once scaled sufficiently, LLMs exhibit the remarkable capability of \emph{in-context learning} (ICL). Specifically, pretrained LLMs can perform novel tasks with just a few labeled examples in the prompt, without any parameter updates or fine-tuning. This capability enables strong few-shot generalization and has established a new paradigm for few-shot learning~\citep{brown2020language,garcia2023unreasonable}. Consequently, this paradigm accelerates the deployment of deep learning models, enabling a foundational shift toward LLMs as instruction-following systems.

A recent line of research has quantified ICL by studying transformer models pretrained on linear regression tasks with Gaussian priors~\citep{garg2022can, akyurek2022learning, li2023transformers_2, raventos2023pretraining, wu2023many}. Concurrently, a growing body of work has revealed a close connection between the forward pass of multi-layer transformer models and multi-step gradient descent algorithms~\citep{akyurek2022learning, von2023transformers, bai2023transformers, ahn2023transformers, zhang2024trained}, suggesting that transformers may implement ICL by implicitly simulating gradient descent. Recent empirical findings indicate that transformers fail to exhibit ICL unless they are pretrained on a sufficiently diverse set of tasks~\citep{raventos2023pretraining}, underscoring the crucial role of task diversity in pretraining. The pioneering experimental study by \citet{garg2022can} demonstrated that transformer models can in-context learn a variety of fundamental function classes, thereby motivating theoretical analyses of ICL in statistics. \citet{wu2023many} further quantified how the number of pretraining tasks influences the prediction error of ICL in linear regression. \citet{kim2024transformers} analyzed a model consisting of a deep neural network and a single-layer linear transformer, and established the minimax optimality of the in-context learner from the perspective of nonparametric regression.

Although substantial research has analyzed the ICL capabilities of large language models and achieved significant progress in understanding this mechanism, notable limitations and challenges remain in its theoretical analysis.

First, a significant gap exists between the assumptions of theoretical frameworks and the constraints of practical applications.
As noted by \citet{abedsoltan2024context}, there are two different aspects of generalization in ICL: {\it task-scaling}, where model performance improves as the number of pretraining tasks $T$ increases while keeping the context length $N$ fixed, and {\it context-scaling}, where model performance improves as $N$ increases while $T$ remains fixed.
Most existing studies focus on the combined regime of task-scaling and context-scaling, where both $T$ and $N$ are assumed to be large. Some analyses further consider the asymptotic limit as both $T$ and $N$ approach infinity. However, the practical importance of few-shot scenarios is neglected, as inference is typically constrained by limited context length.
For example, GPT-3 was evaluated under few-shot settings~\citep{brown2020language}, where the context length $N$ only allowed one or two labeled examples per class in the prompt.
To our knowledge, \citet{wu2023many} provide the first theoretical analysis of the effect of task-scaling on the prediction risk of ICL under a fixed context length.

Second, existing theoretical analyses are constrained by simplified task settings and model architectures. The literature predominantly focuses on linear regression or binary classification, neglecting the widespread practical use of multiclass classification. Architecturally, to maintain analytical tractability, most studies resort to simplified transformer variants, such as linear transformers or single-head attention mechanisms~\citep{zhang2024trained, huang2023context, kim2024transformers_2, li2024nonlinear}. This stands in stark contrast to empirical evidence, which consistently demonstrates the superior performance of standard multi-head dot-product attention in a wide range of ICL tasks~\citep{cui2024superiority, xing2024benefits}. Although some recent work has begun to address multi-head attention for regression problems~\citep{chen2024training, chen2024how, deora2024on} and binary classification ~\citep{li2024training, li2024one}, the theoretical understanding of multiclass classification remains underdeveloped. 
\citet{shen2024training} studied multiclass classification of Gaussian mixtures in the context of ICL, but their framework was still limited to single-layer linear transformer models.

To address these limitations,  we develop a theoretical framework for multiclass in-context learning that operates under a task-scaling regime with a deep transformer architecture. Specifically, we derive non-asymptotic error bounds on the expected excess risk under the assumption that the number of pretraining tasks $T$ is much larger than the context length $N$. We further analyze the asymptotic behavior as the number of pretraining tasks $T$ tends to infinity while the context length $N$ remains fixed. From a modeling perspective, we frame ICL as a nonparametric conditional probability estimation problem, where the model estimates conditional class probabilities of a query input given labeled prompt examples. To this end, we use a deep transformer model with multi-head dot-product attention and analyze its generalization performance in ICL.

Our contributions are summarized as follows:
\begin{enumerate}
\item Our work extends existing theoretical frameworks by relaxing restrictive assumptions that misalign with practical usage. Unlike prior analyses where the context length $N$ is assumed to be large,
we focus on few-shot settings which are typically operated in real-world deployments. We also relax other constraints common in previous studies, such as focusing on simplified transformer architectures. These generalizations preserve analytical rigor while enabling closer alignment with real-world deployments, thereby bridging critical gaps between theoretical analysis and practical implementation. 

\item We establish a new oracle inequality for the expected excess risk based on Kullback–Leibler (KL) divergence in multiclass classification. To the best of our knowledge, this is the first result that decomposes the expected excess risk under KL divergence into approximation, optimization, and generalization errors, with the generalization error controlled by {\it uniform empirical entropy}. While prior work such as \citet{schmidt2020nonparametric} or 
\citet{bos2022convergence} establishes oracle inequalities for deep neural networks via \(L^\infty\)-metric entropy over the entire covariate space.
Our oracle inequality provides a new methodology for risk analysis and can also be applied to other settings of multiclass classification. 

\item Our risk analysis finds that transformers are not necessarily the uniquely optimal architecture for ICL. By comparing the convergence rates of transformer and MLP architectures, we demonstrate that both can achieve minimax-optimal rates (up to logarithmic factors) under certain conditions. This reveals that transformers may be not the only viable architectural foundation for ICL, which is also corroborated by recent empirical studies~\citep{tong2024mlps, kratsios2025context}.
\end{enumerate}

\section{Related Work}
We provide a brief review of related work on ICL and theoretical analyses of transformer architectures in various settings.

\subsection{Theoretical Analyses of Transformers in ICL}
Recent theoretical studies have rigorously analyzed various aspects of transformer-based models in ICL. For linear function classes, the statistical complexity and representational power of transformers have been precisely characterized~\citep{zhang2024trained, wu2023many, cheng2023transformers}, with extensions to mixture models and broader function classes explored in~\cite{jin2024provable, guo2023transformers, collins2024context}. In terms of optimization, convergence properties and landscape analyses of transformer-based ICL objectives have been investigated primarily under simplified settings, focusing on linear attention and regression tasks~\citep{zhang2024trained, huang2023context, kim2024transformers, nichani2024transformers, kim2024transformers_2}. More recently, \citet{chen2024training} studied single-layer transformers with multi-head attention in nonlinear regimes, although their analysis is restricted to shallow architectures and regression. In contrast, we establish expected excess risk bounds for deep transformers with multi-head attention in the multiclass classification setting, thereby substantially extending existing theoretical results.

\subsection{Mechanistic and Algorithmic Perspectives of Transformers in ICL}
Recent work has shown that transformers can emulate standard learning algorithms such as gradient descent and its variants~\citep{akyurek2022learning, von2023transformers, bai2023transformers, fu2024transformers}. Additional studies have examined global minimizers and critical points arising in adaptive gradient descent methods within transformer architectures \citep{ahn2023transformers, mahankali2023one}. Bayesian analyses further suggest that transformers implicitly approximate Bayesian inference, offering a probabilistic explanation for ICL mechanisms \citep{xie2021explanation, muller2021transformers, zhang2022analysis, panwar2023context, jeon2024information}. Transformers’ ability to perform ICL on data generated by Markov chains has also been investigated \citep{collins2024context, edelman2024evolution, makkuva2024attention}, as has their potential for in-context decision-making within reinforcement learning frameworks \citep{lin2023transformers, sinii2023context}. Broader theoretical perspectives on the generalization behavior, algorithmic interpretations, and meta-learning properties of transformer-based models have been extensively studied by \citet{li2023transformers_2}, \citet{dai2022can}, and \citet{raventos2023pretraining}. Analyses of multi-head attention mechanisms, from both representational and Bayesian viewpoints, have further illuminated the learning dynamics in these models~\citep{an2020repulsive, mahdavi2023memorization}. A detailed discussion of these results lies beyond the scope of this paper.

\subsection{ICL Beyond Transformers}
Recent literature has begun to investigate whether architectures beyond transformers can exhibit in-context learning behavior. \citet{tong2024mlps} empirically demonstrated that multilayer perceptrons (MLPs) can match the ICL performance of transformers on synthetic tasks. Complementing this, \citet{kratsios2025context} showed that MLPs are universal approximators in an ICL setting. Our theoretical results further support these findings by demonstrating that, under appropriate conditions, MLPs attain minimax optimal rates for multiclass classification. These results provide evidence that attention-based architectures may not be strictly necessary for ICL.

\section{Preliminaries}

\paragraph{Notation} 
We write \( [n] := \{1,2,\ldots,n\} \). The set of natural numbers including zero is denoted by \( \mathbb{N} := \mathbb{Z}_{\geq 0} \). For a vector \( \mathbf{x}=(x_1,\cdots,x_p)^\top \in \mathbb{R}^p \) and a matrix \( \mathbf{X}=(X_{ij}) \in \mathbb{R}^{p \times N} \), we define the entrywise max-norms \( \|\mathbf{x}\|_\infty := \max_i |x_i| \), \( \|\mathbf{X}\|_\infty := \max_{i,j} |X_{ij}| \), the Euclidean norm \( \|\mathbf{x}\| := (\sum_i x_i^2)^{1/2} \), and the Frobenius norm \( \|\mathbf{X}\|_F := (\sum_{i,j} X_{ij}^2)^{1/2} \). The counting norm \( \|\mathbf{x}\|_0 \) denotes the number of nonzero entries in \( \mathbf{x} \), and analogously for matrices. For a vector-valued function \( \mathbf{f}=(f_1, \dots, f_K)^{\top}: \Omega \subset \mathbb{R}^d \to \mathbb{R}^K \), we define the $L^{\infty}$-norm as \( \|\mathbf{f} \|_\infty := \sup_{\mathbf{z} \in \Omega} \max_{j \in [K]} |f_j(\mathbf{z})| \), omitting \( \Omega \) when clear. For vectors \( \mathbf{x}, \mathbf{x}' \in \mathbb{R}^K \), define \( \log(\mathbf{x}/\mathbf{x}') := \log(\mathbf{x}) - \log(\mathbf{x}') \), applied coordinatewise. For any \( x \in \mathbb{R} \), \( \lfloor x \rfloor \) and \( \lceil x \rceil \) denote the floor and ceiling, respectively. A standard basis vector in \( \mathbb{R}^K \) has the form \( (0,\ldots,0,1,0,\ldots,0)^\top \). The \((K-1)\)-simplex is \( \mathcal{S}^K := \{ \mathbf{p}=(p_1, \dots, p_K)^{\top} \in \mathbb{R}^K : \sum_k p_k = 1,\; p_k \geq 0 \} \). For probability measures \( P \) and \( Q \), the Kullback--Leibler divergence is \( \mathrm{KL}(P \| Q) := \int \log(dP/dQ)\,dP \) if \( P \ll Q \), and \( \infty \) otherwise. We use the notation \(f(x) \lesssim g(x)\) to signify that \(f(x) \le C g(x)\) for all $x$ and some constant \(C > 0\). Similarly, \(f(x) \gtrsim g(x)\) signifies \(f(x) \ge C g(x)\).
For \( \mathbf{x} \in \mathbb{R}^p \), we write \( \| \mathbf{x} \|_1 = \sum_i|x_i| \).  The hardmax operator is \( \sigma_H(\mathbf{x}) := \lim_{c \to \infty} \exp(c\mathbf{x})/\|\exp(c\mathbf{x})\|_1 \), applied columnwise for matrices. The Hadamard product \( \odot \) denotes entrywise multiplication. The metric entropy of the function class $\mathcal F$ with respect to the metric $\rho$ is defined as \( \mathcal{V}(\delta, \mathcal{F}, \rho) := \log \mathcal{N}(\delta, \mathcal{F}, \rho) \), where \( \mathcal{N}(\delta, \mathcal{F}, \rho) \) is the \( \delta \)-covering number of $\mathcal F$ under the metric \( \rho \).
Throughout this paper, we use $C$ to denote a generic constant, which may differ across appearances.

\subsection{In-Context Learning}
In-context learning (ICL) refers to the phenomenon where a model performs prediction by conditioning on a sequence of input-output examples, known as a \emph{prompt}, without updating its parameters~\citep{brown2020language, garg2022can}. A prompt consists of $N$ labeled examples together with an unlabeled query input, i.e., \( (\mathbf{x}_1, \mathbf{y}_1, \ldots, \mathbf{x}_N, \mathbf{y}_N, \mathbf{x}_{N+1}) \), where each \(\mathbf{y}_i = f(\mathbf{x}_i)\) for some unknown function \(f\). The goal is to output a prediction \(\hat{\mathbf{y}}(\mathbf{x}_{N+1})\) for the query input 
\( \mathbf{x}_{N+1} \) such that \(\hat{\mathbf{y}}(\mathbf{x}_{N+1}) \approx f(\mathbf{x}_{N+1})\).

In ICL for multiclass classification, each task is specified by a joint distribution 
\(P\) over \((\mathbf{x}, \mathbf{y}) \in [0,1]^p \times \{0,1\}^K\), where \(\mathbf{y}\) 
is a one-hot label vector. The distribution \(P\) is drawn from a family of probability measures  \(\mathcal{P}\) and may vary across tasks. 
For a given task, the model observes \(N\) i.i.d.\ examples 
\((\mathbf{x}_1, \mathbf{y}_1), \ldots, (\mathbf{x}_N, \mathbf{y}_N)\) from \(P\) and 
is then required to predict the label \(\mathbf{y}_{N+1}\) for a query input 
\(\mathbf{x}_{N+1}\), where \((\mathbf{x}_{N+1}, \mathbf{y}_{N+1}) \sim P\) 
independently of the first \(N\) labeled examples.
Following \citet{bai2023transformers}, we encode the prompt 
\((\mathbf{x}_1, \mathbf{y}_1, \ldots, \mathbf{x}_N, \mathbf{y}_N, \mathbf{x}_{N+1})\) 
into a matrix \(\mathbf{Z} \in \mathbb{R}^{(p+K)\times(N+1)}\):
\begin{align*}
    \mathbf{Z} = 
    \begin{bmatrix}
        \mathbf{X} & \mathbf{x}_{N+1} \\
        \mathbf{Y} & \mathbf{0}_K
    \end{bmatrix},
\end{align*}
where \(\mathbf{X} = (\mathbf{x}_1, \ldots, \mathbf{x}_N) \in \mathbb{R}^{p \times N}\), \(\mathbf{Y} = (\mathbf{y}_1, \ldots, \mathbf{y}_N) \in \mathbb{R}^{K \times N}\), and \(\mathbf{o}_K \in \mathbb{R}^K\) is a zero vector placeholder for the label of the query input \(\mathbf{x}_{N+1}\), which the model is expected to predict based on the prompt.
We consider a data set of \(T\) tasks, denoted by \(\mathcal{D}_T = \{ (\mathbf{Z}^{(t)}, \mathbf{y}^{(t)}_{N+1}) \}_{t=1}^T\), where each prompt matrix \(\mathbf{Z}^{(t)} \in [0,1]^{(p+K) \times (N+1)}\) is flattened into a vector in \([0,1]^d\) with \(d := (p+K)(N+1)\) for notational simplicity.

\subsection{The Multiclass Classification Model}

The core objective of in-context classification is to predict the label of a query input based on a finite set of observed input-output pairs, collectively referred to as the \emph{prompt}. In the multiclass setting with \(K\) possible classes, this task can be formalized as estimating the conditional probability distribution over labels for the query, given the prompt. Estimating the full conditional distribution is essential for both uncertainty quantification and reliable prediction.

Consider task \(t\) with prompt \(\mathbf{Z}^{(t)} = (\mathbf{X}^{(t)}, \mathbf{Y}^{(t)}, \mathbf{x}_{N+1}^{(t)})\), where \((\mathbf{X}^{(t)}, \mathbf{Y}^{(t)})\) denotes the set of \(N\) in-context examples and \(\mathbf{x}_{N+1}^{(t)}\) is the query input. Let \(\mathbf{y}_{N+1}^{(t)} \in \{0,1\}^K\) be the one-hot label vector of the query, with \(y_{N+1,k}^{(t)}\) denoting its \(k\)-th component. The true conditional class probabilities are defined as
\[
p_k^0(\mathbf{Z}^{(t)}) := \mathbb{P}\left( y_{N+1,k}^{(t)} = 1 \mid \mathbf{Z}^{(t)} \right) = \mathbb{P}\left( y_{N+1,k}^{(t)} = 1 \mid \mathbf{X}^{(t)}, \mathbf{Y}^{(t)}, \mathbf{x}_{N+1}^{(t)} \right), \quad k \in [K],
\]
and the full vector of probabilities is given by
\[
\mathbf{p}_0(\mathbf{Z}^{(t)}) := \left( p_1^0(\mathbf{Z}^{(t)}), \dots, p_K^0(\mathbf{Z}^{(t)}) \right)^{\top} \in \mathcal{S}^K,
\]
where \(\mathcal{S}^K\) denotes the \(K\)-dimensional probability simplex and $\top$ denotes the transpose.

To estimate the conditional class probabilities, a standard approach in deep learning is to minimize the empirical negative log-likelihood via (stochastic) gradient descent. Let \(\mathcal{D}_T = \{ (\mathbf{Z}^{(t)}, \mathbf{y}_{N+1}^{(t)}) \}_{t=1}^{T}\) be a data set of \(T\) tasks.
Given a conditional class probability vector \(\mathbf{p}(\mathbf{Z}) = (p_1(\mathbf{Z}), \dots, p_K(\mathbf{Z}))^{\top} \), the likelihood over \(\mathcal{D}_T\) is defined as \(\mathcal{L}(\mathbf{p} \mid \mathcal{D}_T) = \prod_{t=1}^{T} \prod_{k=1}^{K} \left( p_k(\mathbf{Z}^{(t)}) \right)^{y_{N+1,k}^{(t)}}\).
Taking the negative logarithm yields the empirical negative log-likelihood (or cross-entropy loss): 
\begin{align*}
\ell(\mathbf{p}, \mathcal{D}_T) = -\frac{1}{T} \sum_{t=1}^{T} \sum_{k=1}^{K} y_{N+1,k}^{(t)} \log p_k(\mathbf{Z}^{(t)}) = -\frac{1}{T} \sum_{t=1}^{T} \mathbf{y}_{N+1}^{(t) \, \top} \log \mathbf{p}(\mathbf{Z}^{(t)}),
\end{align*}
where the logarithm is applied element-wise.
This loss measures the average discrepancy between the predicted and true class distributions across tasks. Minimizing it corresponds to empirical risk minimization under the cross-entropy loss. The resulting estimator \(\hat{\mathbf{p}} \in \mathcal{F}\) is obtained by optimizing over a function class \(\mathcal{F}\), typically parameterized by a deep neural network and constrained to produce valid probability vectors.

We aim to investigate a non-asymptotic bound of the expected excess risk of the estimator \( \widehat{\mathbf{p}} \), as measured by the expected cross-entropy loss:
\begin{eqnarray*}
R(\mathbf{p}_0, \widehat{\mathbf{p}})
&:=&\mathbb{E}_{\mathcal{D}_T, (\mathbf{Z}, \mathbf{y}_{N+1})} \left[ \mathbf{y}_{N+1}^{\top} 
    \log\frac{\mathbf{p}_0(\mathbf{Z})}{\widehat{\mathbf{p}}(\mathbf{Z})} \right] \\
&=& \mathbb{E}_{\mathcal{D}_T, \mathbf{Z}} \left[ \mathbf{p}_0(\mathbf{Z})^{\top}
    \log\frac{\mathbf{p}_0(\mathbf{Z})}{\widehat{\mathbf{p}}(\mathbf{Z})} \right] \\
&=& \mathbb{E}_{\mathcal{D}_T, \mathbf{Z}} \left[ {\rm KL} (\mathbf{p}_0(\mathbf{Z}) \| 
    \widehat{\mathbf{p}}(\mathbf{Z}) ) \right],
\end{eqnarray*}
where $(\mathbf{Z}, \mathbf{y}_{N+1}) $ is independent of the data set $\mathcal{D}_T$, \( \mathbf{p}_0(\mathbf{Z}) \) is the true conditional distribution, and \( \widehat{\mathbf{p}}(\mathbf{Z}) \) is the predicted distribution, and ${\rm KL}(\mathbf{p}_0(\mathbf{Z}) \| \widehat{\mathbf{p}}(\mathbf{Z}) )$ denotes the Kullback-Leibler (KL) divergence between the two distributions \( \mathbf{p}_0(\mathbf{Z}) \) and \( \widehat{\mathbf{p}}(\mathbf{Z}) \), conditional on $\mathbf{Z}$ and $(\mathcal{D}_T,\mathbf{Z})$, respectively.

However, the expected KL divergence may diverge when the likelihood ratio becomes unbounded (see Lemma 2.1 in \citet{bos2022convergence}). To address this limitation, following the suggestion of \citet{bos2022convergence}, we consider a truncated version of the KL divergence at level \( B > 0 \):
\begin{align*}
\mathrm{KL}_B (\mathbf{p}_0(\mathbf{Z}) \,\|\, \widehat{\mathbf{p}}(\mathbf{Z}))
= \mathbf{p}_0(\mathbf{Z})^\top \left( B \wedge \log \frac{\mathbf{p}_0(\mathbf{Z})}{\widehat{\mathbf{p}}(\mathbf{Z})} \right),
\end{align*}
where $a \wedge b = \min\{a, b\}$. 
Therefore, the expected excess risk  of the estimator \( \hat{\mathbf{p}} \) is defined as the expected truncated Kullback-Leibler divergence:
\begin{align*}
R_B(\mathbf{p}_0, \widehat{\mathbf{p}}) 
:= \mathbb{E}_{\mathcal{D}_T, \mathbf{Z}}\left[ \mathrm{KL}_B 
   (\mathbf{p}_0(\mathbf{Z}) \,\|\, \widehat{\mathbf{p}}(\mathbf{Z})) \right] 
=  \mathbb{E}_{\mathcal{D}_T, \mathbf{Z}}\left[ \mathbf{p}_0(\mathbf{Z})^\top \left( B \wedge \log \frac{\mathbf{p}_0(\mathbf{Z})}{\widehat{\mathbf{p}}(\mathbf{Z})} \right) \right].
\end{align*}
Hereafter, we refer to \( R_B(\mathbf{p}_0, \hat{\mathbf{p}}) \) as the \emph{ICL expected excess risk}, or simply the \emph{ICL risk}.

The practice of truncating the log-likelihood ratio to ensure boundedness has been well established, see, for example, \citet{wong1995probability} and \citet{bos2022convergence}. Truncation at level \( B > 0 \) regularizes the KL divergence by controlling the influence of near-zero predictions, thereby preventing instability in the risk.

Furthermore, due to the non-convexity of the function class \(\mathcal{F}\), we introduce a parameter to quantify the gap between the expected empirical risk achieved by an estimator \(\hat{\mathbf{p}}\) and the global minimum within \(\mathcal{F}\). Formally, this expected empirical risk gap is defined as:
\begin{align}\label{eq:empirical_risk_gap}
\Delta_T(\mathbf{p}_0, \widehat{\mathbf{p}}) := \mathbb{E}_{\mathcal{D}_T}\left[
	-\frac{1}{T} \sum_{t=1}^{T} \mathbf{y}_{N+1}^{(t)\,\top} \log(\widehat{\mathbf{p}}(\mathbf{Z}^{(t)}))
	- \min_{\mathbf{p} \in \mathcal{F}} \left(
	-\frac{1}{T} \sum_{t=1}^{T} \mathbf{y}_{N+1}^{(t)\,\top} \log(\mathbf{p}(\mathbf{Z}^{(t)}))
	\right)
	\right].
\end{align}
This quantity measures the expected difference between the empirical risk of \(\hat{\mathbf{p}}\) and the optimal empirical risk achievable within the class \(\mathcal{F}\).

\subsection{Model Classes: Transformers and MLPs}
We begin by defining the model classes for our estimator, employing a transformer encoder for prompt representation and a subsequent MLP for processing the encoded output. 
This hybrid architecture--utilizing a transformer as a feature extractor with an MLP classifier atop--constitutes a prevalent design paradigm in practice.  Transformers excel at capturing complex contextual dependencies through self-attention mechanisms, generating enriched latent representations of sequential data. The subsequent MLP leverages these high-level features to perform efficient classification or regression, benefiting from the transformer's representational capacity while preserving computational tractability. Empirical studies across NLP benchmarks (e.g., BERT+MLP for GLUE tasks) demonstrate that such configurations offer an optimal balance: transformers extract transferable linguistic features, while lightweight MLP heads enable rapid task-specific adaptation without exhaustive retraining. The decoupled design further facilitates modular optimization, allowing domain-specific fine-tuning of the transformer alongside dedicated calibration of the classifier.

\paragraph{Transformer architecture.} 
Adapting the architecture proposed in previous theoretical work on transformer-based models~\citep{gurevych2022rate, bai2023transformers, jiao2024convergence}, we adopt a transformer encoder to process the input prompt.

Let $\mathbf{Z} \in \mathbb{R}^{D \times (N+1)}$ denote the input prompt, where $D = p + K$ is the feature dimension and $N$ is the in-context length.
A transformer network $\boldsymbol{\phi}$ maps $\mathbf{Z}$ to an output of the same dimension through a sequence of $M$ transformer blocks, each comprising a self-attention (SA) and a feed-forward (FF) sublayer. The architecture is defined as: 
\begin{align}\label{eq:transformer_architecture}
	\boldsymbol{\phi} = E_{\mathrm{out}} \circ F_M^{(\mathrm{FF})} \circ F_M^{(\mathrm{SA})} \circ \dots \circ F_1^{(\mathrm{FF})} \circ F_1^{(\mathrm{SA})}(\mathbf{Z}),
\end{align}
where \(E_{\mathrm{out}}\) denotes the final output projection.  

Each self-attention (SA) sublayer applies a multi-head attention mechanism with a residual connection:
\[
F^{(\mathrm{SA})}(\mathbf{Z}) = \mathbf{Z} + \sum_{s=1}^{H} \mathbf{W}_{O,s} \left( \mathbf{W}_{V,s} \mathbf{Z} \right) \left( \left( \mathbf{W}_{K,s} \mathbf{Z} \right)^\top \left( \mathbf{W}_{Q,s} \mathbf{Z} \right) \right) \odot \sigma_H \left( \left( \mathbf{W}_{K,s} \mathbf{Z} \right)^\top \left( \mathbf{W}_{Q,s} \mathbf{Z} \right) \right),
\]
where \(\sigma_H\) denotes the hardmax activation, and each attention head is parameterized by learnable projection matrices \(\mathbf{W}_{Q,s}, \mathbf{W}_{K,s}, \mathbf{W}_{V,s}, \mathbf{W}_{O,s} \in \mathbb{R}^{D \times D}\).  
Each feed-forward (FF) sublayer is a position-wise two-layer network with ReLU activation and residual connection:
\[
F^{(\mathrm{FF})}(\mathbf{Z}) = \mathbf{Z} + \mathbf{W}_{F_2} \, \sigma\left( \mathbf{W}_{F_1} \mathbf{Z} \right),
\]
where \(\mathbf{W}_{F_1} \in \mathbb{R}^{D' \times D}\), \(\mathbf{W}_{F_2} \in \mathbb{R}^{D \times D'}\), and \(\sigma(\cdot)\) is the ReLU activation function.

To generate a prediction for the query label \(\mathbf{y}_{N+1}\), we apply an output projection to the final representation of the \((N{+}1)\)-th token:
\[
E_{\mathrm{out}}(\mathbf{Z}) = \mathbf{A}_{\mathrm{out}} \, \mathbf{z}_{N+1} + \mathbf{b}_{\mathrm{out}},
\]
where \(\mathbf{z}_{N+1} \in \mathbb{R}^D\) is the last column of \(\mathbf{Z}\), and \(\mathbf{A}_{\mathrm{out}} \in \mathbb{R}^{K \times D}\), \(\mathbf{b}_{\mathrm{out}} \in \mathbb{R}^K\) are learnable parameters.

Based on the above structure, we define a class of transformer networks with bounded output, Lipschitz continuity, and sparsity constraints:
\begin{align}\label{eq:class_of_transformer}
		\mathcal{T}(M, H, D, W_1, B_1, S_1, \gamma)
		=& \big\{\boldsymbol{\phi}: \mathbb{R}^d \to \mathbb{R}^K \mid
		\boldsymbol{\phi} \text{ follows \eqref{eq:transformer_architecture}}, \nonumber \\
		& \ \sup_{\mathbf{Z}} \|\boldsymbol{\phi}(\mathbf{Z})\| \leq B_1, \nonumber \\
		& \ \|\boldsymbol{\phi}(\mathbf{Z}_1) - \boldsymbol{\phi}(\mathbf{Z}_2)\| \leq \gamma \|\mathbf{Z}_1 - \mathbf{Z}_2\|_F, \; \forall \ \mathbf{Z}_1, \mathbf{Z}_2 \in [0,1]^d, \nonumber \\
		& \ \sum_{r=1}^{M} \sum_{s=1}^{H}
		\left(\|\mathbf{W}_{Q,r,s}\|_0 + \|\mathbf{W}_{K,r,s}\|_0 + \|\mathbf{W}_{V,r,s}\|_0 + \|\mathbf{W}_{O,r,s}\|_0\right) \nonumber \\
		& \ + \sum_{r=1}^{M}\left(\|\mathbf{W}_{r,F_1}\|_0 + \|\mathbf{W}_{r,F_2}\|_0\right)
		+ \|\mathbf{A}_{\mathrm{out}}\|_0 + \|\mathbf{b}_{\mathrm{out}}\|_0 \leq S_1, \nonumber \\
		& \ \max(D, D'_r) \leq W_1, \ \forall \ r=1, \dots, M
		\big\},
\end{align}
where \( d = (p+K)(N+1) \), $D = p + K$,  and \(\|\cdot\|_0\) counts the number of nonzero elements.

\paragraph{MLP architecture.} 
The class \(\mathcal{F}(L, W)\) consists of depth-\(L\) ReLU networks with maximum width \(W\):
\begin{align}\label{eq:MLP_definition}
	\mathcal{F}(L, W) := \left\{ \mathbf{f}: \mathbb{R}^{K} \to \mathbb{R}^{K} \;\middle|\; \mathbf{f}(\mathbf{x}) = \mathbf{W}_L \sigma_{\boldsymbol{\nu}_L} \cdots \mathbf{W}_1 \sigma_{\boldsymbol{\nu}_1} \mathbf{W}_0 \mathbf{x} \right\},
\end{align}
where \(W = \max_j K_j\), \(\mathbf{W}_j \in \mathbb{R}^{K_{j+1} \times K_j}\), \(\boldsymbol{\nu}_j \in \mathbb{R}^{K_j}\), and \(\sigma_{\boldsymbol{\nu}}(\mathbf{x}) := \max(\mathbf{x} - \boldsymbol{\nu}, 0)\) denotes the biased ReLU activation. 

To control the capacity of the MLP class, we define a constrained subclass \(\mathcal{F}_{\mathrm{id}}(L, W, S)\) with bounded parameters and sparsity:
\begin{align}\label{eq:MLP_class}
\mathcal{F}_{\mathrm{id}}(L, W, S) := \bigg\{ \mathbf{f} \in \mathcal{F}(L, W)\Big|\max_{j \in [L]} \left( \|\mathbf{W}_j\|_{\infty} \vee \|\boldsymbol{\nu}_j\|_{\infty} \right) \leq 1, \sum_{j=0}^{L} \left( \|\mathbf{W}_j\|_0 + \|\boldsymbol{\nu}_j\|_0 \right) \leq S \bigg\},
\end{align}
where \(\|\cdot\|_{\infty}\) is the entrywise max-norm.

\begin{assumption}\label{assump:lipschitz_mlp}
The MLP class $\mathcal{F}_{\mathrm{id}}(L, W, S)$ is assumed to be $L_1$-Lipschitz with respect to the entrywise max-norm $\|\cdot\|_\infty$, that is,
\[
\|\mathbf{f}(\mathbf{y}_1) - \mathbf{f}(\mathbf{y}_2)\|_\infty \leq L_1 \|\mathbf{y}_1 - \mathbf{y}_2\|_\infty, 
\quad \forall\, \mathbf{f} \in \mathcal{F}_{\mathrm{id}}(L, W, S),\ \forall\, \mathbf{y}_1, \mathbf{y}_2 \in \mathbb{R}^K.
\]
\end{assumption}

This assumption simplifies our theoretical analysis. Although the Lipschitz constant $L_1$ of an MLP can be bounded in terms of its depth, width, and weight norms, computing its exact value is NP-hard~\citep{virmaux2018lipschitz}. Moreover, obtaining tight upper bounds remains challenging~\citep{fazlyab2019efficient, latorre2020lipschitz, kim2021lipschitz}. Therefore, we do not attempt to estimate it explicitly and instead assume a uniform bound \(L_1\).

\subsection{Function Class for Conditional Probability Estimation}
We now define the full function class used to estimate conditional class probabilities from a given prompt. Each function integrates a transformer encoder for contextual representation learning, followed by an MLP that maps the latent representation to class probabilities via a final softmax layer:
\begin{eqnarray}\label{eq:function_class}
	&& \mathcal{F}(M, H, D, W_1, B_1, S_1, \gamma, L, W, S) \nonumber \\
	&&:= \left\{ \mathbf{p}(\mathbf{Z}) = \boldsymbol{\Phi} \circ \mathbf{f} \circ \boldsymbol{\phi}(\mathbf{Z}) \;\middle|\; 
	\mathbf{f} \in \mathcal{F}_{\mathrm{id}}(L, W, S),\ 
	\boldsymbol{\phi} \in \mathcal{T}(M, H, D, W_1, B_1, S_1, \gamma) 
	\right\},
\end{eqnarray}
where \(\boldsymbol{\Phi}(\cdot)\) is the softmax function. 

\section{The Risk Bounds for In-Context Learning}

In this section, we establish risk bounds for ICL in the multiclass classification setting. We first introduce some key definitions and assumptions used in our analysis.

\begin{definition}[Small Value Bound]
Let $\alpha\ge 0$ and $\mathcal H$ be a function class. A function $\mathbf p=(p_1,\dots,p_K)\in \mathcal H$ is said to be  \(\alpha\)-\emph{small value bounded} (or \(\alpha\)-SVB) if there exists some constant \(C > 0\) such that for any \(k \in [K]\) and \(t \in (0,1]\), the following holds: 
\begin{align*}
\mathbb{P} \left\{ p_k(\mathbf{Z}) \leq t \right\} \leq C t^\alpha.
\end{align*}
Moreover, if all $\mathbf p\in \mathcal H$ are $\alpha$-SVB, we say that $\mathcal H$ is $\alpha$-SVB. 
\end{definition}

The concept of $\alpha$-SVB generalizes the standard assumption in nonparametric classification that true conditional class probabilities are bounded away from zero, allowing them to approach zero. This extension, which follows the conditions outlined in \cite{bos2022convergence}, tightens error bounds and accelerates convergence. The index \(\alpha\) will appear in the convergence rate analysis.

\begin{definition}[Hölder Smoothness]
Let \( \beta > 0 \), \( Q > 0 \), and \( \Omega \subset \mathbb{R}^d \). The ball of \( \beta \)-Hölder continuous functions with radius \( Q \), denoted by \( C^\beta(\Omega, Q) \), is defined as
\begin{align*}
C^\beta(\Omega, Q) := \left\{ f : \Omega \to \mathbb{R}: \sum_{\|\gamma\|_1 < \beta} \|\partial^\gamma f\|_\infty + \sum_{\|\gamma\|_1 = \lfloor \beta \rfloor} \sup_{\substack{\mathbf{x}, \mathbf{y} \in \Omega \\ \mathbf{x} \neq \mathbf{y}}}
\frac{|\partial^\gamma f(\mathbf{x}) - \partial^\gamma f(\mathbf{y})|}{\|\mathbf{x} - \mathbf{y}\|_\infty^{\beta - \lfloor \beta \rfloor}} \leq Q \right\},
\end{align*}
where \( \gamma = (\gamma_1, \dots, \gamma_d) \in \mathbb{N}^d \) is a multi-index and \( \partial^\gamma := \partial^{\gamma_1}_{x_1} \cdots \partial^{\gamma_d}_{x_d} \).
\end{definition}

\begin{definition}[Hölder-Smooth Class of Probability]
For any \( \beta > 0 \) and \( Q > 0 \), the $\beta$-Hölder-smooth class of the conditional probability functions is defined as
\begin{align}\label{Holder_class}
\mathcal{G}(\beta, Q) := \left\{ \mathbf{p}_0 = (p_1^0, \dots, p_K^0)^{\top} : [0,1]^d \to \mathcal{S}^K, \; p_k^0 \in C^\beta([0,1]^d, Q) \text{ for all } k \in [K] \right\},
\end{align}
where \( \mathcal{S}^K \subset \mathbb{R}^K \) denotes the probability simplex.
Furthermore, we write  $\mathcal{G}_\alpha(\beta, Q)$ for all functions in $\mathcal{G}(\beta, Q)$ that are $\alpha$-SVB.
\end{definition}

For a class of conditional class probabilities \(\mathcal{F}\), we define its coordinate-wise logarithmic transform as
\begin{align*}
\log(\mathcal{F}) := \left\{ \log \mathbf{p} = (\log p_1, \dots, \log p_K)^{\top} : \mathbf{p} \in \mathcal{F} \right\}.
\end{align*}

\begin{definition}[$\delta$-Covering Number and Metric Entropy]
Let \((\mathcal{V}, d)\) be a pseudometric space with metric \(d(\cdot, \cdot)\) and \(\mathcal{F} \) be a subset of \( \mathcal{V} \). A finite set \( \mathcal{F}(\delta) \) is called a \(\delta\)-cover (or \(\delta\)-net) of \( \mathcal{F} \) if, for all \(\mathbf{f} \in \mathcal{F}\), there exists \(\mathbf{g} \in \mathcal{F}(\delta)\) such that $d(\mathbf{f}, \mathbf{g}) \leq \delta$. Moreover, the \(\delta\)-covering number of \(\mathcal{F}\) with respect to the metric \(d\) is the smallest cardinality  of such a \(\delta\)-cover \(\mathcal{F}(\delta)\), denoted by \( \mathcal{N}(\delta, \mathcal{F}, d) \). The quantity \(\log \mathcal{N}(\delta, \mathcal{F}, d)\) is referred to as the metric entropy of \(\mathcal{F}\) at scale \(\delta\) with respect to the metric \(d\).
\end{definition}

\begin{definition}[Uniform Empirical Covering Number and Entropy]
For any given \( \mathcal{Z}_T = \{ \mathbf{Z}^{(1)}, \ldots, \mathbf{Z}^{(T)} \} \subset \mathbb{R}^d \), we define the pseudometric \( d(\mathbf{f}, \mathbf{g}) = \|\mathbf{f}-\mathbf{g}\|_{\infty, \mathcal{Z}_T}\) as
\begin{align*}
d(\mathbf{f}, \mathbf{g}) = \|\mathbf{f}-\mathbf{g}\|_{\infty, \mathcal{Z}_T} := \max_{1 \le t \le T} \max_{1 \le k \le K} \left| \mathbf{f}_k(\mathbf{Z}^{(t)}) - \mathbf{g}_k(\mathbf{Z}^{(t)}) \right|.
\end{align*}
The corresponding metric covering number \(\mathcal{N}(\delta, \mathcal{F}, \|\cdot\|_{\infty, \mathcal{Z}_T})\) is referred as the empirical covering number with respect to $\|\cdot\|_{\infty, \mathcal{Z}_T}$. 
For a fixed \(T\), the largest covering number over all \(\mathcal{Z}_T \subset \mathbb{R}^d \) is referred to as the uniform empirical covering number:
\begin{align*}
\mathcal{N}_{\infty}(\delta, \mathcal{F}, T) := \sup_{\mathcal{Z}_T } \mathcal{N}(\delta, \mathcal{F}, \|\cdot\|_{\infty, \mathcal{Z}_T}).
\end{align*}
The uniform empirical entropy is defined as 
\(\mathcal{V}_{\infty}(\delta, \mathcal{F}, T) := 
\log \mathcal{N}_{\infty}(\delta, \mathcal{F}, T)\).
\end{definition}

\begin{assumption}\label{assump:prob-lower-bound}
Let \(\mathcal{F}\) be a class of conditional class probability functions \(\mathbf{p} = (p_1, \dots, p_K)^{\top} \) used to estimate conditional class probabilities.
Suppose that for any \(\mathbf{p} \in \mathcal{F}\), there exist two constants \(C_1 > 0\) and \(0 < C_2 < 1\) such that \(p_k(\mathbf{Z}) \geq C_1 / T^{C_2}\) for all \(k \in [K]\) and \(\mathbf{Z} \in [0,1]^d\).
\end{assumption}

Assumption~\ref{assump:prob-lower-bound} requires all estimated probabilities to be bounded away from zero. This avoids the singularity of the KL divergence and enables the use of empirical covering numbers to control the complexity of \(\log(\mathcal{F}) \) in the oracle inequality.
For the neural network class $\mathcal{F}(M, H, D, W_1, B_1, S_1, \gamma, L, W, S)$ defined in~(\ref{eq:function_class}), this can be achieved by adding additional layers to the MLP architecture of the network. More details on this issue are provided in Remarks~\ref{remark-lowbound-F} and \ref{lowbound-F}.

\begin{remark}
Assumption~\ref{assump:prob-lower-bound} and the truncation level \(B\) in the expected excess risk \(R_B(\mathbf{p}_0, \hat{\mathbf{p}})\) serve complementary roles in ensuring theoretical stability. The lower bound condition prevents the KL loss from becoming singular due to vanishing predicted probabilities. However, since the bound \(C_1 / T^{C_2}\) decreases as the number of tasks \(T\) increases, the predicted probabilities may still become arbitrarily small as \(T\) grows, causing the untruncated risk \(R(\mathbf{p}_0, \hat{\mathbf{p}})\) to diverge. Truncation at level \(B\) is therefore essential to cap the influence of extreme log-ratio values and to ensure meaningful statistical guarantees. 
\end{remark}

\begin{theorem}[Oracle Inequality]\label{thm:Oracle}
Let \(\mathcal{F}\) be any class of conditional class probability functions that satisfies Assumption~\ref{assump:prob-lower-bound}, and let \(\widehat{\mathbf{p}} \in \mathcal{F}\) be an estimator of \( \mathbf{p}_0 \in \mathcal{S}^K \). For any truncation level \(B \geq 2\) and task number $T$, the expected excess risk satisfies
\begin{align}\label{oracle_inequality}
R_B(\mathbf{p}_0, \widehat{\mathbf{p}}) \lesssim \inf_{\mathbf{p} \in \mathcal{F}} R(\mathbf{p}_0, \mathbf{p}) + \Delta_T(\mathbf{p}_0, \widehat{\mathbf{p}}) + \frac{B^2 \log(\mathcal{N}_T)\, \log^2 T}{T},
\end{align}
where \(\mathcal{N}_T = \mathcal{N}_{\infty} \left( 1/T, \log(\mathcal{F}), T \right)\), and \(\Delta_T(\mathbf{p}_0, \widehat{\mathbf{p}})\) is the expected empirical risk gap defined in~\eqref{eq:empirical_risk_gap}.
\end{theorem}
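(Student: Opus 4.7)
The plan is to prove this oracle inequality via a classical ERM-style decomposition: split $R_B(\mathbf p_0,\widehat{\mathbf p})$ into an approximation term given by an (almost) oracle $\mathbf p^\ast\in\mathcal F$, the expected optimization gap $\Delta_T$, and a centered empirical-process remainder, then control the remainder by a Bernstein-type inequality whose variance proxy is of the form $B\cdot R_B$, finally made uniform over $\mathcal F$ through a $1/T$-cover of $\log(\mathcal F)$ in the uniform empirical pseudometric.

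First, take $\mathbf p^\ast\in\mathcal F$ with $R(\mathbf p_0,\mathbf p^\ast)\le \inf_{\mathbf p\in\mathcal F}R(\mathbf p_0,\mathbf p)+1/T$. Writing $\psi_{\mathbf p}(\mathbf Z,\mathbf y):=-\mathbf y^\top\log\mathbf p(\mathbf Z)$, $\widehat{\mathcal L}_T(\mathbf p):=T^{-1}\sum_{t=1}^{T}\psi_{\mathbf p}(\mathbf Z^{(t)},\mathbf y^{(t)})$, and $\mathcal L(\mathbf p):=\mathbb E\psi_{\mathbf p}$, one has $\mathcal L(\mathbf p)-\mathcal L(\mathbf p_0)=R(\mathbf p_0,\mathbf p)$ and $\mathbb E[\widehat{\mathcal L}_T(\widehat{\mathbf p})-\widehat{\mathcal L}_T(\mathbf p^\ast)]\le \Delta_T(\mathbf p_0,\widehat{\mathbf p})$ by the definition of $\Delta_T$. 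Since $\mathbf p^\ast$ is deterministic, $\mathbb E\widehat{\mathcal L}_T(\mathbf p^\ast)=\mathcal L(\mathbf p^\ast)$; combining this with the elementary inequality $R_B\le R$ yields
\$
R_B(\mathbf p_0,\widehat{\mathbf p})\le R(\mathbf p_0,\mathbf p^\ast)+\Delta_T+\underbrace{\mathbb E\bigl[(\mathcal L(\widehat{\mathbf p})-\mathcal L(\mathbf p^\ast))-(\widehat{\mathcal L}_T(\widehat{\mathbf p})-\widehat{\mathcal L}_T(\mathbf p^\ast))\bigr]}_{=:\,E_T}.
\$
It therefore suffices to show that $E_T\lesssim B^2\log(\mathcal N_T)\log^2 T/T$.

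Second, apply Bernstein's inequality pointwise to the summands $X_t^{(\mathbf p)}:=\psi_{\mathbf p}(\mathbf Z^{(t)},\mathbf y^{(t)})-\psi_{\mathbf p^\ast}(\mathbf Z^{(t)},\mathbf y^{(t)})$. Assumption~\ref{assump:prob-lower-bound} forces $|\log p_k(\mathbf Z)|\lesssim \log T$, so $|X_t^{(\mathbf p)}|\le C\log T$ almost surely. A short computation based on $u^2\le B|u|$ for $|u|\le B$ bounds the conditional variance of $X_t^{(\mathbf p)}$ by $CB\bigl[R_B(\mathbf p_0,\mathbf p)+R_B(\mathbf p_0,\mathbf p^\ast)\bigr]$; this variance-to-KL link is precisely why truncation at level $B$ in the definition of $R_B$ is indispensable. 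For a fixed $\mathbf p$, Bernstein then gives, with probability at least $1-e^{-\tau}$,
\$
(\mathcal L(\mathbf p)-\mathcal L(\mathbf p^\ast))-(\widehat{\mathcal L}_T(\mathbf p)-\widehat{\mathcal L}_T(\mathbf p^\ast))\le C\sqrt{\tfrac{B\,R_B(\mathbf p_0,\mathbf p)\,\tau}{T}}+C\tfrac{B\log T\cdot\tau}{T}.
\$
The pointwise bound is then promoted to a uniform statement by covering $\log(\mathcal F)$ at scale $1/T$ in the pseudometric $\|\cdot\|_{\infty,\mathcal Z_T}$, using the $1$-Lipschitzness of $\mathbf y\mapsto\mathbf y^\top(\cdot)$ on the simplex to ensure that the discretization error of $X_t^{(\mathbf p)}$ is $O(1/T)$; a union bound over the net inflates $\tau$ to $\log\mathcal N_T+\log T$. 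A peeling argument over geometric shells $\{\mathbf p:R_B(\mathbf p_0,\mathbf p)\in[2^{j-1}/T,2^j/T]\}$ together with Young's inequality absorbs $\tfrac12 R_B(\mathbf p_0,\widehat{\mathbf p})$ on the left, and integrating the resulting tail over $\tau\ge 0$ converts the high-probability bound into the desired expectation bound on $E_T$.

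The main obstacle will be aligning the peeling/localization step with the \emph{uniform empirical} entropy $\mathcal V_\infty(1/T,\log(\mathcal F),T)$ rather than with the customary $L^\infty$-metric entropy used by \citet{schmidt2020nonparametric} and \citet{bos2022convergence}: because $\mathcal V_\infty$ is defined as a supremum over data-dependent point sets $\mathcal Z_T\subset\mathbb R^d$, the cover must be chosen as a function of $\mathcal Z_T$ and then integrated out, forcing the concentration step to be routed through a conditioning (or symmetrization) argument on $\mathcal Z_T$. Tracking the interplay between Assumption~\ref{assump:prob-lower-bound} (which only provides a polynomial lower bound $C_1/T^{C_2}$ on $\mathbf p$) and the $B$-truncation, so that only the logarithmic factor $\log^2 T$ survives in the final rate rather than a polynomial factor in $T$, is where most of the technical accounting will sit.
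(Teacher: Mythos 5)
Your high-level plan — center at an approximate oracle $\mathbf{p}^\ast$, write $R_B(\mathbf{p}_0,\widehat{\mathbf{p}}) \le R(\mathbf{p}_0,\mathbf{p}^\ast) + \Delta_T + E_T$, and control the centered remainder $E_T$ by Bernstein plus a $1/T$-cover of $\log(\mathcal{F})$ in the empirical pseudometric, finished by peeling and Young's inequality — is a valid and genuinely different route from the paper's. The paper instead compares $R_B$ to $2R_{B,T}$ directly via a Gy\"orfi-style \emph{relative deviation} inequality (Lemma~\ref{lem:excess-risk-bound}), whose multiplicative form $\mathbb{P}\{\exists\,\mathbf{p}:\ \mathbb{E}g_{\mathbf{p}} - \tfrac1T\sum_t g_{\mathbf{p}}(\mathbf{Z}^{(t)},\mathbf{y}^{(t)}) \ge \epsilon(a+b+\mathbb{E}g_{\mathbf{p}})\} \le \cdots$ implements the fast-rate absorption with the factor-$2$ trick and no explicit peeling shells; combined with $R_{B,T}\le\inf_{\mathbf{p}\in\mathcal{F}}R(\mathbf{p}_0,\mathbf{p})+\Delta_T$ (Lemma~\ref{P: Upper Bound Risk Delta}), this yields the oracle inequality. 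Your route should reach the same rate but needs more bookkeeping at the localization step. You correctly flag that both routes must symmetrize to decondition the data-dependent cover that defines $\mathcal{V}_\infty$.

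The concrete gap is your variance bound. The premise ``$u^2\le B|u|$ for $|u|\le B$'' does not apply: under Assumption~\ref{assump:prob-lower-bound} the log-ratios $\log\bigl(p_k^0(\mathbf{Z})/p_k(\mathbf{Z})\bigr)$ are bounded in magnitude only by $O(\log T)$, while $B\ge 2$ is a fixed constant. Worse, even with $B$ replaced by $B_T:=B\vee|\log(C_T/T)|\asymp\log T$, the ``short computation'' is not elementary: the variance is governed by $\sum_k p_k^0|\log(p_k^0/p_k)|$ with absolute values inside, whereas $\mathrm{KL}_B(\mathbf{p}_0\|\mathbf{p})=\sum_k p_k^0\bigl(B\wedge\log(p_k^0/p_k)\bigr)$ is a \emph{signed} sum whose positive and negative pieces can nearly cancel, so there is no direct comparison. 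The paper supplies precisely this missing ingredient via Lemma~\ref{lem:trunc-ineq} (a truncated-KL analogue of Lemma~3.7 of \citet{bos2022convergence}), which bounds $\sum_k p_k\bigl|B\wedge\log(p_k/q_k)\bigr|^m$ by the signed truncated KL; it further inserts the truncation $\mathbb{I}(p_k^0 \ge C_T/T)$ into $g_{\mathbf{p}}$ and uses Assumption~\ref{assump:prob-lower-bound} with $C_T\le C_1 T^{1-C_2}$ to show that the dropped terms contribute non-positively, which is what makes $\mathbb{E}[g_{\mathbf{p}}^2]\le B_T\,\mathbb{E}[g_{\mathbf{p}}]$ hold cleanly. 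Without these two steps your variance proxy is off by $\log T$ factors; it is the resulting $B_T^2\asymp\log^2 T$ (not $B^2$ alone) that ultimately produces the $\log^2 T$ in the rate. You do anticipate in your final paragraph that this is where the technical accounting sits, but the specific claim as written would fail.
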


The proof of Theorem~\ref{thm:Oracle} is provided in Appendix~\ref{app:proof_oracle}. It provides a general oracle inequality for any function class \(\mathcal{F}\) satisfying Assumption~\ref{assump:prob-lower-bound}. This result is independent of the specific architecture used to model the conditional class probabilities.

\paragraph{Error Decomposition.} 
The convergence rate of the expected excess risk $R_B(\mathbf{p}_0, \widehat{\mathbf{p}})$ admits a standard decomposition into approximation, optimization, and generalization components; see \citet{van1996weak, wainwright2019high, bartlett2021deep}. This decomposition captures the trade-off among function class expressiveness, empirical optimization, and statistical complexity. Theorem~\ref{thm:Oracle} formalizes this perspective, bounding the truncated risk by the approximation error, the expected empirical risk gap, and a complexity term governed by the uniform empirical entropy of \(\log(\mathcal{F})\). 
While previous work such as \citet{schmidt2020nonparametric} and 
\citet{bos2022convergence} establishes oracle inequalities for deep ReLU 
networks in regression and multiclass classification using \(L^\infty\)-metric entropy over the entire covariate space, our analysis differs in a crucial aspect. To the best of our knowledge, this is the first work to derive a risk bound under KL divergence based on uniform empirical entropy $\log(\mathcal{N}_T)$. 
This data-dependent complexity measure avoids uniform control over the 
entire covariate space, leads to sharp risk bounds, and provides a 
principled analytical foundation for future developments in statistical
learning theory.

Since we focus on multiclass classification with neural networks, the function class \(\mathcal{F}\) is inherently nonconvex. As a result, empirical risk minimization may not achieve the global minimum, and the learned predictor \(\widehat{\mathbf{p}}\) is generally suboptimal. 
To address this, we introduce the expected empirical risk gap \(\Delta_T(\mathbf{p}_0, \widehat{\mathbf{p}})\) in~\eqref{eq:empirical_risk_gap}, 
and treat it as a fixed quantity in our analysis. 
Therefore, to derive the ICL risk, we focus on the remaining two components: the approximation error $\inf_{\mathbf{p} \in \mathcal{F}} R(\mathbf{p}_0, \mathbf{p})$ and the uniform empirical entropy $\log(\mathcal{N}_T)$ in the oracle inequality~\eqref{oracle_inequality}. 
We first establish the approximation capacity of transformer networks for functions in the Hölder class \(\mathcal{G}(\beta, Q)\) in the following lemma. 
For simplicity of notation, we henceforth denote the neural network classes $\mathcal{T}(M, H, D, W_1, B_1, S_1, \gamma)$ defined in \eqref{eq:class_of_transformer}, $\mathcal{F}_{\mathrm{id}}(L, W, S)$ defined in \eqref{eq:MLP_class}, and $ \mathcal{F}(M, H, D, W_1, B_1, S_1, \gamma, L, W, S) $ defined in~\eqref{eq:function_class} as $\mathcal{T}$, $\mathcal{F}_{\mathrm{id}}$, and $\mathcal{F}$, respectively.

\begin{lemma}\label{lem:transformer_approximation}
Let \(0 < \varepsilon < 1\), \(\beta > 0\), and \( Q >0 \). For every function \( \mathbf{p}_0 \in \mathcal{G}(\beta, Q) \), there exists a transformer network \(\boldsymbol{\phi} \in \mathcal{T} \) with
\begin{gather*}
M \lesssim \log\left(\frac{Q}{\varepsilon}\right), \quad
H \lesssim \left(\frac{Q}{\varepsilon}\right)^{d/\beta}, \quad
D \lesssim 1, \\
W_1 \lesssim \left(\frac{Q}{\varepsilon}\right)^{d/\beta}, \quad
B_1 \lesssim \|\mathbf{p}_0\|_\infty, \quad
S_1 \lesssim \left(\frac{Q}{\varepsilon}\right)^{d/\beta} \log\left(\frac{Q}{\varepsilon}\right),
\end{gather*}
such that $\| \boldsymbol{\phi} - \mathbf{p}_0 \|_\infty \leq \varepsilon$. 
Furthermore, if \(\beta > 1\), we may choose \(\gamma \lesssim Q\).
\end{lemma}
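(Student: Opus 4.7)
The plan is to construct a transformer network that implements a localized polynomial approximation of $\mathbf{p}_0$ on a uniform grid over $[0,1]^d$, following the toolkit developed in prior expressivity analyses of transformers such as \citet{bai2023transformers} and \citet{gurevych2022rate}. Because the approximation target is a function from $[0,1]^d \to \mathcal{S}^K$, the prompt/matrix structure of $\mathbf{Z}$ is only relevant in that the first few transformer layers need to aggregate the $N+1$ columns into the query column $\mathbf{z}_{N+1}$, which is a standard constant-depth construction using copy-type attention heads.

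First, I would invoke classical Hölder approximation theory: partition $[0,1]^d$ into $N_{\mathrm{cell}} \sim (Q/\varepsilon)^{d/\beta}$ cells of side length $h \sim (\varepsilon/Q)^{1/\beta}$, and on each cell approximate every coordinate $p_k^0$ by its Taylor polynomial of degree $\lfloor \beta \rfloor$. The Hölder condition yields a pointwise error of order $\varepsilon$, so the task reduces to representing this piecewise polynomial with the transformer.

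Second, I would split the transformer's job into (i) identifying the cell containing the flattened input and (ii) evaluating the local polynomial. The hardmax multi-head attention naturally implements step (i): by designing the key/query projections so that the $s$-th head scores the cell indexed by $s$ strictly higher than every other cell on the relevant region of $[0,1]^d$, the hardmax routes the input to the correct local polynomial block. Using one head per cell supplies $H \lesssim (Q/\varepsilon)^{d/\beta}$. For step (ii), the feed-forward ReLU sublayers implement polynomial evaluation via Yarotsky-type squaring/multiplication networks, requiring depth $O(\log(1/\varepsilon))$ and total sparsity $O((Q/\varepsilon)^{d/\beta}\log(Q/\varepsilon))$, which matches the stated bounds on $M$, $W_1$, and $S_1$. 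The bound $D \lesssim 1$ is absorbed into the token dimension $p+K$, which is of constant order, and the output bound $B_1 \lesssim \|\mathbf{p}_0\|_\infty$ is enforced by a final ReLU clipping layer whose cost is absorbed into the constants.

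The principal obstacle is establishing the Lipschitz bound $\gamma \lesssim Q$ when $\beta > 1$. The hardmax operator is inherently discontinuous, so a naive cell-selection construction produces jumps of order $Qh^\beta \sim \varepsilon$ at cell boundaries, yielding an unbounded global slope. To circumvent this, I would replace the hard partition by a continuous partition-of-unity blending implemented through ReLU trapezoid functions in the feed-forward sublayers, so that near each cell boundary the output is a convex combination of two adjacent local polynomials whose values already agree up to $\varepsilon$. Because $\beta$-Hölder functions with $\beta>1$ are globally Lipschitz with constant $\lesssim Q$, the blended interpolant inherits a Lipschitz constant of the same order via a direct derivative estimate on each overlap region (the trapezoid slopes contribute $O(1/h)$ but multiply a $O(Qh^\beta) = O(\varepsilon)$ jump, which is absorbed into $Q$ for $\beta>1$). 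Carrying out this smoothing while simultaneously preserving the width, sparsity, and output budgets is the most delicate piece of the argument, and I expect it to require careful bookkeeping of the additional parameters introduced by the trapezoidal partition of unity.
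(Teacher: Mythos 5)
Your proposal runs into a genuine structural problem with the transformer class $\mathcal{T}$ as defined in \eqref{eq:class_of_transformer}, and it misses the key mechanism that the paper actually relies on.

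First, you propose to use ``one head per cell'' with hardmax routing the input to the correct head. But in the architecture of \eqref{eq:transformer_architecture}, the hardmax $\sigma_H$ is applied columnwise to the $(N+1)\times(N+1)$ score matrix $(\mathbf{W}_{K,s}\mathbf{Z})^\top(\mathbf{W}_{Q,s}\mathbf{Z})$; it selects which \emph{token} each column attends to, not which \emph{head} fires. The heads are all summed via $\sum_{s=1}^{H}$, so there is no mechanism by which ``the $s$-th head scores cell $s$ highest'' translates into routing. A hardmax-over-heads is simply not available in this model class. Relatedly, you omit the feature of this specific attention form that the paper's cited references actually exploit: the Hadamard product $\odot$ between the raw scores and their hardmax means the selected token's value gets multiplied by the winning score, which is exactly how \citet{gurevych2022rate} and \citet{jiao2024convergence} implement scalar \emph{multiplication} inside attention. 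The heads scale as $(Q/\varepsilon)^{d/\beta}$ in the lemma because each head computes one of the many products needed to build the local polynomials, not because each head is a cell selector. Your proposal never uses this ingredient, so it has no way to realize the quadratic and higher-degree terms within the attention budget.

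Second, you assign all of the Yarotsky-type polynomial evaluation to the feed-forward sublayers. But each $F^{(\mathrm{FF})}$ is applied \emph{position-wise}, i.e.\ independently to each column of $\mathbf{Z} \in \mathbb{R}^{D\times(N+1)}$ with $D \lesssim 1$. Since $\mathbf{p}_0$ is a function of the entire flattened prompt in $[0,1]^d$ with $d = (p+K)(N+1)$, and the output head $E_{\mathrm{out}}$ reads only the last column, the approximant must aggregate information across all $N+1$ columns before it reaches the output; a position-wise FF with constant input dimension cannot do this. The aggregation has to happen in the attention, which is precisely where the multiplication trick is deployed in the construction the paper refers to. Your ``copy-type attention heads in the first few layers'' remark gestures at this but does not reconcile with the constant token dimension $D$: you cannot losslessly copy $d$ coordinates into a $D$-dimensional column. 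The trapezoidal partition-of-unity idea for the Lipschitz bound is reasonable in spirit (and your slope-times-jump estimate $Qh^{\beta-1}\lesssim Q$ for $\beta>1$ is essentially right), but it sits downstream of an approximation scheme that does not compile onto this architecture as written.
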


Lemma~\ref{lem:transformer_approximation} indicates that transformer networks can uniformly approximate any function in the Hölder class \(\mathcal{G}(\beta, Q)\) with explicit control over architectural complexity. Similar approximation results have been established in related settings, such as Theorem~8 in \citet{jiao2024convergence}. 
Its proof builds on the observation by \citet{gurevych2022rate} that transformer architectures, similar to deep ReLU networks, are capable of representing multiplication operations and therefore can approximate polynomials. Since polynomials can approximate continuous functions with an arbitrarily accuracy, transformer networks can effectively approximate Hölder-smooth targets. Without loss of generality, we assume the output of the approximating transformer lies in \([0,1]\); otherwise, a projection layer can be applied without increasing the error, as the target functions \(p_k^0\) are already supported on this interval. 

Next, we apply Lemma~\ref{lem:transformer_approximation} to approximate the conditional class probabilities in multiclass classification.  
Specifically, we consider the neural network class \(\mathcal{F} \) defined in~\eqref{eq:function_class} to approximate the true conditional class probabilities \(\mathbf{p}_0 \in \mathcal{G}(\beta, Q)\) in the \( \|\cdot\|_{\infty} \)-norm, while ensuring that the output probabilities are uniformly bounded away from zero. This is achieved by composing a transformer encoder with a MLP, followed by a softmax transformation. The detailed construction of the approximating network in $\mathcal{F}$ is provided in the proof of Theorem~\ref{thm:condition_approximation} in Appendix~\ref{app:proof_condition_approx}.

This transformer-MLP hybrid architecture is also motivated by its practical effectiveness. Transformers provide expressive, context-aware representations through self-attention mechanisms, while the MLP acts as an adaptive task-specific output head. Such modular designs are widely used in contemporary deep learning models and have demonstrated strong generalization performance across diverse applications. 

\begin{theorem}\label{thm:condition_approximation}
Fix \(\beta > 0\) and \( Q >0 \). For any \( \mathbf{p}_0 \in \mathcal{G}(\beta, Q) \) and \( \varepsilon \in (0,1) \), there exists a neural network \( \mathbf{\tilde{q}} = (\tilde q_1,\cdots,\tilde q_K)\in \mathcal{F} \) with
\begin{gather*}
M \lesssim \log\left(\frac{Q}{\varepsilon}\right), \quad
H \lesssim \left(\frac{Q}{\varepsilon}\right)^{d/\beta}, \quad
D \lesssim 1, \\
W_1 \lesssim \left(\frac{Q}{\varepsilon}\right)^{d/\beta}, \quad
B_1 \lesssim \|\mathbf{p}\|_\infty, \quad
S_1 \lesssim \left(\frac{Q}{\varepsilon}\right)^{d/\beta}\log\left(\frac{Q}{\varepsilon}\right), \\
L \lesssim \log\left(\frac{1}{\varepsilon}\right), \quad
W \lesssim K\left(\frac{1}{\varepsilon}\right)^{1/\beta},\quad
S \lesssim K\left(\frac{1}{\varepsilon}\right)^{1/\beta} \log\left(\frac{1}{\varepsilon}\right), 
\end{gather*}
such that 
\begin{align*}
\|{\tilde{q}}_k - {p}^0_k\|_\infty \leq  2\varepsilon(K + 1), \quad
{\tilde{q}}_k(\mathbf{Z}) \geq \frac{\varepsilon}{2K} 
\end{align*}
hold for all $k \in [K]$ and $\mathbf{Z} \in [0,1]^d$.   
\end{theorem}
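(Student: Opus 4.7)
The plan is to construct the approximating network as a composition $\tilde{\mathbf{q}} = \boldsymbol{\Phi} \circ \mathbf{f} \circ \boldsymbol{\phi}$, where $\boldsymbol{\phi}$ is a transformer supplied by Lemma~\ref{lem:transformer_approximation}, $\mathbf{f}$ is an MLP that approximates, coordinate-wise, the scalar map $x \mapsto \log\bigl(\sigma(x) + \varepsilon/K\bigr)$ with $\sigma$ the ReLU, and $\boldsymbol{\Phi}$ is the softmax. The softmax layer automatically produces a probability vector, while the additive bias $\varepsilon/K$ inside the logarithm is what enforces the pointwise lower bound $\tilde{q}_k(\mathbf{Z}) \geq \varepsilon/(2K)$.

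First, I would invoke Lemma~\ref{lem:transformer_approximation} at target accuracy $\varepsilon$ to obtain $\boldsymbol{\phi} \in \mathcal{T}$ with $\|\boldsymbol{\phi} - \mathbf{p}_0\|_\infty \leq \varepsilon$; the architectural bounds on $(M,H,D,W_1,B_1,S_1,\gamma)$ match exactly those asserted in the theorem. Next, note that the scalar function $t \mapsto \log(\sigma(t) + \varepsilon/K)$ is $C^\infty$ on $[-\varepsilon,1+\varepsilon]$ and hence lies in $C^\beta([-\varepsilon,1+\varepsilon], Q')$ for a constant $Q'$ independent of $\mathbf{Z}$. Standard Yarotsky-type ReLU approximation for $\beta$-Hölder scalar functions (as used, e.g., in the proofs of \citet{schmidt2020nonparametric} and \citet{bos2022convergence}) produces, for any target accuracy $\eta$, a sparse ReLU subnetwork of depth $\lesssim \log(1/\eta)$, width $\lesssim (1/\eta)^{1/\beta}$, and sparsity $\lesssim (1/\eta)^{1/\beta}\log(1/\eta)$, with all weights bounded by $1$ in max-norm (after a routine reparametrization). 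Running $K$ such subnetworks in parallel and stacking them yields an MLP $\mathbf{f} \in \mathcal{F}_{\mathrm{id}}$ with the claimed bounds on $(L,W,S)$, satisfying $\|f_k(\boldsymbol{\phi}(\mathbf{Z})) - \log(\sigma(\phi_k(\mathbf{Z})) + \varepsilon/K)\|_\infty \leq \eta$ for a suitably chosen $\eta$ of order $\varepsilon$.

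To verify the two pointwise inequalities, introduce the auxiliary quantities $\hat{\phi}_k := \sigma(\phi_k(\mathbf{Z})) + \varepsilon/K$. Since $p_k^0 \geq 0$ and $|\phi_k - p_k^0| \leq \varepsilon$, we obtain $|\hat{\phi}_k - p_k^0 - \varepsilon/K| \leq \varepsilon$, $\hat{\phi}_k \geq \varepsilon/K$, and $\sum_j \hat{\phi}_j \in [1 + \varepsilon - K\varepsilon,\; 1 + \varepsilon + K\varepsilon]$. If $\mathbf{f}$ computed the target logits exactly, softmax would return $\hat{\phi}_k / \sum_j \hat{\phi}_j$, for which both the lower bound $\tilde{q}_k \geq \varepsilon/(2K)$ and the upper bound $|\tilde{q}_k - p_k^0| \leq 2\varepsilon(K+1)$ follow by direct arithmetic (using $\varepsilon < 1$ and $K \geq 2$). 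The actual MLP introduces further logit perturbations of order $\eta$; since softmax is $1$-Lipschitz in $\ell^1$ with respect to its logits, these perturbations propagate to output errors of the same order and can be absorbed into the stated constant by choosing $\eta = c\varepsilon$ with $c$ a small absolute constant.

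The main obstacle is the joint coordination of three error sources, the transformer's approximation of $\mathbf{p}_0$, the MLP's approximation of the log-ReLU map, and the multiplicative distortion introduced by softmax normalization, so that the final output simultaneously meets the additive bound $2\varepsilon(K+1)$ and the strict positivity bound $\varepsilon/(2K)$. A secondary, more technical difficulty is ensuring that the scalar subnetwork approximating $\log(\sigma(\cdot)+\varepsilon/K)$ can be reparametrized to satisfy the max-norm $\leq 1$ constraint on weights and biases inherent in the definition of $\mathcal{F}_{\mathrm{id}}$; this is handled by standard rescaling techniques, and the resulting depth, width, and sparsity match the bounds in the theorem statement.
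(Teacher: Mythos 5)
Your overall architecture (transformer from Lemma~\ref{lem:transformer_approximation}, then a per-coordinate scalar MLP, then softmax) mirrors the paper's proof, but the choice of target for the MLP contains a genuine gap. You propose to approximate $t \mapsto \log(\sigma(t) + \varepsilon/K)$ uniformly to accuracy $\eta \sim \varepsilon$ by a generic $\beta$-Hölder ReLU approximation theorem, but the Hölder/Lipschitz constant of this map on $[-\varepsilon, 1+\varepsilon]$ is not $O(1)$: the derivative of $\log(t+\varepsilon/K)$ at $t = 0^+$ is $K/\varepsilon$, and more generally the $m$-th derivative scales like $(K/\varepsilon)^m$. (It is also not $C^\infty$, since the ReLU inside creates a kink at $t=0$.) Feeding an $\varepsilon$-dependent radius $Q' \asymp K/\varepsilon$ into the standard $(Q'/\eta)^{1/\beta}$-type size bound yields a network of size $\gtrsim (K/\varepsilon^{2})^{1/\beta}$ per coordinate, which does \emph{not} match the claimed $W \lesssim K(1/\varepsilon)^{1/\beta}$, $S \lesssim K(1/\varepsilon)^{1/\beta}\log(1/\varepsilon)$. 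The paper avoids this entirely by invoking Lemma~\ref{lem:scalar-log} (Theorem 4.1 of \citet{bos2022convergence}), a bespoke construction whose error is measured as $|e^{G(x)}-x|\le\varepsilon$ rather than $|G(x)-\log x|\le\varepsilon$. This exponentially-tilted metric is benign near $x=0$, where $\log$ is singular, and is exactly what the softmax argument needs; it is what lets the MLP size come out as $(1/\varepsilon)^{1/\beta}$ with no hidden $\varepsilon$-dependence.

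Two smaller issues in the verification step also differ from the paper. First, your bound $\tilde q_k \ge \varepsilon/(2K)$ is obtained from $\hat\phi_k \ge \varepsilon/K$ and $\sum_j \hat\phi_j \le 1+(K+1)\varepsilon$; that denominator is $\le 2$ only when $\varepsilon \le 1/(K+1)$, so the stated lower bound is not established for all $\varepsilon\in(0,1)$. The paper sidesteps this by clipping the logits at $\log 2$ from above (Remark~\ref{upp-bound-mlp}), which gives $\sum_j e^{G(\phi_j)} \le 2K$ uniformly and hence $\tilde q_k \ge \varepsilon/(2K)$ for every $\varepsilon\in(0,1)$. Second, the triangle-inequality step giving $\|\tilde q_k - p_k^0\|_\infty \le 2\varepsilon(K+1)$ rests on $\|e^{G(\phi_k)} - p_k^0\|_\infty \le 2\varepsilon$, which is exactly what the $|e^{G(x)}-x|\le\varepsilon$ guarantee of Lemma~\ref{lem:scalar-log} delivers when chained with $\|\boldsymbol\phi - \mathbf p_0\|_\infty\le\varepsilon$; your version would instead need to propagate a logit error $\eta$ through softmax, and while that is salvageable, you would again have to keep track of the $\varepsilon$-dependent cost of achieving $\eta$.
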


Theorem~\ref{thm:condition_approximation} shows that the conditional probability \(\mathbf{p}_0 \in \mathcal{G}(\beta, Q) \) can be uniformly approximated, with arbitrarily small error, by a neural network of \(\mathcal{F}\). 
Its proof is provided in Appendix~\ref{app:proof_condition_approx}. Based on this result, we derive a non-asymptotic upper bound for the approximation error 
\( \inf_{\mathbf{p} \in \mathcal{F}} R(\mathbf{p}_0, \mathbf{p})\).

\begin{theorem}\label{thm:inf_Risk}
Assume $\mathbf p_0\in \mathcal G_\alpha(\beta,Q)$ for some $\alpha\in [0,1]$.  
Let \(\mathbf{p} = (p_1, \ldots, p_K)^{\top}: [0,1]^d \to \mathcal{S}^K\) be an approximation of $\mathbf {p}_0$ such that 
\begin{align*}
\|\mathbf{p} - \mathbf{p}_0\|_{\infty} \leq 2(K+1)\varepsilon, \quad 
\min_{1 \le k \le K} \inf_{\mathbf{Z} \in [0,1]^d} p_k(\mathbf{Z}) \geq \frac{\varepsilon}{2K}.
\end{align*}
Then we have 
\begin{align*}
R(\mathbf{p}_0, \mathbf{p}) \leq C \varepsilon^{1+ \alpha} K^{4+\alpha} \left(2 + \frac{I(\alpha<1)} {1-\alpha} +\log\left(\frac{1}{\varepsilon} \right)\right).
\end{align*}
If we choose 
$ \varepsilon = C^{-1} T^{-\frac{\beta}{(1+\alpha)\beta + d}} K^{-\frac{(3+\alpha)\beta}{(1+\alpha)\beta + d}} $ for some constant $C > 0$, then 
\begin{align*}
\inf_{\mathbf{p} \in \mathcal{F}} R(\mathbf{p}_0, \mathbf{p}) \lesssim K^{4+\alpha} 
T^{-\frac{(1+\alpha)\beta}{(1+\alpha)\beta + d}} \log(KT).
\end{align*}
\end{theorem}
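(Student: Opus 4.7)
The plan is to upper bound the pointwise KL divergence via the one-term $\chi^2$-type inequality $x\log(x/y) \le (x-y) + (x-y)^2/y$, then split each coordinate expectation according to whether $p_k^0(\mathbf Z)$ is small or large relative to the error scale $\varepsilon$, using the $\alpha$-SVB property to control the probability mass of the small regime.

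First I would start from the bound $p_k^0\log(p_k^0/p_k) \le (p_k^0 - p_k) + (p_k - p_k^0)^2/p_k$, which follows from $\log(1+t) \le t$. Summing over $k$ and using $\sum_k(p_k^0 - p_k) = 0$ yields the chi-square majorant
\[
\mathrm{KL}(\mathbf p_0(\mathbf Z)\|\mathbf p(\mathbf Z)) \le \sum_{k=1}^K \frac{(p_k(\mathbf Z) - p_k^0(\mathbf Z))^2}{p_k(\mathbf Z)}.
\]
For each $k$, I split the expectation of this summand on the events $\{p_k^0 < t_0\}$ (small) and $\{p_k^0 \ge t_0\}$ (large), with threshold $t_0 := 4(K+1)\varepsilon$, chosen so that the large regime automatically satisfies $p_k \ge p_k^0/2$.

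On $\{p_k^0 < t_0\}$, the uniform lower bound $p_k \ge \varepsilon/(2K)$ and $|p_k - p_k^0| \le 2(K+1)\varepsilon$ give the per-point bound $(p_k - p_k^0)^2/p_k \le 8K(K+1)^2\varepsilon$. The $\alpha$-SVB property yields $\mathbb P(p_k^0 < t_0) \le C t_0^\alpha$, so after summing over $k$ this contribution is $O(K^{4+\alpha}\varepsilon^{1+\alpha})$; this is where the dominant factor $K^{4+\alpha}$ originates. On $\{p_k^0 \ge t_0\}$, the inequality $2(K+1)\varepsilon \le p_k^0/2$ gives $p_k \ge p_k^0/2$, hence $(p_k - p_k^0)^2/p_k \le 8(K+1)^2\varepsilon^2/p_k^0$. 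To integrate $1/p_k^0$ against the SVB condition, I would apply the layer-cake identity
\[
\mathbb E\!\left[\frac{\mathbb 1\{p_k^0 \ge t_0\}}{p_k^0}\right] = \int_0^{1/t_0}\mathbb P(t_0 \le p_k^0 < 1/s)\,ds \le 1 + \int_1^{1/t_0}\frac{C}{s^\alpha}\,ds,
\]
whose inner integral equals $C(t_0^{\alpha-1}-1)/(1-\alpha)$ for $\alpha<1$ and $C\log(1/t_0)$ for $\alpha = 1$. Summing over $k$ gives a bound of order $K^{2+\alpha}\varepsilon^{1+\alpha}/(1-\alpha)\cdot I(\alpha<1) + K^3\varepsilon^2\log(1/\varepsilon)\cdot I(\alpha=1)$. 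Combining both regimes and absorbing smaller powers of $K$ yields the stated bound $R(\mathbf p_0, \mathbf p) \le C K^{4+\alpha}\varepsilon^{1+\alpha}(2 + I(\alpha<1)/(1-\alpha) + \log(1/\varepsilon))$.

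For the rate assertion, I would invoke Theorem~\ref{thm:condition_approximation} to produce $\tilde{\mathbf q} \in \mathcal F$ satisfying both hypotheses of the first part with the prescribed $\varepsilon$, then substitute $\varepsilon = C^{-1}T^{-\beta/((1+\alpha)\beta+d)} K^{-(3+\alpha)\beta/((1+\alpha)\beta+d)}$ into the first-part estimate. Since $\log(1/\varepsilon) \lesssim \log(KT)$ and the negative power of $K$ inside $\varepsilon^{1+\alpha}$ only improves the bound, the claimed rate $\inf_{\mathbf p \in \mathcal F} R(\mathbf p_0, \mathbf p) \lesssim K^{4+\alpha}T^{-(1+\alpha)\beta/((1+\alpha)\beta+d)}\log(KT)$ follows immediately.

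The main obstacle is the layer-cake integration in the large regime: one must track the transition at $\alpha = 1$ so that the otherwise singular factor $1/(1-\alpha)$ is replaced by $\log(1/\varepsilon)$, and ensure that this logarithmic contribution is consistent with the $K^{4+\alpha}$ leading factor produced by the small regime. Verifying that the $K^{4+\alpha}$ power is truly unavoidable, arising from the reciprocal $1/(\varepsilon/(2K))$ in the uniform lower bound assumption, is the critical accounting step.
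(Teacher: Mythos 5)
Your proposal is correct and follows essentially the same route as the paper's proof: both pass to the $\chi^2$ majorant via $\log x \le x-1$ (the paper's Lemma~\ref{KL-Chi-square}), split each coordinate on whether $p_k^0$ is below or above an $O(K\varepsilon)$ threshold, apply the $\alpha$-SVB tail bound on the small regime, and use a layer-cake integration (the paper's Lemma~\ref{thm:pk}) on the large regime before substituting the prescribed $\varepsilon$. The only cosmetic differences are your threshold $4(K+1)\varepsilon$ versus the paper's $2(K+1)\varepsilon + \varepsilon/(2K)$ and the cleaner lower bound $p_k \ge p_k^0/2$ versus the paper's $p_k \ge p_k^0/(1+4K(K+1))$ on the large regime, which yield the same final order.
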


Theorem~\ref{thm:inf_Risk} establishes a near-optimal convergence rate for in-context classification, recovering the classical nonparametric minimax bound when $\alpha = 1$. The proof leverage $f$-divergence theory, bounding the KL divergence via the $\chi^2$-divergence~\citep{nguyen2009surrogate, cai2022distances}. More details about the proof can be found in Appendix~\ref{app:proof_inf_Risk}.

\begin{remark}\label{remark-lowbound-F}
According to the proof of Theorem~\ref{thm:condition_approximation} and Remark~\ref{lowbound-F} in Appendix~\ref{app:proof_condition_approx}, we can add additional ReLU layers to the MLP architecture of the network in $\mathcal{F}$ to ensure that $ p_k(\mathbf{Z}) \geq \varepsilon/(2K) $ for any network $\mathbf{p} = (p_1, \dots, p_K)^{\top} \in \mathcal{F}$ and $\varepsilon \in (0,1)$.
Consequently, without loss of generality, we require that every network in $\mathcal{F}$ has a uniform, strictly positive lower bound $\varepsilon/(2K)$ across all its coordinates. 
Furthermore, if we choose 
$ \varepsilon = C^{-1} T^{-\frac{\beta}{(1+\alpha)\beta + d}} K^{-\frac{(3+\alpha)\beta}{(1+\alpha)\beta + d}} $, 
it is readily seen that the function class $\mathcal{F}$ satisfies Assumption~\ref{assump:prob-lower-bound}. This allows us to apply the oracle inequality (Theorem~\ref{thm:Oracle}) to derive the expected excess risk for transformer-based classifiers.
\end{remark}

Next, we deal with the uniform empirical entropy $\log(\mathcal{N}_T)$ in the oracle inequality.
Recall that 
\begin{align*}
\log(\mathcal{N}_T)  
= \mathcal{V}_{\infty} \Big(\frac{1}{T}, \log(\mathcal{F}), T \Big) 
= \sup_{\mathcal{Z}_T } \log \mathcal{N} \Big(\frac{1}{T}, \log(\mathcal{F}), \|\cdot\|_{\infty, \mathcal{Z}_T} \Big),
\end{align*}
where \(\mathcal{Z}_T = \{ \mathbf{Z}^{(1)}, \ldots, \mathbf{Z}^{(T)} \} \subset \mathbb{R}^d \). The following lemma provides an upper bound on the uniform empirical entropy $\log(\mathcal{N}_T) $ of the function class \(\log(\mathcal{F})\).

\begin{lemma}\label{Covering_number}
For any \(\delta > 0\), the following inequality holds:
\begin{align*}
\mathcal{V}_\infty\left(\delta, \log(\mathcal{F}), T \right)
\leq 
\mathcal{V}_\infty\left( \tfrac{\delta}{4K L_1},\, \mathcal{T},\, T \right)
+ \mathcal{V}\left( \tfrac{\delta}{4K},\, \mathcal{F}_{\mathrm{id}},\, \| \cdot \|_{\infty} \right),
\end{align*}
where $ \mathcal{V}_\infty\left( \delta,\, \mathcal{T},\, T \right) = \sup_{\mathcal{Z}_T} \log \mathcal{N}(\delta, \mathcal{T}, \|\cdot\|_{\infty, \mathcal{Z}_T}) $, \( \mathcal{V}(\delta, \mathcal{F}_{\rm id}, \|\cdot\|_{\infty}) = \log \mathcal{N}(\delta, \mathcal{F}_{\rm id}, \|\cdot\|_{\infty}) \), and \(L_1\) is the Lipschitz constant specified in Assumption~\ref{assump:lipschitz_mlp}.
\end{lemma}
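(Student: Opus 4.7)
The plan is to build a joint cover of $\log(\mathcal{F})$ by combining a $\delta/(4KL_1)$-uniform empirical cover of $\mathcal{T}$ with a $\delta/(4K)$-sup-norm cover of $\mathcal{F}_{\mathrm{id}}$, and then to control the composition $\log \boldsymbol{\Phi}\circ \mathbf{f}\circ \boldsymbol{\phi}$ via a two-step Lipschitz argument. The key observation is that for the log-softmax one has the closed form $\log \Phi_k(\mathbf{x}) = x_k - \log \sum_j e^{x_j}$, so the Jacobian entries $\partial_i \log \Phi_k(\mathbf{x}) = \delta_{ik} - \Phi_i(\mathbf{x})$ are uniformly bounded by $1$, giving a crude but sufficient Lipschitz estimate
\[
\|\log \boldsymbol{\Phi}(\mathbf{u}) - \log \boldsymbol{\Phi}(\mathbf{v})\|_\infty \;\le\; 2K\,\|\mathbf{u}-\mathbf{v}\|_\infty,
\]
without any lower-bound assumption on the coordinates of $\boldsymbol{\Phi}$. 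This is the main analytic ingredient beyond standard covering-number arithmetic.

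Next, fix $\mathcal{Z}_T=\{\mathbf{Z}^{(1)},\dots,\mathbf{Z}^{(T)}\}$ and any $\mathbf{p}=\boldsymbol{\Phi}\circ \mathbf{f}\circ \boldsymbol{\phi}$, $\mathbf{p}^*=\boldsymbol{\Phi}\circ \mathbf{f}^*\circ \boldsymbol{\phi}^*$ in $\mathcal{F}$. Inserting the intermediate function $\mathbf{f}\circ \boldsymbol{\phi}^*$ and combining the log-softmax Lipschitz bound with Assumption~\ref{assump:lipschitz_mlp} gives
\[
\|\log \mathbf{p}(\mathbf{Z}) - \log \mathbf{p}^*(\mathbf{Z})\|_\infty
\;\le\; 2K\bigl(L_1 \|\boldsymbol{\phi}(\mathbf{Z}) - \boldsymbol{\phi}^*(\mathbf{Z})\|_\infty
+ \|\mathbf{f}-\mathbf{f}^*\|_\infty \bigr)
\]
for every $\mathbf{Z}\in\mathcal{Z}_T$. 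Taking the maximum over $t\in [T]$ controls $\|\log \mathbf{p}-\log \mathbf{p}^*\|_{\infty,\mathcal{Z}_T}$ by the same expression with $\|\boldsymbol{\phi}-\boldsymbol{\phi}^*\|_{\infty,\mathcal{Z}_T}$ in the first term.

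Now choose a minimal $\delta/(4KL_1)$-cover $\mathcal{T}^*$ of $\mathcal{T}$ with respect to $\|\cdot\|_{\infty,\mathcal{Z}_T}$ and a minimal $\delta/(4K)$-cover $\mathcal{F}_{\mathrm{id}}^*$ of $\mathcal{F}_{\mathrm{id}}$ with respect to $\|\cdot\|_\infty$. For each $\boldsymbol{\phi}\in\mathcal{T}$ select a nearest $\boldsymbol{\phi}^*\in \mathcal{T}^*$, and similarly $\mathbf{f}^*\in \mathcal{F}_{\mathrm{id}}^*$ for each $\mathbf{f}$. The display above then yields
\[
\|\log \mathbf{p} - \log \mathbf{p}^*\|_{\infty,\mathcal{Z}_T} \;\le\; 2K\bigl( L_1 \cdot \tfrac{\delta}{4KL_1} + \tfrac{\delta}{4K}\bigr) \;=\; \delta,
\]
so the product family $\{\log(\boldsymbol{\Phi}\circ \mathbf{f}^*\circ \boldsymbol{\phi}^*): \boldsymbol{\phi}^*\in \mathcal{T}^*,\ \mathbf{f}^*\in \mathcal{F}_{\mathrm{id}}^*\}$ is a $\delta$-cover of $\log(\mathcal{F})$ in $\|\cdot\|_{\infty,\mathcal{Z}_T}$. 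Its cardinality is at most $|\mathcal{T}^*|\cdot |\mathcal{F}_{\mathrm{id}}^*|$, so taking logarithms and then the supremum over $\mathcal{Z}_T$ yields the claimed inequality.

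The routine components are the triangle-inequality decomposition and the product-cover combination; both are immediate once the log-softmax Lipschitz constant is established. The main obstacle I anticipate is therefore the clean derivation of Step~1, in particular confirming that the constant depends only on $K$ (and not on, e.g., the magnitudes of the pre-softmax activations or on any lower bound for the softmax outputs). This works because the log-softmax representation $x_k - \mathrm{LSE}(\mathbf{x})$ is globally Lipschitz in $\mathbf{x}$ regardless of the input scale, so Assumption~\ref{assump:prob-lower-bound} is not invoked here; it plays its role only when the covering bound is later inserted into the oracle inequality of Theorem~\ref{thm:Oracle}.
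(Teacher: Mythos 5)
Your proof is correct and follows essentially the same route as the paper's: establish a Lipschitz bound on the log-softmax, insert the intermediate function $\mathbf{f}\circ\boldsymbol{\phi}^*$, apply the $L_1$-Lipschitz property of $\mathcal{F}_{\mathrm{id}}$, and combine via a product cover of $\mathcal{T}$ and $\mathcal{F}_{\mathrm{id}}$. The paper packages this in two auxiliary lemmas using log-softmax Lipschitz constant $K$ (with an internal-cover replacement that costs a factor $2$), whereas you do it in one chained estimate with the looser but harmless constant $2K$ — note your stated justification (Jacobian entries bounded by $1$) actually yields constant $K$, not $2K$, but since any upper bound suffices and both routes land on the identical cover scales $\delta/(4KL_1)$ and $\delta/(4K)$, the resulting inequality is the same.
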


The proof of Lemma~\ref{Covering_number} is provided in Appendix~\ref{app:proof_covering_number}. The argument decomposes \(\log(\mathcal{F}) \) into a composition of the transformer class \(\mathcal{T}\) and the MLP class \(\mathcal{F}_{\mathrm{id}}\), and then applies standard covering number bounds for Lipschitz compositions to control the resulting entropy. 
The next result provides an upper bound on the uniform empirical entropy of the transformer class.

\begin{lemma}\label{Covering_number_of_Transformer}
For any $\delta_1>0$, the uniform empirical entropy of the transformer class \(\mathcal{T}\) satisfies
\begin{align*}
\mathcal{V}_\infty\left(\delta_1, \mathcal{T}, T \right) 
\lesssim K M^2 S_1 \log\left(\max\{M, H, W_1\}\right) \log\left(\frac{T B_1}{\delta_1}\right), 
\end{align*}
where \(\mathcal{T} = \mathcal{T}(M, H, D, W_1, B_1, S_1, \gamma)\) is given in~\eqref{eq:class_of_transformer}.
\end{lemma}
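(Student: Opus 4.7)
The plan is to reduce the covering number in the empirical pseudometric $\|\cdot\|_{\infty,\mathcal{Z}_T}$ to a discretization of the effective parameter space, by combining a parameter-space Lipschitz bound, a sparsity-pattern count, and the standard observation that a $K$-valued bounded function class covered uniformly at scale $\delta$ in parameters yields an empirical $\infty$-cover at scale $\delta$ on any set of $T$ prompts, with the $T$ entering only logarithmically through the union bound.

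The first and most substantial step is a parameter Lipschitz estimate: for any two parameter configurations $\theta,\theta'$ sharing a common sparsity pattern and any input $\mathbf{Z}\in[0,1]^d$,
\[
\|\boldsymbol{\phi}_\theta(\mathbf{Z})-\boldsymbol{\phi}_{\theta'}(\mathbf{Z})\|_\infty \;\le\; C_\star\,\|\theta-\theta'\|_\infty,
\]
with $\log C_\star \lesssim M\log(\max\{M,H,W_1,B_1\})$. I would prove this by induction on the $M$ blocks. The residual connection together with the output constraint $\sup_{\mathbf{Z}}\|\boldsymbol{\phi}(\mathbf{Z})\|\le B_1$ and the input bound $\mathbf{Z}\in[0,1]^d$ keeps every intermediate activation uniformly bounded by a polynomial in $M,B_1$. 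Each self-attention sublayer is trilinear, and each feed-forward sublayer is bilinear, in its weights; hence a single-coordinate perturbation propagates through each layer with at most a polynomial multiplicative factor in $H,W_1,B_1$. The hardmax $\sigma_H$ is locally constant on each argmax cell, so at discretization scales strictly smaller than the minimum argmax gap on $\mathcal{Z}_T$ the attention pattern is frozen, and the map $\theta\mapsto\boldsymbol{\phi}_\theta(\mathbf{Z})$ reduces to a polynomial in $\theta$ for which the layerwise polynomial bilinear bound applies.

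Second, I would count sparsity patterns and discretize. The total number of parameter positions across all $M$ blocks and the output head is at most $P\lesssim M\cdot\max\{H,W_1\}^2$, so the number of patterns with $\|\cdot\|_0\le S_1$ is bounded by $\binom{P}{S_1}\le P^{S_1}$, contributing $S_1\log(\max\{M,H,W_1\})$ to the log covering number. Within each pattern, the $S_1$ active parameters lie in a bounded interval (implied by $B_1$ together with the architectural constraints), so an $\epsilon_\star$-grid with $\epsilon_\star=\delta_1/(K C_\star)$ provides a cover of size $(K C_\star B_1/\delta_1)^{S_1}$, in which, by Step~1, any two transformers satisfy $\|\boldsymbol{\phi}_\theta-\boldsymbol{\phi}_{\theta'}\|_{\infty,\mathcal{Z}_T}\le\delta_1$ simultaneously over all $T$ prompts and all $K$ output coordinates. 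Summing the log counts and substituting $\log C_\star\lesssim M\log(\max\{M,H,W_1\})$ gives
\[
\mathcal{V}_\infty(\delta_1,\mathcal{T},T)\;\lesssim\;S_1\log(\max\{M,H,W_1\})+S_1\log(KC_\star B_1/\delta_1),
\]
and collecting the $M$ from $\log C_\star$ against $S_1\log(\max\{M,H,W_1\})$, together with absorbing the $K$ and the $T$ (the latter entering through the union bound for the max over $T$ prompts) into the second logarithm, yields the claimed bound.

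The main obstacle will be the hardmax treatment in Step~1: because $\sigma_H$ is discontinuous, the naive chain rule in parameters fails at ties. I expect to handle this by restricting the parameter-space grid to be strictly finer than the minimum argmax gap over the $T$ fixed prompts, so that within each grid cell the hardmax pattern is constant and the transformer is a genuine polynomial in its parameters, to which the residual-connection propagation applies cleanly. Ensuring that the Lipschitz constant $C_\star$ is polynomial in $M$ rather than exponential, which is what drives the $M^2$ factor rather than something worse, is the key technical point; it rests on the uniform boundedness of intermediate activations granted by $B_1$ and the residual structure.
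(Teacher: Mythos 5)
Your approach is a parameter-space Lipschitz discretization, whereas the paper's proof is a pseudo-dimension argument via sign-pattern counting of polynomials. The two approaches are fundamentally different, and yours has a gap at its most delicate step.

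The critical flaw is in the hardmax handling. You correctly note that $\sigma_H$ is discontinuous, and you propose to work around this by choosing a parameter grid ``strictly finer than the minimum argmax gap on $\mathcal{Z}_T$.'' But the argmax gap is a function of the parameter vector $\theta$, and over the full (bounded) parameter space it is not uniformly bounded away from zero: for any fixed prompt $\mathbf{Z}^{(t)}$ there are codimension-one surfaces in $\theta$-space on which two attention scores tie and the gap is exactly zero. Any fixed-resolution grid will have cells that straddle these surfaces, and within such a cell the hardmax pattern jumps, so the output $\boldsymbol{\phi}_\theta(\mathbf{Z}^{(t)})$ can change by an amount on the order of $B_1$, not on the order of the cell diameter. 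Consequently the claimed parameter-Lipschitz bound $\|\boldsymbol{\phi}_\theta - \boldsymbol{\phi}_{\theta'}\|_\infty \le C_\star\|\theta-\theta'\|_\infty$ is false for the hardmax architecture of the paper, and the grid does not produce a valid $\delta_1$-cover. (You would also need an explicit bound on parameter magnitudes to define a finite grid in the first place; the class $\mathcal{T}$ constrains only the output norm $B_1$, sparsity, widths, and the input-output Lipschitz constant $\gamma$, not the individual weight entries.)

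The paper's proof avoids this obstacle entirely: rather than trying to make the hardmax pattern constant on every grid cell, it \emph{counts} the number of regions in parameter space on which the attention pattern is fixed, using the polynomial sign-pattern lemma (Theorem~8.3 of Anthony--Bartlett, cited as Lemma~\ref{lemma:gen_3}), layer by layer. On each such region the output is an honest polynomial in $\theta$, so one can again count sign patterns of $g(\mathbf{Z}^{(k)},\theta)-y_k$, bound the pseudo-dimension of the fixed-pattern subclass by $\mathrm{Pdim}(\mathcal{G}) \lesssim S_1 N M^2 \log(\max\{H, W_1\})$, convert this to an empirical covering bound via Theorem~12.2 of Anthony--Bartlett, and finally multiply by the number of sparsity patterns. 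This is where the $M^2$ arises (the polynomial degree grows geometrically with depth, contributing one factor of $M$, and the product over $M$ refinement stages contributes another), and where the product structure $\mathrm{Pdim}\cdot\log(TB_1/\delta_1)$ comes from. Your sketch, even granting the false Lipschitz bound, would yield an additive structure $S_1 M \log(\max\{M,H,W_1\}) + S_1\log(B_1/\delta_1)$ rather than the stated product, and only a single power of $M$, so it would not reproduce the lemma as stated even if the Lipschitz step were repaired.

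If you want to salvage a Lipschitz-style argument, you would need either to replace hardmax with softmax (which is smooth, and for which parameter-Lipschitz covering arguments do appear in the literature) or to incorporate the region-counting idea anyway, at which point you have essentially recovered the pseudo-dimension route.
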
 

The proof of Lemma~\ref{Covering_number_of_Transformer} builds on classical techniques from the neural network literature~\citep{anthony2009neural,bartlett2019nearly}, and it is closely related to recent analyses of transformer architectures~\citep{gurevych2022rate,jiao2024convergence}. 
A complete proof is provided in Appendix~\ref{app:proof_transformer_covering}.

\begin{lemma}\label{Covering_number_of_MLP}
For any $\delta_2>0$, the metric entropy of the function class \( \mathcal{F}_{\mathrm{id}} \) satisfies the following upper bound
\begin{align*}
\mathcal{V}\left(\delta_2,\, \mathcal{F}_{\mathrm{id}},\, \|\cdot\|_{\infty}\right) 
\leq 2 SL \log\left( \frac{L(W+1)}{3\delta_2} \right),
\end{align*}
where $ \mathcal{F}_{\mathrm{id}} =  \mathcal{F}_{\mathrm{id}}(L, W, S)$ is defined in (\ref{eq:MLP_class}).
\end{lemma}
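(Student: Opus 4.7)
The plan is to bound the $\delta_2$-covering number of $\mathcal{F}_{\mathrm{id}}=\mathcal{F}_{\mathrm{id}}(L,W,S)$ via a parameter-space discretization argument, standard for sparse ReLU networks (cf.\ \citealp{schmidt2020nonparametric, bos2022convergence}). Every $\mathbf{f}\in\mathcal{F}_{\mathrm{id}}$ is encoded by the tuple $\theta=(\mathbf{W}_0,\boldsymbol{\nu}_1,\mathbf{W}_1,\dots,\boldsymbol{\nu}_L,\mathbf{W}_L)$, all entries of which lie in $[-1,1]$ and at most $S$ of which are nonzero. The strategy is: (i) show that small perturbations of $\theta$ induce only small $\|\cdot\|_\infty$-perturbations of the network output; (ii) replace each entry by its nearest point on a uniform grid of mesh $\eta$; and (iii) count the number of resulting sparse, quantized parameter vectors.

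\textbf{Step 1 (parameter-to-output Lipschitz control).} Since $\|\mathbf{W}_j\|_\infty\vee\|\boldsymbol{\nu}_j\|_\infty\le 1$, the biased ReLU $\sigma_{\boldsymbol{\nu}}$ is $1$-Lipschitz, and $\|\mathbf{W}\mathbf{x}\|_\infty\le W\|\mathbf{x}\|_\infty$ for an entrywise-bounded $\mathbf{W}$ of width $W$, a straightforward layerwise induction shows that the $j$-th hidden activation is bounded by $(W+1)^j$ on the (implicit) bounded input domain. Propagating a perturbation $\|\theta-\theta'\|_\infty\le\eta$ forward through the $L+1$ affine steps then yields
\begin{align*}
\|\mathbf{f}_\theta-\mathbf{f}_{\theta'}\|_\infty\;\le\;(L+1)(W+1)^L\,\eta.
\end{align*}

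\textbf{Step 2 (discretization and combinatorial count).} Choose $\eta:=3\delta_2/[(L+1)(W+1)^L]$ so that the right-hand side above is at most $\delta_2$. The total number of parameter slots across all layers is at most $P\le L(W+1)^2$. Selecting the at most $S$ nonzero positions contributes $\binom{P}{S}\le P^S$ choices, and for each such position the $[-1,1]$-grid of mesh $\eta$ contributes at most $\lceil 2/\eta\rceil+1\le 3/\eta$ admissible values. Therefore
\begin{align*}
\mathcal{N}(\delta_2,\mathcal{F}_{\mathrm{id}},\|\cdot\|_\infty)\;\le\;\binom{P}{S}\Bigl(\tfrac{3}{\eta}\Bigr)^{\!S}\;\le\;\bigl[P\cdot 3/\eta\bigr]^{S}.
\end{align*}

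\textbf{Step 3 (logarithm and simplification).} Substituting the values of $\eta$ and $P$, taking logarithms, and collecting the factors $L$, $(W+1)^2$, $L+1$ with $(W+1)^L$ into a single $L$-fold product gives
\begin{align*}
\log\mathcal{N}(\delta_2,\mathcal{F}_{\mathrm{id}},\|\cdot\|_\infty)\;\le\;S\log\!\Bigl(\tfrac{L(L+1)(W+1)^{L+2}}{\delta_2}\Bigr)\;\le\;2SL\log\!\Bigl(\tfrac{L(W+1)}{3\delta_2}\Bigr),
\end{align*}
where the last inequality absorbs the polynomial prefactors using $L\ge 1$, $W\ge 1$, and $\delta_2\le 1$ (the regime in which the statement is nontrivial), yielding the claimed bound. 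The main obstacle is Step~1: a naive operator-norm propagation would give a $(LW)^L$ or worse blow-up. The $(W+1)^L$ bound is the tight growth permitted by the entrywise weight constraint $\|\mathbf{W}_j\|_\infty\le 1$, and it is precisely this factor that, after taking logarithms, produces the $L$ multiplier next to $S$ in the final entropy estimate; once Step~1 is in hand, Steps~2--3 are routine counting and algebra.
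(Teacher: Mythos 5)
Your proof follows the same route as the paper's: control the parameter-to-output Lipschitz constant (both arrive at roughly $L(W+1)^L$ via the entrywise bound $\|\mathbf{W}_j\|_\infty\le 1$ and the layerwise recursion for the hidden activations), discretize the parameter space on a uniform grid, and count sparsity patterns times grid points; the final absorption into the $2SL\log(\cdot)$ form then holds in the small-$\delta_2$ regime in both cases, and your choice $\eta=3\delta_2/[(L+1)(W+1)^L]$ is off by the same sort of benign constant factor that the paper's own grid bound carries. The one noteworthy difference is your count $P\le L(W+1)^2$ of total parameter slots, which is the correct order of magnitude; the paper instead writes $(W+1)^L$, a figure that actually underestimates $P$ when $L$ is small and $W$ is large, but whose convenient form $\binom{(W+1)^L}{S}\le (W+1)^{LS}$ lets the algebra collapse into $\big(L(W+1)^{2L}/(3\delta_2)\big)^S$ in one step. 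Both counts are ultimately subsumed by the generous $2SL$ prefactor, so the two derivations reach the same bound, with your bookkeeping being the more careful of the two.
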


Lemma~\ref{Covering_number_of_MLP} gives an upper bound on the metric entropy of the class \(\mathcal{F}_{\mathrm{id}}(L, W, S)\), which consists of ReLU networks with depth \(L\), width \(W\), and size \(S\). 
Its proof is provided in Appendix~\ref{app:proof_mlp_covering}. 
Combining the oracle inequality (Theorem~\ref{thm:Oracle}), Theorem~\ref{thm:inf_Risk}, and Lemmas~\ref{Covering_number}, \ref{Covering_number_of_Transformer}, and \ref{Covering_number_of_MLP}, we can derive the ICL expected excess risk \( R_B(\mathbf{p}_0, \hat{\mathbf{p}}) \).

\begin{theorem}\label{the main risk bound} 
Assume $\mathbf{p}_0 \in \mathcal{G}_\alpha(\beta, Q)$ for some $\alpha \in [0, 1]$. Let \(\hat{\mathbf{p}}\) be an estimator of \(\mathbf{p}_0\), taking values in the function class \(\mathcal{F}(M, H, D, W_1, B_1, S_1, \gamma, L, W, S)\), where the parameters of this class scale with the precision parameter \(\varepsilon > 0\) as follows:
\begin{gather*}
M \lesssim \log\left(\frac{Q}{\varepsilon}\right),\  H \lesssim \left(\frac{Q}{\varepsilon}\right)^{d/\beta},\  D \lesssim 1, \\
W_1 \lesssim \left(\frac{Q}{\varepsilon}\right)^{d/\beta},\  B_1 \lesssim \|\mathbf{p}_0\|_\infty,\  S_1 \lesssim \left(\frac{Q}{\varepsilon}\right)^{d/\beta} \log\left(\frac{Q}{\varepsilon}\right), \\
L \lesssim \log\left(\frac{1}{\varepsilon}\right),\  W \lesssim K \left(\frac{1}{\varepsilon}\right)^{1/\beta},\  S \lesssim K \left(\frac{1}{\varepsilon}\right)^{1/\beta} \log\left(\frac{1}{\varepsilon}\right).
\end{gather*}
Let \(\Delta_T(\mathbf{p}_0, \widehat{\mathbf{p}})\) denote the expected empirical risk gap defined in \eqref{eq:empirical_risk_gap}. If 
\begin{align*}
\Delta_T(\mathbf{p}_0, \widehat{\mathbf{p}}) \lesssim T^{-\frac{(1+\alpha)\beta}{(1+\alpha)\beta + d}} K^{4+\alpha},
\end{align*}
and we choose 
$
\varepsilon = C^{-1} T^{-\frac{\beta}{(1 + \alpha)\beta + d}} K^{-\frac{(3 + \alpha)\beta}{(1 + \alpha)\beta + d}} 
$  
for some constant \(C > 0\), then the ICL risk is bounded by
\begin{align*}
R_B\left(\mathbf{p}_0, \hat{\mathbf{p}}\right) \lesssim B^2 K^{4+\alpha} T^{-\frac{(1+\alpha)\beta}{(1+\alpha)\beta + d}}\log^7\left(KT\right).
\end{align*}
In particular, for the special case where \( \alpha = 1 \),  the ICL risk achieves the bound
\begin{align*}
R_B\left(\mathbf{p}_0, \hat{\mathbf{p}}\right) \lesssim B^2 K^{5} T^{-\frac{2\beta}{2\beta + d}} \log^7\left(KT\right).
\end{align*}
\end{theorem}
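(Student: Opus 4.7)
The plan is to read Theorem~\ref{the main risk bound} as an exercise in balancing approximation, optimization, and generalization by plugging the network parameter scalings into the oracle inequality of Theorem~\ref{thm:Oracle} and tuning $\varepsilon$ so that the three contributions are of the same order. Before applying Theorem~\ref{thm:Oracle}, I would verify Assumption~\ref{assump:prob-lower-bound}: by Remark~\ref{remark-lowbound-F}, one may insert extra ReLU layers so that every coordinate of each $\mathbf p\in\mathcal F$ is bounded below by $\varepsilon/(2K)$, and the prescribed $\varepsilon \asymp T^{-\beta/((1+\alpha)\beta+d)}K^{-(3+\alpha)\beta/((1+\alpha)\beta+d)}$ is of the form $C_1/T^{C_2}$ with $0<C_2<1$ as required. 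Theorem~\ref{thm:Oracle} then gives
\[
R_B(\mathbf p_0,\widehat{\mathbf p}) \;\lesssim\; \inf_{\mathbf p\in\mathcal F}R(\mathbf p_0,\mathbf p) \;+\; \Delta_T(\mathbf p_0,\widehat{\mathbf p}) \;+\; \frac{B^2\log(\mathcal N_T)\log^2 T}{T}.
\]

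Next I would dispose of the first two terms. Theorem~\ref{thm:inf_Risk} applied with the same $\varepsilon$ yields
\[
\inf_{\mathbf p\in\mathcal F} R(\mathbf p_0,\mathbf p) \;\lesssim\; K^{4+\alpha}\,T^{-(1+\alpha)\beta/((1+\alpha)\beta+d)}\log(KT),
\]
and the optimization gap $\Delta_T(\mathbf p_0,\widehat{\mathbf p})$ is absorbed by the assumed bound, which is already of the target order $K^{4+\alpha}T^{-(1+\alpha)\beta/((1+\alpha)\beta+d)}$.

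The core work, and what I expect to be the fiddliest step, is bounding the entropy term $\log(\mathcal N_T)=\mathcal V_\infty(1/T,\log(\mathcal F),T)$ after plugging in all parameter scalings. I would chain Lemmas~\ref{Covering_number}, \ref{Covering_number_of_Transformer}, and \ref{Covering_number_of_MLP} at resolution $\delta=1/T$: the transformer part contributes
\[
\mathcal V_\infty\bigl(\tfrac{1}{4KL_1 T},\mathcal T,T\bigr) \;\lesssim\; K\,M^2 S_1\,\log(\max\{M,H,W_1\})\log(TKL_1 B_1),
\]
and inserting $M\lesssim\log(Q/\varepsilon)$, $S_1\lesssim(Q/\varepsilon)^{d/\beta}\log(Q/\varepsilon)$, $\max\{H,W_1\}\lesssim(Q/\varepsilon)^{d/\beta}$, and $B_1\lesssim\|\mathbf p_0\|_\infty$ gives a bound of order $K(1/\varepsilon)^{d/\beta}\log^5(KT)$. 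The MLP part contributes $\mathcal V(1/(4KT),\mathcal F_{\rm id},\|\cdot\|_\infty)\lesssim K(1/\varepsilon)^{1/\beta}\log^3(KT)$, which is lower order since $d/\beta \ge 1/\beta$. Thus $\log(\mathcal N_T)\lesssim K(1/\varepsilon)^{d/\beta}\log^5(KT)$.

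The final step is algebraic: substituting the chosen $\varepsilon$ one has $(1/\varepsilon)^{d/\beta}\asymp T^{d/((1+\alpha)\beta+d)}K^{(3+\alpha)d/((1+\alpha)\beta+d)}$, so
\[
\frac{B^2\log(\mathcal N_T)\log^2 T}{T} \;\lesssim\; B^2\,K^{1+(3+\alpha)d/((1+\alpha)\beta+d)}\,T^{-(1+\alpha)\beta/((1+\alpha)\beta+d)}\,\log^7(KT).
\]
Since $(3+\alpha)d/((1+\alpha)\beta+d)\le 3+\alpha$, the $K$ exponent is at most $4+\alpha$. Adding the approximation and optimization contributions (both already $\lesssim K^{4+\alpha}T^{-(1+\alpha)\beta/((1+\alpha)\beta+d)}$ up to $\log$ factors) and noting $B\ge 2$, the $B^2\log^7(KT)$ generalization term dominates, giving the stated bound; the corollary at $\alpha=1$ is immediate by substitution. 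The only genuinely delicate point is keeping precise track of the logarithmic powers from each lemma so that the $\log^2 T$ prefactor in the oracle inequality combines with the $\log^5(KT)$ entropy bound to produce exactly $\log^7(KT)$.
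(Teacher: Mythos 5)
Your proposal is correct and follows essentially the same route as the paper: apply the oracle inequality of Theorem~\ref{thm:Oracle}, use Theorem~\ref{thm:inf_Risk} for the approximation term, invoke the hypothesis on $\Delta_T$, and control $\log(\mathcal N_T)$ by chaining Lemmas~\ref{Covering_number}, \ref{Covering_number_of_Transformer}, and \ref{Covering_number_of_MLP} at scale $\delta=1/T$, then balance $\varepsilon$. Your bookkeeping of the logarithmic powers ($\log^2 T$ from the oracle inequality times $\log^5(KT)$ from the transformer entropy giving $\log^7(KT)$) and the observation that $(3+\alpha)d/((1+\alpha)\beta+d)\le 3+\alpha$ caps the $K$-exponent at $4+\alpha$ are both exactly what the paper's terse proof sketch relies on.
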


Theorem~\ref{the main risk bound} establishes that the upper bound of the ICL risk attains the minimax optimal  rate $ T^{-2\beta/(2\beta + d)}$ (up to logarithmic factors) when $\alpha=1$. This result highlights the importance of pretraining task diversity, as observed by \citet{raventos2023pretraining}. Our setup corresponds to the \emph{task-scaling} regime formalized by \citet{abedsoltan2024context}, where the  risk decreases as the number of pretraining tasks \(T\) increases, while the context length $N$ remains fixed. In our multiclass setting, the achievable risk is fundamentally determined by the number of pretraining tasks \(T\).  
Since large language models are trained on virtually unlimited tasks, our bound provides a theoretical explanation for their strong in-context learning performance, even with only a small number of in-context examples at inference time. \\

\noindent {\bf Proof of Theorem~\ref{the main risk bound}.} The proof is based on the oracle inequality \eqref{oracle_inequality}, which decomposes the ICL risk $R_B\left(\mathbf{p}_0, \hat{\mathbf{p}}\right)$ into three components: the expected empirical risk gap
\( \Delta_T(\mathbf{p}_0, \widehat{\mathbf{p}}) \), the approximation error \(\inf_{\mathbf{p}\in \mathcal{F}} R(\mathbf{p}_0, \mathbf{p})\), controlled by Theorem~\ref{thm:inf_Risk}, and the complexity term \( (B^2\log(\mathcal{N}_T) \log^2 T)/T \), bounded using Lemmas~\ref{Covering_number}, \ref{Covering_number_of_Transformer}, and \ref{Covering_number_of_MLP}, with \(\delta = 1/T\). The precision parameter \(\varepsilon = C^{-1} T^{-\frac{\beta}{(1+\alpha)\beta + d}} K^{-\frac{(3+\alpha)\beta}{(1+\alpha)\beta + d}}\) is chosen to balance the approximation error and the complexity term, reflecting the trade-off between network expressiveness and statistical complexity. These bounds together yields our result. 
\hfill $\blacksquare$

\section{Lower Bounds}
To establish the optimality of the convergence rate in Theorem~\ref{the main risk bound}, we derive a matching minimax lower bound for the ICL risk. Our argument leverages the classical minimax theory for nonparametric estimation under the squared Hellinger distance and translates these results to the KL divergence setting.
The squared Hellinger distance between two probability measures $P$ and $Q$ is defined as \( H^2(P, Q) = \frac{1}{2} \int ( \sqrt{dP} - \sqrt{dQ} )^2 \), which is a fundamental tool for deriving statistical lower bounds due to its favorable metric properties \citep{van1996weak, geer2000empirical}. The rate \( T^{-2\beta/(2\beta + d)} \) is known to be minimax optimal for nonparametric estimation under the squared Hellinger loss \citep[Chapter~2]{tsybakov2008introduction}. The key to translating this result is the classical inequality \( 2H^2(P, Q) \leq \text{KL}(P \| Q) \), 
see Lemma~4 in \cite{haussler1997mutual} and  (7.5) in \cite{birge1998minimum}.  Conversely, under certain integrability assumptions, the KL divergence can be bounded by the Hellinger distance up to logarithmic factors \citep[Theorem~5]{wong1995probability}. This two-sided relationship enables the direct translation of minimax lower bounds from the Hellinger distance to the KL divergence. The following lemma formalizes this connection for the truncated KL divergence. 

\begin{lemma}\label{lem:hellinger-klb-equivalence}
Let $P$ and $Q$ denote two probability measures defined on the same measurable space. Then for any \( B \geq 2 \), 
\begin{align*}
H^2(P, Q) \leq \tfrac{1}{2} \mathrm{KL}_2(P \| Q) \leq \tfrac{1}{2} \mathrm{KL}_B(P \| Q) \leq 5 e^{B/2} H^2(P, Q).
\end{align*}
\end{lemma}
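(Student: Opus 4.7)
The plan is to reduce the entire lemma to two pointwise inequalities on the likelihood ratio. Writing $r = dP/dQ$ (assuming $P \ll Q$; otherwise every KL quantity is $\infty$ and the bounds are vacuous), the identity $\int (r-1)\, dQ = 0$ lets me rewrite $\mathrm{KL}_B(P\|Q) = \int \phi_B(r)\, dQ$ with $\phi_B(r) := (B \wedge \log r)\, r - r + 1$, while $2H^2(P,Q) = \int (\sqrt{r}-1)^2\, dQ$. The middle inequality $\mathrm{KL}_2 \leq \mathrm{KL}_B$ is immediate from the pointwise monotonicity $\phi_2(r) \leq \phi_B(r)$ for $B \geq 2$, so the whole lemma reduces to the pointwise sandwich
\[
(\sqrt{r}-1)^2 \;\leq\; \phi_B(r) \;\leq\; 5\, e^{B/2} (\sqrt{r}-1)^2
\qquad (r \geq 0,\; B \geq 2),
\]
followed by integration against $dQ$.

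For the lower pointwise bound I split at $r = e^B$. When $r \leq e^B$, the claim reduces to the classical Hellinger--KL inequality $r\log r - r + 1 \geq (\sqrt{r}-1)^2$, verified by the standard one-variable calculus fact that $f(r) := r\log r - 2r + 2\sqrt{r}$ is nonnegative on $[0,\infty)$ (the function vanishes at $r=0,1$, and its local extrema are controlled via the sign of $f''$). When $r > e^B$, the inequality becomes $-2\sqrt{r} \leq (B-2)r$, trivial for $B \geq 2$. For the upper pointwise bound I rely on the algebraic identity $r\log r - r + 1 \leq (2\sqrt{r}+1)(\sqrt{r}-1)^2$, obtained by combining $\log r = 2\log\sqrt{r} \leq 2(\sqrt{r}-1)$ (a consequence of $\log x \leq x-1$) with the direct expansion $(2\sqrt{r}+1)(\sqrt{r}-1)^2 = 2r\sqrt{r} - 3r + 1$. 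On $r \leq e^B$ this gives $\phi_B(r) \leq (2e^{B/2}+1)(\sqrt{r}-1)^2 \leq 3\, e^{B/2}(\sqrt{r}-1)^2$. On $r > e^B \geq e^2$, the estimate $\sqrt{r} \geq e \geq 2$ gives $(\sqrt{r}-1)^2 \geq r/4$, and combining with $B \leq e^{B/2}$ (valid for all $B \geq 0$) yields $\phi_B(r) \leq Br \leq 4B(\sqrt{r}-1)^2 \leq 4\, e^{B/2}(\sqrt{r}-1)^2$. Taking the maximum produces $\phi_B(r) \leq 5\, e^{B/2}(\sqrt{r}-1)^2$, and integration then gives $\mathrm{KL}_B \leq 10\, e^{B/2} H^2$, i.e., $\tfrac{1}{2}\mathrm{KL}_B \leq 5\, e^{B/2} H^2$.

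The proof is essentially routine once the two pointwise inequalities are in hand; the only nontrivial moment is spotting the closed-form identity $(2\sqrt{r}+1)(\sqrt{r}-1)^2 = 2r\sqrt{r}-3r+1$, which converts $\log r \leq 2(\sqrt{r}-1)$ into a sharp Bregman-type bound matching $(\sqrt{r}-1)^2$ up to the slowly growing factor $2\sqrt{r}+1$. The splitting at $r = e^B$ is forced by the truncation, and the hypothesis $B \geq 2$ is exactly what guarantees both $\sqrt{r} \geq 2$ on the upper region (for the crude bound $(\sqrt{r}-1)^2 \geq r/4$) and $(B-2)r \geq 0$ in the lower pointwise bound.
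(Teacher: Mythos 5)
Your proof takes a genuinely different and cleaner route than the paper's. The paper works with densities $p,q$ against the dominating measure $\mu = \tfrac{1}{2}(P+Q)$, bounds $x\log x$ by a Taylor-type inequality $x\log x \le 2(x-\sqrt{x}) + (\sqrt{x}\vee 1)(\sqrt{x}-1)^2$, and then splits the upper bound into two cases depending on the sign of $\int_{p^a/q\le e^B}(p^a - \sqrt{p^a q})\,d\mu$. Your centering trick $\mathrm{KL}_B(P\|Q) = \int \phi_B(r)\,dQ$ with $\phi_B(r) = (B\wedge\log r)r - r + 1$ converts the whole claim into a single pointwise sandwich $(\sqrt{r}-1)^2 \le \phi_B(r) \le 5e^{B/2}(\sqrt{r}-1)^2$, which eliminates the paper's integral-level case analysis entirely. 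The two lower bounds are algebraically equivalent (both reduce to $\log x \le x-1$), but your upper bound argument is more elementary, and in fact proves the sharper constant $4e^{B/2}$ in place of $5e^{B/2}$.

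There is, however, one genuine gap: the dismissal of the non-absolutely-continuous case. You assert that if $P \not\ll Q$ then ``every KL quantity is $\infty$ and the bounds are vacuous,'' but this is false for the \emph{truncated} divergence. On the singular part $dP/dQ = \infty$, yet $B \wedge \log\infty = B$, so $\mathrm{KL}_B(P\|Q) \le B < \infty$ always. The lemma thus makes a nontrivial claim even when $P \not\ll Q$ (e.g., if $P \perp Q$ then $H^2 = 1$ and $\mathrm{KL}_B = B$, and the inequality $2 \le B \le 10e^{B/2}$ is exactly what must hold). Your representation $\mathrm{KL}_B = \int\phi_B(r)\,dQ$ relies on $\int r\,dQ = 1$, which fails when $P$ has a singular part, so your argument does not cover this case. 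The fix is the Lebesgue decomposition $P = P^a + P^s$ the paper uses: write everything with densities $p = p^a + p^s$ and $q$ against a dominating measure $\mu$, note that $p^s q = 0$ $\mu$-a.e., and observe that your pointwise inequalities still apply on the absolutely continuous region while the singular region contributes exactly $B\,P^s(\Omega)$ to $\mathrm{KL}_B$ and at least $P^s(\Omega)$ to $2H^2$. You should either add this step or restrict the lemma statement to $P \ll Q$.
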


The proof of Lemma~\ref{lem:hellinger-klb-equivalence} is provided in Appendix~\ref{app:hellinger-klb}. Similar results have been established in \citet{wong1995probability} and \citet{bos2022convergence}. 
This inequality establishes a two-sided relationship between the squared Hellinger distance and the truncated Kullback-Leibler divergence. It enables the transfer of approximation guarantees from \( H^2(P, Q) \) to \( \mathrm{KL}_B(P \| Q) \), up to some constants depending on the truncation level \( B \). Consequently, the ICL risk based on \( \mathrm{KL}_B(P \| Q) \) inherits the same convergence rate as that under \( H^2(P, Q) \), up to a constant factor. Given that the rate \( T^{-2\beta/(2\beta + d)} \) is known to be minimax optimal for nonparametric estimation under the squared Hellinger loss \citep[Chapter~2]{tsybakov2008introduction}, Theorem~\ref{the main risk bound} and Lemma~\ref{lem:hellinger-klb-equivalence} collectively imply that the ICL risk \( R_B(\mathbf{p}_0, \hat{\mathbf{p}}) \) based on the truncated KL divergence achieves the minimax optimal rate (up to logarithmic factors) when $\alpha = 1$. A specific lower bound of the ICL risk is summarized in the following theorem.

\begin{theorem}\label{thm:lower_bound_rate}
Consider the same setting as in Theorem~\ref{the main risk bound}. Then the minimax lower bound of the ICL risk with $\alpha=1$ is given by
\[
\inf_{\hat{\mathbf{p}}} \sup_{\mathbf{p}_0 \in \mathcal{G}_1(\beta, Q)} R_B(\mathbf{p}_0, \hat{\mathbf{p}}) \gtrsim T^{-\frac{2\beta}{2\beta + d}},
\]
where $\mathcal{G}_{\alpha}(\beta, Q) = \{ \mathbf{p}_0: \mathbf{p}_0 \in \mathcal{G}(\beta,    Q) \ {\rm and} \ \mathbf{p}_0 \ is \ \alpha-SVB \}$. 
\end{theorem}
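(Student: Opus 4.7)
The strategy is to reduce the minimax lower bound on the truncated KL risk to the classical nonparametric lower bound on the squared Hellinger risk, which is available off the shelf. The left inequality in Lemma~\ref{lem:hellinger-klb-equivalence} yields $H^2(P,Q)\leq \tfrac{1}{2}\mathrm{KL}_B(P\|Q)$ whenever $B\geq 2$; applying this pointwise in $\mathbf{Z}$ and taking expectation over $(\mathcal{D}_T,\mathbf{Z})$ gives
\[
R_B(\mathbf{p}_0,\hat{\mathbf{p}}) \;\geq\; 2\,\mathbb{E}_{\mathcal{D}_T,\mathbf{Z}}\!\left[H^2\!\left(\mathbf{p}_0(\mathbf{Z}),\hat{\mathbf{p}}(\mathbf{Z})\right)\right].
\]
It therefore suffices to prove the matching Hellinger lower bound over $\mathcal{G}_1(\beta,Q)$. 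Since the dataset is $T$ i.i.d.\ pairs $(\mathbf{Z}^{(t)},\mathbf{y}_{N+1}^{(t)})$ with $\mathbf{Z}\in[0,1]^d$, this is the standard nonparametric multiclass classification problem in $d$-dimensional covariate space; the context length $N$ enters only via the ambient dimension $d=(p+K)(N+1)$ and adds no information-theoretic structure of its own.

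For the Hellinger lower bound I would implement the textbook Assouad construction. Fix a grid step $h\asymp T^{-1/(2\beta+d)}$, partition $[0,1]^d$ into $M\asymp h^{-d}$ congruent cubes $\{C_j\}_{j=1}^{M}$ with centers $z_j$, and pick a $C^{\infty}$ bump $\psi$ supported in the unit cube. Set $\psi_j(\mathbf{Z}):=h^{\beta}\psi\!\left((\mathbf{Z}-z_j)/h\right)$, so that $\psi_j$ is supported in $C_j$ with $\|\psi_j\|_\infty\lesssim h^{\beta}$ and the $\beta$-Hölder norm of any linear combination $\sum_j \omega_j\psi_j$ is uniformly bounded. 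For each $\omega\in\{0,1\}^{M}$ define a candidate conditional probability by
\[
p_{\omega,1}(\mathbf{Z})=\tfrac{1}{K}+c\!\sum_{j=1}^{M}\omega_j\psi_j(\mathbf{Z}),\qquad p_{\omega,k}(\mathbf{Z})=\tfrac{1}{K}-\tfrac{c}{K-1}\!\sum_{j=1}^{M}\omega_j\psi_j(\mathbf{Z}),\ k\geq 2,
\]
for a small absolute constant $c>0$, and take the marginal of $\mathbf{Z}$ uniform on $[0,1]^d$ across all hypotheses. For $c$ small enough every coordinate stays bounded uniformly below by a positive constant, so each $\mathbf{p}_\omega$ lies in $\mathcal{G}(\beta,Q)$ and is $1$-SVB, placing the whole family inside $\mathcal{G}_1(\beta,Q)$.

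The rest is a routine Assouad computation. A Taylor expansion around the uniform reference gives the per-sample bound $\mathrm{KL}(\mathbb{P}_\omega\|\mathbb{P}_{\omega'})\lesssim h^{2\beta+d}\,d_H(\omega,\omega')$, while the pointwise squared Hellinger gap between $\mathbf{p}_\omega$ and $\mathbf{p}_{\omega'}$ on each disagreeing cube is of order $h^{2\beta}$, contributing Hellinger mass $\asymp h^{2\beta+d}$ per Hamming coordinate. Tensorizing over $T$ independent tasks and invoking Assouad's lemma \citep[cf.\ Theorem~2.12]{tsybakov2008introduction} then gives
\[
\inf_{\hat{\mathbf{p}}}\sup_{\omega}\mathbb{E}\bigl[H^2(\mathbf{p}_\omega,\hat{\mathbf{p}})\bigr]\;\gtrsim\;M\cdot h^{2\beta+d}\cdot\!\bigl(1-C_1\,T h^{2\beta+d}\bigr)\;\asymp\;h^{2\beta}.
\]
With $h\asymp T^{-1/(2\beta+d)}$ we have $Th^{2\beta+d}=O(1)$ and $h^{2\beta}\asymp T^{-2\beta/(2\beta+d)}$, which combined with the KL-to-Hellinger reduction closes the argument.

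The main obstacle is the joint enforcement of three constraints on the hypothesis family: each $\mathbf{p}_\omega$ must sit in the $\beta$-Hölder ball of radius $Q$, its coordinates must lie in $[0,1]$ and sum to one, and every coordinate must be bounded uniformly below by a positive constant so that the $1$-SVB property holds with a constant independent of $\omega$. The amplitude $c$ must be small enough to preserve the last two properties, yet remain a universal positive constant so that the $h^{\beta}$ separation on each cube survives and the Assouad lower bound is not deflated to a faster rate. Beyond this delicate tuning the argument is standard, and the ICL formulation itself contributes no genuine difficulty because the prompt $\mathbf{Z}$ behaves as a generic $d$-dimensional covariate throughout.
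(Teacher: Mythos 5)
Your argument matches the paper's: both reduce the truncated-KL minimax lower bound to a squared-Hellinger lower bound via the left-hand inequality $H^2 \le \tfrac12\mathrm{KL}_B$ of Lemma~\ref{lem:hellinger-klb-equivalence}, then invoke the classical $T^{-2\beta/(2\beta+d)}$ minimax rate for nonparametric estimation of a $\beta$-Hölder conditional class probability under Hellinger loss. The paper stops at citing \citet[Chapter~2]{tsybakov2008introduction} for that Hellinger lower bound, whereas you write out the Assouad hypercube construction explicitly; that extra detail is sound and does not change the substance of the argument.
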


This result aligns with the upper bound in Theorem~\ref{the main risk bound} up to logarithmic factors, thereby establishing the minimax optimality of the convergence rate for the ICL risk under the truncated Kullback-Leibler loss when \(\alpha = 1\). 
For general $\alpha \in (0,1)$, the minimax optimal rate of the expected excess risk under the truncated KL divergence remains unknown. We conjecture that the optimal rate is given by $T^{-\frac{(1+\alpha)\beta}{(1+\alpha)\beta + d}}$, up to logarithmic factors.
However, a rigorous proof is beyond the scope of the current paper and warrants further study.

\section{In-Context Learning Beyond Transformers}
ICL is often regarded as a distinctive capability of transformer architectures,  with explanations frequently grounded in constructions based on attention mechanisms~\citep{akyurek2022learning, von2023transformers, zhang2024trained, reddy2023mechanistic, lu2024asymptotic}. 
However, recent literature suggests that ICL is not exclusive to transformer architectures. \citet{tong2024mlps} empirically demonstrated that MLPs can also learn in-context on standard synthetic ICL tasks. Similarly, \citet{abedsoltan2024context} showed that standard MLPs are capable of task-scaling in ICL settings. Moreover, \citet{kratsios2025context} theoretically established that MLPs possess universal approximation capabilities for ICL tasks.

In this section, we support these insights by analyzing the expected excess risk of MLP-based classifiers in the setting of multiclass ICL. Under appropriate assumptions, we demonstrate that MLPs can achieve the minimax optimal convergence rate for ICL. Our analysis provides theoretical support for recent empirical evidence on the effectiveness of MLPs. This finding suggests that, while attention mechanisms are powerful, they are not strictly necessary for enabling ICL.
Specifically, we consider a class of MLPs with ReLU activation functions, used as probabilistic classifiers. To accommodate input prompts in \(\mathbb{R}^d\), we modify the network architecture introduced in~\eqref{eq:MLP_definition} by increasing its input dimension from \(K\) to \(d\).  Accordingly, we introduce a function class as follows:
\begin{align}\label{eq:prob_class_phi}
\mathcal{F}_{\rm mlp}(L,W,S) := \left\{ \tilde{\mathbf{p}}(\mathbf{Z}) = \boldsymbol{\Phi} \circ \mathbf{f}(\operatorname{vec}(\mathbf{Z})) \;\middle|\;
	\mathbf{f} \in \mathcal{F}_{\mathrm{id}}(L, W, S)\right\},
\end{align}
where $\boldsymbol \Phi$ denotes the softmax function and $\operatorname{vec}(\mathbf{Z})$ is the vectorization of the matrix $\mathbf{Z}$. 

\begin{lemma}\label{MLP approximation}
For any \( \mathbf{p}_0 \in \mathcal{G}(\beta, Q) \) and \( \varepsilon \in (0, \min\{ (K(4+C_{Q, \beta, d}))^{-1}, (\beta+1)^{-\beta}, (Q+1)^{-\beta/d} e^{-\beta}\} ) \) with $C_{Q,\beta,d} = (2Q+1)(1+d^2+\beta^2)6^d + Q3^\beta $, there exist a neural network \( \tilde{\mathbf{p}} \in \mathcal{F}_{\rm mlp}(L, W, S) \) with
\begin{align*}
L \lesssim \log\left(\frac{1}{\varepsilon}\right), \quad 
W \lesssim K\varepsilon^{-d/\beta}, \quad 
S \lesssim K\varepsilon^{-d/\beta} \log\left(\frac{1}{\varepsilon}\right),
\end{align*}
such that 
\begin{align*}
\|\tilde{p}_k - p_k^0\|_\infty \le \frac{1}{2} K(C_{Q,\beta,d} + 4) \varepsilon, 
\quad \tilde{p}_k(\mathbf{Z}) \geq \frac{\varepsilon}{4}
\end{align*}
for all \( k \in [K] \) and $\mathbf{Z} \in [0,1]^d$.
\end{lemma}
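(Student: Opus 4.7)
The plan is to mirror the proof of Theorem~\ref{thm:condition_approximation}, but with a pure MLP replacing the transformer in the approximation of the Hölder-smooth target, while retaining the softmax head already built into the class $\mathcal{F}_{\rm mlp}(L,W,S)$ in~\eqref{eq:prob_class_phi}. The construction has four stages: (i) approximate each coordinate $p_k^0 \in C^\beta([0,1]^d,Q)$ by a sparse ReLU network $\hat g_k$; (ii) clip from below to obtain $\bar g_k:=\max(\hat g_k,\varepsilon/2)$; (iii) compose with a ReLU approximation $\psi$ of the logarithm on $[\varepsilon/2,1]$ to form logits $f_k:=\psi(\bar g_k)$; and (iv) apply the softmax head so that $\tilde p_k = e^{f_k}/\sum_{j}e^{f_j}$.

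For stage (i) I would invoke a standard sparse ReLU approximation for Hölder functions in the Schmidt-Hieber style (as used implicitly in the proof of Theorem~\ref{thm:condition_approximation}), delivering networks $\hat g_k$ of depth $O(\log(1/\varepsilon))$, width $O(\varepsilon^{-d/\beta})$, and sparsity $O(\varepsilon^{-d/\beta}\log(1/\varepsilon))$ with $\|\hat g_k - p_k^0\|_\infty\le C_{Q,\beta,d}\varepsilon$; the conditions $\varepsilon<(\beta+1)^{-\beta}$ and $\varepsilon<(Q+1)^{-\beta/d}e^{-\beta}$ in the hypothesis arise precisely as preconditions of that theorem and fix the constant $C_{Q,\beta,d}$. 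Stacking the $K$ coordinate networks in parallel inflates width and sparsity by a factor $K$, matching the claimed size budget. Appending a single ReLU-clipping layer yields $\bar g_k$, and a short case split on whether $p_k^0\ge\varepsilon/2$ shows the clipping inflates the error by at most $\varepsilon/2$, so $\|\bar g_k-p_k^0\|_\infty\le(C_{Q,\beta,d}+1)\varepsilon$ with $\bar g_k\ge\varepsilon/2$ everywhere. For stage (iii) I would build $\psi$ of depth $O(\log(1/\varepsilon))$ and polylogarithmic width using Yarotsky-type reciprocal/multiplication gadgets on a dyadic partition of $[\varepsilon/2,1]$, such that $|\psi(x)-\log x|\le\varepsilon$ uniformly. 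The softmax head then yields
\#
\tilde p_k(\mathbf Z) \;=\; \frac{\bar g_k(\mathbf Z)\,e^{\xi_k(\mathbf Z)}}{\sum_{j=1}^K \bar g_j(\mathbf Z)\,e^{\xi_j(\mathbf Z)}},\qquad |\xi_j|\le \varepsilon,
\#
and using $\sum_j p_j^0=1$, the $L^\infty$-bound on $\bar g_j-p_j^0$, and $e^{\pm\varepsilon}=1+O(\varepsilon)$, a direct manipulation delivers $\|\tilde p_k-p_k^0\|_\infty\le\tfrac12 K(C_{Q,\beta,d}+4)\varepsilon$. The denominator lies in $[(1-K(C_{Q,\beta,d}+1)\varepsilon)e^{-\varepsilon},\,(1+K(C_{Q,\beta,d}+1)\varepsilon)e^{\varepsilon}]\subset[1/2,\,2]$ once $\varepsilon<(K(4+C_{Q,\beta,d}))^{-1}$, so $\tilde p_k\ge(\varepsilon/2)/2=\varepsilon/4$, and the third smallness condition is thereby used.

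The main obstacle is controlling error propagation through the $\log$-softmax composition while keeping the network within the stated $L,W,S$ budgets. The subtlety is that $\log$ has Lipschitz constant $\sim\varepsilon^{-1}$ on $[\varepsilon/2,1]$, so a naive piecewise-linear approximation with fixed target accuracy would either inflate the width by an $\varepsilon^{-1}$ factor or effectively lose a factor of $\varepsilon^{-1}$ after exponentiation in the softmax. The remedy, which I would argue carefully, is to target precision $\varepsilon$ (rather than $\varepsilon\cdot\varepsilon^{-1}$) for $\psi$ itself via the standard dyadic/multiplication construction; this contributes only $O(\log(1/\varepsilon))$ additional depth and polylogarithmic width, so the multiplicative perturbations $e^{\xi_j}=1+O(\varepsilon)$ remain benign and the overall $L,W,S$ match those stated in the lemma.
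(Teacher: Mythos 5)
Your proposal is an honest reconstruction of the argument, and it would go through with care, but it takes a meaningfully different route from the paper and makes the key composition step harder than it needs to be.

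The paper simply cites Lemma 4.3 of Bos and Schmidt-Hieber (with "slight modifications"), and the internal mechanism of that result is exactly the one reproduced in this paper as Lemma~\ref{lem:scalar-log}: one builds a scalar ReLU network $G$ with $|e^{G(x)} - x|\le\varepsilon$ and $\log\varepsilon\le G(x)\le\log 2$ for all $x\in[0,1]$, then feeds the approximated coordinates $\hat g_k$ through $G$ and into the softmax. That target is \emph{additive on the probability scale}: near $x\lesssim\varepsilon$ it is trivially met, and near $x=\Theta(1)$ it coincides with $|G-\log x|\lesssim\varepsilon$. You instead demand $|\psi(x)-\log x|\le\varepsilon$ uniformly on $[\varepsilon/2,1]$, which is a genuinely stronger requirement near the lower endpoint (the Bos--Schmidt-Hieber guarantee gives only $|G-\log x|=O(\varepsilon/x)=O(1)$ there). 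Your version forces you to approximate a function whose magnitude grows like $\log(1/\varepsilon)$ and whose derivative blows up like $1/\varepsilon$, which you resolve by a dyadic + per-piece polynomial construction. That is doable in $O(\log(1/\varepsilon))$ depth and polylogarithmic size, but the sketch "standard dyadic/multiplication construction" undersells it: a single linear (or fixed-degree) segment per dyadic interval yields $O(1)$ rather than $O(\varepsilon)$ error, so each piece requires degree $\Theta(\log(1/\varepsilon))$ together with a ReLU multiplication/routing gadget to blend the pieces, and you should say so. The paper's weaker target avoids all of this: you only need $|e^{G(\hat g_k)}-\hat g_k|\le\varepsilon$, at which point $\|e^{G(\hat g_k)} - p_k^0\|_\infty$ is controlled purely additively, and the softmax step is the same triangle-inequality manipulation already carried out in the proof of Theorem~\ref{thm:condition_approximation}. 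Your multiplicative-perturbation argument via $e^{\xi_k}$ is correct but is more machinery than the problem demands; after exponentiation, both routes produce the same kind of additive control.

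Two smaller points worth flagging. First, $\mathcal{F}_{\rm id}(L,W,S)$ constrains all weights and biases to entrywise max-norm at most $1$, and your $\psi$ must output values as negative as $-\log(2/\varepsilon)$; this forces either a chain of small-weight layers or a design that keeps magnitudes controlled layer by layer, and the claim that the network "fits the budget" should acknowledge this. Second, the precise constants ($\tfrac12 K(C_{Q,\beta,d}+4)$ for the error, $\varepsilon/4$ for the lower bound) depend on how the clipping, the approximation error of $\hat g_k$ (which is $C_{Q,\beta,d}\varepsilon$, not $\varepsilon$), and the softmax denominator interact, and your sketch does not actually track them; given that the lemma's hypotheses are phrased in terms of these constants, a careful version of your proof would need to.
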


This result is a slight modification of Lemma~4.3 in \citet{bos2022convergence}, which builds upon a line of work \citep{yarotsky2017error,schmidt2020nonparametric, 
jiao2023deep, chakraborty2024statistical} showing that ReLU MLPs with depth \( L \lesssim \log(1/\varepsilon) \), width \( W \lesssim \varepsilon^{-d/\beta} \), and size \( S \lesssim \varepsilon^{-d/\beta} \log(1/\varepsilon) \) can efficiently approximate \( \beta \)-Hölder smooth functions. 
Building on Lemma~\ref{MLP approximation}, we then derive a non-asymptotic upper bound for the approximation error $\inf_{\mathbf{p} \in \mathcal{F}} R(\mathbf{p}_0, \mathbf{p}) $. 

\begin{theorem}\label{mlp-appro-error}
Assume $\mathbf{p}_{0} \in \mathcal{G}_{\alpha}(\beta, Q)$ for some $\alpha \in [0,1]$ and $\mathbf{p}=(p_{1},\ldots,p_{K})^{\top}:[0,1]^d\to\mathcal{S}^K$ is an approximation of $\mathbf{p}_{0}$ such that $\|\mathbf{p}-\mathbf{p}_{0}\|_{\infty}\le C_0 \varepsilon/4$ and
$\min_{k}\inf_{\mathbf{Z}\in[0,1]^d} p_k(\mathbf{Z})\ge \varepsilon/4$ with $C_0$ being a positive constant.
Then we have
\begin{align*}
R(\mathbf{p}_{0},\mathbf{p}) \lesssim K (C_0 + 1)^{2 + \alpha}
\varepsilon^{1 + \alpha}
\left(1+\frac{\mathbf{1}(\alpha<1)}{1-\alpha} +\log \left( \frac{4}{\varepsilon} \right) \right).
\end{align*}
Furthermore, if we choose $C_0 = 2K(C_{Q, \beta, d} + 4)$ and 
$ \varepsilon = C^{-1} T^{-\frac{\beta}{(1+\alpha)\beta + d}} K^{-\frac{(3+\alpha)\beta}{(1+\alpha)\beta + d}} $ for some constant $C > 0$, then 
\begin{align*}
\inf_{\mathbf{p} \in \mathcal{F}} R(\mathbf{p}_0, \mathbf{p}) \lesssim 
K^{3+\alpha} T^{-\frac{(1+\alpha)\beta}{(1+\alpha)\beta + d}} \log(KT).
\end{align*}
\end{theorem}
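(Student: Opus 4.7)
The plan is to handle the two inequalities separately. The first one follows the $f$-divergence strategy already used for Theorem~\ref{thm:inf_Risk}, while the second then feeds in the network constructed by Lemma~\ref{MLP approximation} and tunes $\varepsilon$ to balance the rates.

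For the first inequality, I begin with the pointwise $f$-divergence bound
\begin{align*}
\mathrm{KL}\bigl(\mathbf{p}_0(\mathbf{Z})\,\|\,\mathbf{p}(\mathbf{Z})\bigr) \;\le\; \chi^2\bigl(\mathbf{p}_0(\mathbf{Z})\,\|\,\mathbf{p}(\mathbf{Z})\bigr) \;=\; \sum_{k=1}^{K}\frac{\bigl(p_k^0(\mathbf{Z}) - p_k(\mathbf{Z})\bigr)^2}{p_k(\mathbf{Z})},
\end{align*}
which follows from $\log(x)\le x-1$ applied to $x = p_k^0/p_k$ together with the simplex identity $\sum_{k}(p_k^0-p_k)=0$. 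This trades the singular log-ratio for a polynomial integrand whose numerator is uniformly bounded by $(C_0\varepsilon/4)^2$.

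For each $k\in[K]$, I would split the expectation at the threshold $c = C_0\varepsilon/2$. On the event $A_k = \{p_k^0(\mathbf{Z}) \ge c\}$, the triangle inequality gives $p_k \ge p_k^0/2$, so $(p_k^0-p_k)^2/p_k \le (C_0\varepsilon)^2/(8\,p_k^0)$. A layer-cake (equivalently, integration-by-parts) argument using the $\alpha$-SVB tail $\mathbb{P}(p_k^0\le t)\le Ct^\alpha$ then yields
\begin{align*}
\mathbb{E}\bigl[\mathbf{1}_{A_k}/p_k^0(\mathbf{Z})\bigr] \;\lesssim\; 1 + \frac{c^{\alpha-1}}{1-\alpha}\,\mathbf{1}(\alpha<1) + \log(1/c)\,\mathbf{1}(\alpha=1),
\end{align*}
so this piece contributes an amount of order $C_0^{1+\alpha}\varepsilon^{1+\alpha}\bigl(1+\mathbf{1}(\alpha<1)/(1-\alpha)+\log(4/\varepsilon)\bigr)$. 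On the complement $A_k^c$, the uniform floor $p_k \ge \varepsilon/4$ gives $(p_k^0-p_k)^2/p_k\le C_0^2\varepsilon/4$, which when combined with $\mathbb{P}(A_k^c)\lesssim(C_0\varepsilon)^\alpha$ produces a contribution of order $C_0^{2+\alpha}\varepsilon^{1+\alpha}$. Summing over $k$ and absorbing the prefactor into $(C_0+1)^{2+\alpha}$ (to cover the regime where $C_0$ may be small) gives the first displayed bound.

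For the second inequality, I invoke Lemma~\ref{MLP approximation} to obtain a network $\mathbf{p}\in\mathcal{F}_{\mathrm{mlp}}(L,W,S)$ with $\|\mathbf{p}-\mathbf{p}_0\|_\infty\le C_0\varepsilon/4$ and $p_k(\mathbf{Z})\ge\varepsilon/4$, where $C_0 = 2K(C_{Q,\beta,d}+4)$. Then $(C_0+1)^{2+\alpha}\lesssim K^{2+\alpha}$, so the first bound yields $R(\mathbf{p}_0,\mathbf{p})\lesssim K^{3+\alpha}\varepsilon^{1+\alpha}\log(4/\varepsilon)$. Plugging in $\varepsilon = C^{-1}T^{-\beta/((1+\alpha)\beta+d)} K^{-(3+\alpha)\beta/((1+\alpha)\beta+d)}$, noting that $K^{3+\alpha}\varepsilon^{1+\alpha}$ reduces to $K^{(3+\alpha)d/((1+\alpha)\beta+d)} T^{-(1+\alpha)\beta/((1+\alpha)\beta+d)} \le K^{3+\alpha} T^{-(1+\alpha)\beta/((1+\alpha)\beta+d)}$, and bounding $\log(4/\varepsilon)\lesssim\log(KT)$, delivers the claimed rate. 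The main technical obstacle is the integration step for $\mathbb{E}[\mathbf{1}_{A_k}/p_k^0(\mathbf{Z})]$: the threshold $c$ must be calibrated exactly at the scale $C_0\varepsilon$ so that the tail and the $1/p_k^0$ moment balance, upgrading the naive $\chi^2$-rate $\varepsilon$ to the SVB-enhanced rate $\varepsilon^{1+\alpha}$, and the borderline $\alpha=1$ must be handled separately since the $1/(1-\alpha)$ blow-up is replaced there by a $\log(1/c)$ factor.
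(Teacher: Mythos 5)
Your proposal is correct and follows essentially the same route the paper takes: the paper dispenses with this proof in one line by pointing back to the argument of Theorem~\ref{thm:inf_Risk}, which is precisely the $\mathrm{KL}\le\chi^2$ reduction (Lemma~\ref{KL-Chi-square}) followed by a threshold split on $p_k^0$, with the small-probability piece handled by the $\alpha$-SVB tail and the large-probability piece by the layer-cake moment bound of Lemma~\ref{thm:pk}. Your threshold $c=C_0\varepsilon/2$ and the simplification $p_k\ge p_k^0/2$ on $A_k$ are cosmetic variants of the paper's $2(K+1)\varepsilon+\varepsilon/(2K)$ and $p_k\ge p_k^0/(1+4K(K+1))$, and the bookkeeping in the second part (reading off $C_0=2K(C_{Q,\beta,d}+4)$ from Lemma~\ref{MLP approximation}, substituting the prescribed $\varepsilon$, and absorbing $\log(4/\varepsilon)$ into $\log(KT)$) is exactly what is needed.
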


The proof of this theorem follows the same arguments as in Theorem~\ref{thm:inf_Risk}. A similar result can be found in \citet[Theorem~3.2]{bos2022convergence}.

\begin{remark}\label{mlp-lowboud}
Similar to the arguments in Remark~\ref{lowbound-F}, by adding additional ReLU layers to the network in $\mathcal{F}_{\rm id}(L,W,S)$, we can assume that every network $\mathbf{G} = (G_1, \dots, G_p)^{\top} \in \mathcal{F}_{\rm id}(L,W,S)$ satisfies $ \log(\varepsilon) \leq G_i \leq \log(2)$ for all $1 \leq i \leq p$. Since $ \mathcal{F}_{\rm mlp}(L,W,S) = \mathbf{\Phi}( \mathcal{F}_{\rm id}(L,W,S)) $, it follows that every network in $\mathcal{F}_{\rm mlp}(L,W,S)$ has a uniform, strictly positive lower bound of $\varepsilon/2K$ for all its coordinates. Given that the neural network $\tilde{p}_k \geq \varepsilon/4 \geq \varepsilon/2K$, this implies that $\tilde{\mathbf{p}} \in \mathcal{F}_{\rm mlp}(L,W,S)$. Therefore, with a suitable chosen precision parameter $\varepsilon$, the function class $\mathcal{F}_{\rm mlp}(L,W,S)$ satisfies Assumption~\ref{assump:prob-lower-bound}. This allows us to apply the oracle inequality to derive the expected excess risk for MLP-based classifiers in the ICL setting. 
\end{remark}

The next lemma establishes an explicit upper bound on the uniform empirical entropy of \( \mathcal{F}_{\rm mlp}(L,W,S) \), a key quantity in deriving the generalization error bound in the oracle inequality. We write \( \mathcal{F}_{\rm mlp}(L,W,S) = \mathbf{\Phi}(\mathcal{F}_{\mathrm{id}}(L,W,S)) \).

\begin{lemma}\label{lem:covering_number_bound}
Suppose $\| \mathbf{G} \|_{\infty} \leq B_2 < \infty$ for any $\mathbf{G} \in \mathcal{F}_{\mathrm{id}}(L,W,S)$. Then
\begin{equation*}
\mathcal{V}_\infty \left( \frac{1}{T}, \log(\mathcal{F}_{\rm mlp}(L,W,S)), T \right)
\leq C K S L \log(S) \log(B_2KT),
\end{equation*}
where $\mathcal{V}_\infty\left(1/T, \log(\mathcal{F}_{\rm mlp}(L,W,S)), T \right) = \sup_{\mathcal{Z}_T} \log \mathcal{N}(1/T, \log(\mathcal{F}_{\rm mlp}(L,W,S)), \|\cdot\|_{\infty, \mathcal{Z}_T})$, and $C >0$ is an universal constant.
\end{lemma}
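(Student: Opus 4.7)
The plan is to decouple the softmax from the underlying MLP, reducing the problem to bounding the uniform empirical entropy of $\mathcal{F}_{\mathrm{id}}(L,W,S)$, and then invoking a sharp pseudo-dimension based covering number estimate for deep ReLU networks. For any $\mathbf{G}\in\mathcal{F}_{\mathrm{id}}$, the coordinates of the log-softmax read $\log\boldsymbol{\Phi}(\mathbf{G})_k=G_k-\log\sum_j e^{G_j}$. Since the log-sum-exp operator is $1$-Lipschitz with respect to $\|\cdot\|_\infty$, the log-softmax map is $2$-Lipschitz coordinatewise, so on any point set $\mathcal{Z}_T$ one has
\begin{equation*}
\|\log\boldsymbol{\Phi}(\mathbf{G}^{(1)})-\log\boldsymbol{\Phi}(\mathbf{G}^{(2)})\|_{\infty,\mathcal{Z}_T}\le 2\|\mathbf{G}^{(1)}-\mathbf{G}^{(2)}\|_{\infty,\mathcal{Z}_T}.
\end{equation*}
Consequently any $(\delta/2)$-cover of $\mathcal{F}_{\mathrm{id}}$ in the empirical sup norm lifts to a $\delta$-cover of $\log(\mathcal{F}_{\rm mlp})$, yielding $\mathcal{V}_\infty(\delta,\log(\mathcal{F}_{\rm mlp}),T)\le \mathcal{V}_\infty(\delta/2,\mathcal{F}_{\mathrm{id}},T)$.

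Next, I would bound $\mathcal{V}_\infty(\delta/2,\mathcal{F}_{\mathrm{id}},T)$ through pseudo-dimension. Each scalar output $G_k$ of $\mathbf{G}\in\mathcal{F}_{\mathrm{id}}$ is a depth-$L$ ReLU network with at most $S$ nonzero parameters, whose pseudo-dimension satisfies $\mathrm{Pdim}\lesssim SL\log(S)$ by the Bartlett--Harvey--Liaw--Mehrabian bound. Since $|G_k|\le B_2$ uniformly, the classical Haussler-type pseudo-dimension to entropy inequality gives
\begin{equation*}
\log\mathcal{N}_\infty(\delta/2,\mathcal{F}_{\mathrm{id},k},T)\lesssim SL\log(S)\log(B_2T/\delta)
\end{equation*}
for each $k\in[K]$, where $\mathcal{F}_{\mathrm{id},k}$ denotes the $k$-th coordinate projection. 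A union bound over the $K$ output coordinates converts this into a bound on the vector-valued empirical sup-norm covering, yielding $\mathcal{V}_\infty(\delta/2,\mathcal{F}_{\mathrm{id}},T)\lesssim KSL\log(S)\log(B_2T/\delta)$. Specializing to $\delta=1/T$ and absorbing constants into $\log(B_2KT)$ produces the claimed bound.

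The delicate step is invoking the pseudo-dimension estimate with the sharp $\log(S)$ factor rather than the coarser $\log(LW)$ appearing in the direct parameter-perturbation argument behind Lemma~\ref{Covering_number_of_MLP}; this is what allows the final bound to scale only polylogarithmically in the network width. Properly combining the Bartlett--Harvey--Liaw--Mehrabian bound for scalar ReLU networks with the Haussler-type uniform entropy inequality whose scale depends on the output bound $B_2$, and tracking how $K$ enters through the coordinatewise union bound and the vector-valued $\|\cdot\|_{\infty,\mathcal{Z}_T}$ seminorm, constitute the main technical core; the log-softmax reduction itself is essentially free via the $1$-Lipschitz property of log-sum-exp.
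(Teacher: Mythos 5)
Your argument follows essentially the same route as the paper's: reduce to the pre-softmax class $\mathcal{F}_{\mathrm{id}}$ via a Lipschitz estimate on log-softmax, bound the coordinatewise empirical entropy via pseudo-dimension (BHLM bound plus the Anthony--Bartlett pseudo-dimension-to-covering inequality), union over the $K$ coordinates, and set $\delta=1/T$. One small but genuine improvement in your version: you observe that log-softmax is $2$-Lipschitz in $\|\cdot\|_\infty$ (since log-sum-exp has $\ell_1$-normalized gradient), whereas the paper's Lemma~\ref{lemma:log_soft} records only the crude $K$-Lipschitz bound obtained by counting Jacobian entries; this is sharper but does not change the final bound, since $K$ only survives inside the logarithm. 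One detail you glide past: the Anthony--Bartlett covering inequality (Lemma~\ref{lem:covering-number-pseudo-dimension}) is stated for functions ranging in $[0,B]$, so the coordinate projections $G_k\in[-B_2,B_2]$ must first be made nonnegative. The paper handles this by writing $G_k=\sigma(\mathbf{e}_k^\top\mathbf{G})-\sigma(-\mathbf{e}_k^\top\mathbf{G})$ and covering each ReLU-clipped piece separately; an equivalent fix is a translation by $B_2$, which is what your ``$|G_k|\le B_2$'' invocation implicitly relies on. You should make this explicit, but the structure of the argument is sound and matches the paper's.
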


The proof of Lemma~\ref{lem:covering_number_bound} is provided in Appendix~\ref{app:proof_covering_number_bound}. Building on the oracle inequality (Theorem \ref{thm:Oracle}), Theorem~\ref{mlp-appro-error}, and Lemma~\ref{lem:covering_number_bound}, we derive an upper bound on the expected excess risk for MLP-based classifiers in the ICL setting, as stated below. 

\begin{theorem}\label{the MLP main risk bound} 
Assume $\mathbf{p}_0 \in \mathcal{G}_\alpha(\beta, Q)$ for some $\alpha \in [0, 1]$.  
Let \( \hat{\mathbf{p}}_{\rm mlp} \) be an estimator of \(\mathbf{p}_0\), taking values in the function class \( \mathcal{F}_{\rm mlp}(L,W,S) \), where the parameters of this class scale with the precision parameter \(\varepsilon > 0\) as follows:
	\begin{gather*}
		L \lesssim \log\left(\frac{1}{\varepsilon}\right),\  W \lesssim K \varepsilon^{-d/\beta},\  S \lesssim K\varepsilon^{-d/\beta}\log\left(\frac{1}{\varepsilon}\right).
	\end{gather*}
	If we choose the precision parameter
	\( \varepsilon = C^{-1} T^{-\frac{\beta}{(1 + \alpha)\beta + d}} K^{-\frac{(3 + \alpha)\beta}{(1 + \alpha)\beta + d}} \) for some positive constant \( C \), and the expected empirical risk gap satisfies 
	\[
	\Delta_T(\mathbf{p}_0, \widehat{\mathbf{p}}_{\rm mlp}) \lesssim K^{5+\alpha}
    T^{-\frac{(1+\alpha)\beta}{(1+\alpha)\beta + d}},
	\]
	then the ICL risk is bounded by
	\[
	R_B\left(\mathbf{p}_0, \hat{\mathbf{p}}_{\rm mlp} \right) \lesssim B^2  K^{5+\alpha} T^{-\frac{(1+\alpha)\beta}{(1+\alpha)\beta + d}} \log^6\left(KT\right).
	\]
	In particular, for the special case where \( \alpha = 1 \),  the ICL risk achieves the bound
	\[
	R_B\left(\mathbf{p}_0, \hat{\mathbf{p}}_{\rm mlp} \right) \lesssim B^2  K^6 T^{-\frac{2\beta}{2\beta + d}} \log^6(KT).
	\]
\end{theorem}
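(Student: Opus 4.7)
The plan is to mirror the proof of Theorem~\ref{the main risk bound}, substituting the transformer ingredients with their MLP counterparts. The starting point is the oracle inequality \eqref{oracle_inequality} applied to $\mathcal{F}=\mathcal{F}_{\rm mlp}(L,W,S)$:
\begin{align*}
R_B(\mathbf{p}_0,\hat{\mathbf{p}}_{\rm mlp}) \;\lesssim\; \inf_{\mathbf{p}\in \mathcal{F}_{\rm mlp}} R(\mathbf{p}_0,\mathbf{p}) + \Delta_T(\mathbf{p}_0,\hat{\mathbf{p}}_{\rm mlp}) + \frac{B^2 \log(\mathcal{N}_T)\log^2 T}{T}.
\end{align*}
To invoke the oracle inequality legitimately I first verify Assumption~\ref{assump:prob-lower-bound} for $\mathcal{F}_{\rm mlp}(L,W,S)$. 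This is handled exactly as in Remark~\ref{mlp-lowboud}: by prepending a few ReLU layers that clip each coordinate of $\mathbf{G}\in\mathcal{F}_{\rm id}$ into $[\log\varepsilon,\log 2]$, every $\tilde{\mathbf{p}}=\boldsymbol{\Phi}\circ\mathbf{G}$ in the resulting class has coordinates uniformly $\ge \varepsilon/(2K)$, which with the prescribed $\varepsilon=C^{-1}T^{-\beta/((1+\alpha)\beta+d)}K^{-(3+\alpha)\beta/((1+\alpha)\beta+d)}$ matches the form $C_1/T^{C_2}$ required by the assumption.

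Next I handle the three terms. The approximation term is already controlled by Theorem~\ref{mlp-appro-error}, which with the same choice of $\varepsilon$ yields
\begin{align*}
\inf_{\mathbf{p}\in \mathcal{F}_{\rm mlp}} R(\mathbf{p}_0,\mathbf{p}) \lesssim K^{3+\alpha} T^{-\frac{(1+\alpha)\beta}{(1+\alpha)\beta+d}} \log(KT),
\end{align*}
and by assumption the empirical risk gap $\Delta_T(\mathbf{p}_0,\hat{\mathbf{p}}_{\rm mlp})$ is dominated by $K^{5+\alpha}T^{-(1+\alpha)\beta/((1+\alpha)\beta+d)}$. For the complexity term I apply Lemma~\ref{lem:covering_number_bound} with $\delta=1/T$: since $\mathbf{G}\in\mathcal{F}_{\rm id}$ is bounded by $B_2\lesssim \log(1/\varepsilon)\lesssim \log T$ after the clipping step, and since
\begin{align*}
L\lesssim \log(1/\varepsilon)\lesssim \log T, \qquad S \lesssim K\varepsilon^{-d/\beta}\log(1/\varepsilon) \lesssim K^{C'} T^{\frac{d}{(1+\alpha)\beta+d}} \log T,
\end{align*}
for some polynomial exponent $C'$ in $K$, substitution gives
\begin{align*}
\log(\mathcal{N}_T) \lesssim KSL\log(S)\log(B_2 KT) \lesssim K^{C''} T^{\frac{d}{(1+\alpha)\beta+d}} \log^4(KT).
\end{align*}
Dividing by $T$ and multiplying by $B^2\log^2 T$ produces the crucial rate $B^2 T^{-(1+\alpha)\beta/((1+\alpha)\beta+d)}\log^6(KT)$, with polynomial $K$ dependence absorbed into the constant prefactor $K^{5+\alpha}$ that already dominates the approximation and gap terms.

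Combining the three pieces then gives the stated bound; specializing to $\alpha=1$ yields the $T^{-2\beta/(2\beta+d)}$ rate with $K^6$ prefactor. The main obstacle I anticipate is purely bookkeeping: tracking the polynomial dependence on $K$ through Lemma~\ref{lem:covering_number_bound} (since $S$ itself carries $K$) and verifying that this dependence does not exceed the prefactor $K^{5+\alpha}$ already paid by the approximation/gap terms. This requires careful accounting of the exponents on $K$ arising from the product $K\cdot S\cdot L\cdot\log(S)\cdot\log(B_2KT)$, and the choice of $\varepsilon$ is precisely calibrated so that the three contributions balance into the same $T^{-(1+\alpha)\beta/((1+\alpha)\beta+d)}$ rate, with only the logarithmic power $\log^6(KT)$ emerging from the product of $\log T$ factors in $L$, $\log S$, $\log(B_2 KT)$ and the oracle inequality's $\log^2 T$.
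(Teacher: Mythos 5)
Your proposal takes essentially the same route as the paper's (very terse) proof: apply the oracle inequality from Theorem~\ref{thm:Oracle}, verify Assumption~\ref{assump:prob-lower-bound} via the clipping construction of Remark~\ref{mlp-lowboud} (which also gives $B_2 \lesssim \log(1/\varepsilon) \lesssim \log(KT)$), bound the approximation error by Theorem~\ref{mlp-appro-error}, bound the uniform empirical entropy by Lemma~\ref{lem:covering_number_bound}, and balance the three terms with the prescribed $\varepsilon$. Your bookkeeping is also correct: with $S \lesssim K^{1+\frac{(3+\alpha)d}{(1+\alpha)\beta+d}} T^{\frac{d}{(1+\alpha)\beta+d}}\log(KT)$, the product $KSL\log(S)\log(B_2KT)$ carries a $K$-exponent of $2+\frac{(3+\alpha)d}{(1+\alpha)\beta+d}\le 5+\alpha$ and four logarithmic factors, which after multiplying by $B^2\log^2 T /T$ yields exactly the claimed $B^2 K^{5+\alpha}T^{-(1+\alpha)\beta/((1+\alpha)\beta+d)}\log^6(KT)$.
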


\begin{remark}
We have shown that the ICL risk bounds in Theorems~\ref{the main risk bound} and \ref{the MLP main risk bound} both achieve the optimal rate \( T^{-2\beta/(2\beta + d)} \), up to logarithmic factors. Recent works have empirically shown that MLPs can also exhibit ICL behavior~\citep{tong2024mlps, kratsios2025context, abedsoltan2024context}. Building on this, we prove that in the task-scaling regime, where the number of tasks \(T\) increases while the context length remains fixed, the convergence rate of MLPs matches that of transformer-based models. This provides a theoretical justification for the comparable performance of MLP-based ICL under realistic pretraining conditions.
\end{remark}

\noindent {\bf Proof of Theorem~\ref{the MLP main risk bound}.}  
Recall that $ \varepsilon = C^{-1} T^{-\frac{\beta}{(1 + \alpha)\beta + d}} K^{-\frac{(3 + \alpha)\beta}{(1 + \alpha)\beta + d}} $ and $\| \mathbf{G} \|_{\infty} \leq |\log(\varepsilon)|$ for any $\mathbf{G} \in \mathcal{F}_{\mathrm{id}}(L,W,S)$, it follows that $\| \mathbf{G} \|_{\infty} \leq C \log(KT)$ for some positive constant $C$. 
Applying Theorems~\ref{thm:Oracle},~\ref{mlp-appro-error}, Lemmas~\ref{MLP approximation} and \ref{lem:covering_number_bound}, and expressing the network parameters as functions of \(\varepsilon\), we obtain the claimed in-context risk bound.
\hfill $\blacksquare$

\section{Discussions}
The upper bounds established in Theorem~\ref{the main risk bound} and ~\ref{the MLP main risk bound} reflect the curse of dimensionality, as the input prompt dimension $d$ explicitly appears in the convergence rate exponent. This phenomenon is well-known in nonparametric estimation under smoothness constraints, where the optimal minimax rate for estimating a $\beta$-Hölder function scales as $T^{-2\beta/(2\beta + d)}$~\citep{donoho1998minimax}. Our result demonstrates that this dependence persists in transformer-based ICL under general smoothness assumptions.
Nevertheless, the practical impact of dimensionality may be mitigated, as the number of tasks $T$ is typically extremely large in realistic ICL scenarios. Furthermore, transformer architectures are believed to possess an inherent ability to adapt to latent low-dimensional structure within the input, potentially reducing their effective dependence on the  prompt dimension.

Recent work has explored multiple strategies to overcome the curse of dimensionality. One strategy is to explore alternative function classes with more favorable approximation properties, such as mixed-Besov spaces~\citep{suzuki2018adaptivity} and Korobov spaces~\citep{montanelli2019new}. Another approach is to impose structural assumptions on the target function, including additive ridge functions~\citep{fang2022optimal}, hierarchical compositions~\citep{schmidt2020nonparametric, han2020depth}, generalized single-index models~\citep{bauer2019deep}, and multivariate adaptive regression splines (MARS)~\citep{kohler2022estimation}.
Beyond structural assumptions on function classes, recent literature highlights the importance of leveraging the geometric structure of input data. The \emph{manifold hypothesis}~\citep{pope2021intrinsic} suggests that although data appear in high-dimensional spaces, their intrinsic structure is governed by a low-dimensional manifold~\citep{tenenbaum2000global, roweis2000nonlinear}. 
Under this assumption, deep neural networks have been shown to achieve significantly faster convergence rates by effectively adapting to the intrinsic dimensionality of data~\citep{schmidt2019deep, chen2019efficient, chen2022nonparametric, nakada2020adaptive, cloninger2021deep}. Recent studies further characterize the statistical properties of deep learning and generative models under low intrinsic dimensionality, providing refined generalization guarantees in these settings~\citep{chakraborty2024statistical, chakraborty2024gan}.

Despite these advances, the theoretical understanding of transformers in low-dimensional geometric settings remains limited. A recent contribution by~\citet{havrilla2024understanding} analyzes transformer approximation for a regression problem on low-dimensional manifolds, whose setting is quite different from ours. Extending Lemma~\ref{lem:transformer_approximation} to manifold-supported functions warrants further study, as this would allow risk bounds to scale with the intrinsic dimension rather than the ambient dimension.

\acks{We gratefully acknowledge the support of the National Natural Science Foundation of China (NSFC12471276, NSFC12131006, NSFC12201048, NSFC12301383, NSFC12471276) 
and the Scientific and Technological Innovation Project of China Academy of 
Chinese Medical Science (CI2023C063YLL).}

\appendix

\section{Proof of Theorem \ref{thm:Oracle}}\label{app:proof_oracle}
\begin{proof}
We begin with some notations that will be used in the proof. 
	\begin{itemize}
		\item The expected excess risk defined as:
		\begin{equation*}
			R_B(\mathbf{p}_0, \hat{\mathbf{p}}) = \mathbb{E}_{\mathcal{D}_T, \mathbf{Z}} \left[ \boldsymbol{p}_0(\mathbf{Z})^\top \left( B \wedge \log\left( \frac{\boldsymbol{p}_0(\mathbf{Z})}{\hat{\boldsymbol{p}}(\mathbf{Z})} \right) \right) \right].
		\end{equation*}
		\item The expected empirical risk is given by:
		\begin{equation*}
			R_{B,T}(\mathbf{p}_0, \hat{\mathbf{p}}) := \mathbb{E}_{\mathcal{D}_T} \left[ \frac{1}{T} \sum_{t=1}^T \mathbf{y}^{(t)\top} \left( B \wedge \log\left( \frac{\mathbf{p}_0(\mathbf{Z}^{(t)})}{\hat{\mathbf{p}}(\mathbf{Z}^{(t)})} \right) \right) \right].
		\end{equation*}
		
		\item The data set is denoted by:
		\begin{equation*}
			\mathcal{D}_T = \{ (\mathbf{Z}^{(t)}, \mathbf{y}^{(t)}) \}_{t=1}^{T}.
		\end{equation*}
		Here, we omit the subscript $N+1$ from $\mathbf{y}_{N+1}^{(t)}$ for simplicity.
		
		\item The expected empirical risk gap is defined as:
		\begin{equation*}
			\Delta_T(\boldsymbol{p}_0, \hat{\boldsymbol{p}}) := \mathbb{E}_{\mathcal{D}_T}\left[
			-\frac{1}{T} \sum_{t=1}^{T} \mathbf{y}^{(t)\top} \log(\hat{\boldsymbol{p}}(\mathbf{Z}^{(t)}))
			- \min_{\boldsymbol{p} \in \mathcal{F}} \left(
			-\frac{1}{T} \sum_{t=1}^{T} \mathbf{y}^{(t)\top} \log(\boldsymbol{p}(\mathbf{Z}^{(t)}))
			\right)
			\right].
		\end{equation*}
	\end{itemize}
	
	To establish the oracle inequality on $R_B(\mathbf{p}_0, \widehat{\mathbf{p}})$, we first derive an upper bound for the difference $ R_B(\mathbf{p}_0, \hat{\mathbf{p}}) - 2 R_{B,T}(\mathbf{p}_0, \hat{\mathbf{p}}) $. Lemma \ref{P: Upper Bound Risk Delta} below provides an upper bound of $R_{B,T}(\mathbf{p}_0, \hat{\mathbf{p}})$, from which the theorem follows by applying the triangle inequality.
	
	The rest of this proof gives an upper bound for 
    $ R_B(\mathbf{p}_0, \hat{\mathbf{p}}) - 2 R_{B,T}(\mathbf{p}_0, \hat{\mathbf{p}}) $. 
    We decompose the expected excess risk \(R_B(\mathbf{p}_0, \hat{\mathbf{p}})\) 
    into two parts, according to whether the true probability 
    \(p_k^0(\mathbf{Z})\) is above a truncation threshold. 
    This yields the following form:
	\begin{align*}
		R_B(\mathbf{p}_0, \hat{\mathbf{p}}) 
		&= \mathbb{E}_{\mathcal{D}_T} \mathbb{E}_{\mathbf{Z}} \sum_{k=1}^K p^0_k(\mathbf{Z}) \left[ B \wedge \log\left( \frac{p^0_k(\mathbf{Z})}{\hat{p}_k(\mathbf{Z})} \right) \right]  \mathbb{I}\left( p^0_k(\mathbf{Z}) \geq \frac{C_T}{T} \right) \\
		&\quad + \mathbb{E}_{\mathcal{D}_T} \mathbb{E}_{\mathbf{Z}} \sum_{k=1}^K p^0_k(\mathbf{Z}) \left[ B \wedge \log\left( \frac{p^0_k(\mathbf{Z})}{\hat{p}_k(\mathbf{Z})} \right) \right]  \mathbb{I}\left( p^0_k(\mathbf{Z}) < \frac{C_T}{T} \right) \\
		&=:\ R_B^{(1)}(\mathbf{p}_0, \hat{\mathbf{p}}) + R_B^{(2)}(\mathbf{p}_0, \hat{\mathbf{p}}).
	\end{align*}
	In parallel, we similarly decompose the expected empirical risk \(R_{B,T}(\mathbf{p}_0, \hat{\mathbf{p}})\) into contributions from the large and small-probability regions:
	\begin{align*}
		R_{B,T}(\mathbf{p}_0, \hat{\mathbf{p}}) 
		&= \mathbb{E}_{\mathcal{D}_T} \frac{1}{T} \sum_{t=1}^T \sum_{k=1}^K y_k^{(t)} \left[ B \wedge \log\left( \frac{p_k^0(\mathbf{Z}^{(t)})}{\hat{p}_k(\mathbf{Z}^{(t)})} \right) \right] \mathbb{I}\left( p_k^0(\mathbf{Z}^{(t)}) \geq \frac{C_T}{T} \right) \\
		&\quad + \mathbb{E}_{\mathcal{D}_T} \frac{1}{T} \sum_{t=1}^T \sum_{k=1}^K y_k^{(t)} \left[ B \wedge \log\left( \frac{p_k^0(\mathbf{Z}^{(t)})}{\hat{p}_k(\mathbf{Z}^{(t)})} \right) \right] \mathbb{I}\left( p_k^0(\mathbf{Z}^{(t)}) < \frac{C_T}{T} \right) \\
		&=:\ R_{B,T}^{(1)}(\mathbf{p}_0, \hat{\mathbf{p}}) + R_{B,T}^{(2)}(\mathbf{p}_0, \hat{\mathbf{p}}).
	\end{align*}
	We then decompose the difference $ R_B(\mathbf{p}_0, \hat{\mathbf{p}}) - 2 R_{B,T}(\mathbf{p}_0, \hat{\mathbf{p}}) $ into three components: 
	\begin{align}\label{eq:excess-risk-decomp}
		R_B(\mathbf{p}_0, \hat{\mathbf{p}}) - 2 R_{B,T}(\mathbf{p}_0, \hat{\mathbf{p}})
		= W_1 + R_B^{(2)}(\mathbf{p}_0, \hat{\mathbf{p}}) 
		- 2 R_{B,T}^{(2)}(\mathbf{p}_0, \hat{\mathbf{p}}),
	\end{align}
	where $$ W_1 = R_B^{(1)}(\mathbf{p}_0, \hat{\mathbf{p}}) - 2 R_{B,T}^{(1)}(\mathbf{p}_0, \hat{\mathbf{p}}). $$
	Each of these three terms will be treated separately: $R_B^{(2)}(\mathbf{p}_0, \hat{\mathbf{p}})$ and $2 R_{B,T}^{(2)}(\mathbf{p}_0, \hat{\mathbf{p}})$ involving the low-probability region are bounded via Lemma~\ref{Bound for Small Probabilities}, and the term \(W_1\) is analyzed using empirical process techniques.
	
	For $W_1$, we introduce the truncated loss function
	\[
	g_{\mathbf{p}}(\mathbf{Z}, \mathbf{y}) 
	:= \mathbf{y}^\top \left[ \left( B \wedge \log \frac{\mathbf{p}_0(\mathbf{Z})}{\mathbf{p}(\mathbf{Z})} \right)_{\geq \frac{C_T}{T}} \right]
	= \sum_{k=1}^K y_k \left[ B \wedge \log \left( \frac{p_k^0(\mathbf{Z})}{p_k(\mathbf{Z})} \right) \right] 
	\mathbb{I} \left( p_k^0(\mathbf{Z}) \geq \frac{C_T}{T} \right).
	\]
	With this notation, the quantity \(W_1\) can be expressed as
	\begin{align}\label{eq:W1-decomp}
		W_1 
		&=\  R_B^{(1)}(\mathbf{p}_0, \hat{\mathbf{p}}) - 2 R_{B,T}^{(1)}(\mathbf{p}_0, \hat{\mathbf{p}}) \notag \\
		&=\  \mathbb{E}_{\mathcal{D}_T} \, \mathbb{E}_{(\mathbf{Z}, \mathbf{y})} \left[ g_{\hat{\mathbf{p}}}(\mathbf{Z}, \mathbf{y}) \right]
		- 2\, \mathbb{E}_{\mathcal{D}_T} \left[ \frac{1}{T} \sum_{i=1}^T g_{\hat{\mathbf{p}}}(\mathbf{Z}^{(i)}, \mathbf{y}^{(i)}) \right] \notag \\
		&\leq\  \int_0^\infty \mathbb{P} \left( \mathbb{E}_{(\mathbf{Z}, \mathbf{y})} \, g_{\hat{\mathbf{p}}}(\mathbf{Z}, \mathbf{y}) 
		- \frac{2}{T} \sum_{i=1}^T g_{\hat{\mathbf{p}}}(\mathbf{Z}^{(i)}, \mathbf{y}^{(i)}) > t \right) dt.
	\end{align}
	Let $C_T$ be small enough such that  $C_T \leq C_1 T^{1-C_2}$ where $C_1 >0$ and $ 0 < C_2 < 1$. 
	Then the application of Lemma~\ref{lem:excess-risk-bound} with $\epsilon = 1/2$ and $a = b = t/2$ yields  
	\begin{align*}
		&\   \mathbb{P} \left( \mathbb{E}_{(\mathbf{Z}, \mathbf{y})}
		g_{\hat{\mathbf{p}}}(\mathbf{Z}, \mathbf{y}) - \frac{2}{T} \sum_{i=1}^T g_{\hat{\mathbf{p}}}(\mathbf{Z}^{(i)}, \mathbf{y}^{(i)}) > t \right) \\
		\leq&\  \mathbb{P} \left( \exists \ \mathbf{p} \in \mathcal{F}: 
		\mathbb{E}_{(\mathbf{Z}, \mathbf{y})} g_{\mathbf{p}}(\mathbf{Z}, \mathbf{y}) - \frac{2}{T} \sum_{i=1}^T g_{\mathbf{p}}(\mathbf{Z}^{(i)}, \mathbf{y}^{(i)}) > t \right) \\
		=&\  \mathbb{P} \left( \exists \ \mathbf{p} \in \mathcal{F}: 
		\mathbb{E}_{(\mathbf{Z}, \mathbf{y})} g_{\mathbf{p}}(\mathbf{Z}, \mathbf{y}) - \frac{1}{T} \sum_{i=1}^T g_{\mathbf{p}}(\mathbf{Z}^{(i)}, \mathbf{y}^{(i)})
		> \frac{1}{2} \{ t + \mathbb{E}_{(\mathbf{Z}, \mathbf{y})} g_{\mathbf{p}}(\mathbf{Z}, \mathbf{y}) \}  \right) \\  
		\leq&\  10 \sup_{\mathcal{Z}_T} \mathcal{N}\left( \frac{t}{32}, 
		\log(\mathcal{F}), \|\cdot\|_{\infty, \mathcal{Z}_T} \right) 
		\exp\left( -\frac{t T}{480 B_T^2} \right),
	\end{align*}
	where $B_T= B \vee \left| \log \left( C_T/T \right) \right|$ and $ \|f - g\|_{\infty, \mathcal{Z}_T} = \max_{1 \leq t \leq T} \max_{1 \leq k \leq K} \left| f_k(\mathbf{Z}^{(t)}) - g_k(\mathbf{Z}^{(t)}) \right| $.
	Consequently, 
	\begin{align*}
		W_1
		\leq&\  \int_0^\infty \mathbb{P} \left( \mathbb{E}_{(\mathbf{Z}, \mathbf{y})} 
		g_{\hat{\mathbf{p}}}(\mathbf{Z}, \mathbf{y}) - \frac{2}{T} \sum_{i=1}^T g_{\hat{\mathbf{p}}}(\mathbf{Z}^{(i)}, \mathbf{y}^{(i)}) > t \right) dt \\
		\leq&\  \alpha_n + \int_{\alpha_n}^\infty \mathbb{P}\left( \mathbb{E}_{(\mathbf{Z}, 
			\mathbf{y})} g_{\hat{\mathbf{p}}}(\mathbf{Z}, \mathbf{y}) - \frac{2}{T} \sum_{i=1}^T g_{\hat{\mathbf{p}}}(\mathbf{Z}^{(i)}, \mathbf{y}^{(i)}) > t \right) dt   \\
		\leq&\  \alpha_n + \int_{\alpha_n}^\infty 10 \sup_{\mathcal{Z}_T} \mathcal{N}\left( 
		\frac{t}{32}, \log(\mathcal{F}), \|\cdot\|_{\infty, \mathcal{Z}_T} \right) 
		\exp\left( -\frac{t T}{480 B_T^2} \right) dt \\
		\leq&\  \alpha_n + 10 \sup_{\mathcal{Z}_T} \mathcal{N}\left( \frac{\alpha_n}{32}, 
		\log(\mathcal{F}), \|\cdot\|_{\infty, \mathcal{Z}_T} \right) \int_{\alpha_n}^\infty \exp\left( -\frac{t T}{480 B_T^2} \right) dt \\
		=&\  \alpha_n + 10 \sup_{\mathcal{Z}_T} \mathcal{N}\left( \frac{\alpha_n}{32}, 
		\log(\mathcal{F}), \|\cdot\|_{\infty, \mathcal{Z}_T} \right) \frac{480 B_T^2}{T} \exp\left( -\frac{\alpha_n T}{480 B_T^2} \right).
	\end{align*}
	Now let $\alpha_n = (\log(10 \mathcal{N}_T))480 B_T^2/T$ with 
	\[ \mathcal{N}_T = \mathcal{N}_{\infty} \left( \frac{1}{T}, \log (\mathcal{F}), T \right) =\sup_{\mathcal{Z}_T} \mathcal{N} \left( \frac{1}{T}, \log(\mathcal{F}), \|\cdot\|_{\infty, \mathcal{Z}_T} \right). \]
	It is easy to see that $1/T \leq \alpha_n/32$ and then 
	\[
	\sup_{\mathcal{Z}_T} \mathcal{N}\left( \frac{\alpha_n}{32}, \log(\mathcal{F}), \|\cdot\|_{\infty, \mathcal{Z}_T} \right)
	\leq \sup_{\mathcal{Z}_T} \mathcal{N} \left( \frac{1}{T}, 
	\log(\mathcal{F}), \|\cdot\|_{\infty, \mathcal{Z}_T} \right) = \mathcal{N}_T.
	\]
	Consequently,
	\begin{eqnarray}\label{W_1}
		W_1\leq \log(10 \mathcal{N}_T) \frac{480 B_T^2}{T}  + \frac{480 B_T^2}{T}
		= \{ \log(10 \mathcal{N}_T) + 1\} \frac{480 B_T^2}{T}.
	\end{eqnarray}
	
	Now we deal with $R_B^{(2)}(\mathbf{p}_0, \widehat{\mathbf{p}}) $
	and $2 R_{B,T}^{(2)}(\mathbf{p}_0, \widehat{\mathbf{p}}) $ in 
	the decomposition (\ref{eq:excess-risk-decomp}). 
	Let $C_T$ small enough such that $0 < C_T \leq T/e$. Applying Lemma ~\ref{Bound for Small Probabilities} in the following, we have 
	\begin{align}\label{R_B2}
		\left| R_B^{(2)}(\mathbf{p}_0, \widehat{\mathbf{p}}) \right| 
		=& \ \mathbb{E}_{\mathcal{D}_T} \mathbb{E}_{\mathbf{Z}} \sum_{k=1}^K  
		p^0_k(\mathbf{Z})   \left[ B \wedge \log \frac{p^0_k(\mathbf{Z})} {\hat{p}_k(\mathbf{Z})} \right]  
		\mathbb{I}\left( p^0_k(\mathbf{Z}) < \frac{C_T}{T} \right) \nonumber \\
		\leq&\  \frac{K C_T \{ \log(T/C_T) + B \} }{T},
	\end{align}
	and 
	\begin{align}\label{R_BT2}
		\left| R_{B,T}^{(2)}(\mathbf{p}_0, \widehat{\mathbf{p}}) \right|
		=&\  \left| \mathbb{E}_{\mathcal{D}_T} \frac{1}{T} \sum_{t=1}^T \sum_{k=1}^K 
		y_k^{(t)} \left[ B \wedge \log\left(\frac{p_k^0(\mathbf{Z}^{(t)})} {\hat{p}_k(\mathbf{Z}^{(t)})} \right) \right] \mathbb{I}\left( p_k^0(\mathbf{Z}^{(t)}) < \frac{C_T}{T} \right) \right| \nonumber \\
		\leq&\  \frac{K C_T \{ \log(T/C_T) + B \} }{T}.
	\end{align}
	
	Combining (\ref{eq:excess-risk-decomp}), (\ref{W_1}), (\ref{R_B2}), and (\ref{R_BT2}), we have 
	\begin{align}\label{eq:RB-decomp}
		R_B(\mathbf{p}_0, \widehat{\mathbf{p}})
		\leq&\  W_1 + 2R_{B,T}(\mathbf{p}_0, \widehat{\mathbf{p}})
		+ \left| R_B^{(2)}(\mathbf{p}_0, \widehat{\mathbf{p}}) \right|
		+ 2 \left| R_{B,T}^{(2)}(\mathbf{p}_0, \widehat{\mathbf{p}}) \right| \nonumber \\
		\leq&\  2R_{B,T}(\mathbf{p}_0, \widehat{\mathbf{p}}) + \{\log(10 
		\mathcal{N}_T) + 1\} \frac{480 B_T^2}{T} + \frac{3 K C_T \{ 
			\log(T/C_T) + B \}}{T}.  
	\end{align}
	Recall that $B_T= B \vee \left| \log \left( \frac{C_T}{T} \right) \right|$, then the application of Lemma~\ref{P: Upper Bound Risk Delta} yields the oracle inequality
	\begin{align*}
		R_B(\mathbf{p}_0, \widehat{\mathbf{p}})
		&\leq\  2\inf_{\mathbf{p} \in \mathcal{F}} R(\mathbf{p}_0, \mathbf{p}) 
		+ 2\Delta_T(\mathbf{p}_0, \widehat{\mathbf{p}}) \\
		&\quad + \frac{480 B_T^2 \{\log(10 \mathcal{N}_T) + 1\}}{T} 
		+ \frac{3 K C_T \{\log(T/C_T) + B\}}{T} \\[8pt]
		&\leq\  C \left( \inf_{\mathbf{p} \in \mathcal{F}} R(\mathbf{p}_0, \mathbf{p}) 
		+ \Delta_T(\mathbf{p}_0, \widehat{\mathbf{p}}) 
		+ \frac{B^2 \log(\mathcal{N}_T)\log^2(T)}{T} \right).
	\end{align*}
	where $C >0$ is an universal constant. This completes the proof of Theorem~\ref{thm:Oracle}.
\end{proof}

\subsection{Supporting Lemmas}
\begin{lemma}[Bound for Small Probabilities]\label{Bound for Small Probabilities}
	Let $\mathcal{F}$ be a class of conditional class probabilities, and let $\hat{\mathbf{p}}$ be an estimator of $\mathbf{p}_0$ taking values in $\mathcal{F}$. Suppose we have a sample $\mathcal{D}_T = \{(\mathbf{Z}^{(1)}, \mathbf{y}^{(1)}), \cdots, (\mathbf{Z}^{(T)}, \mathbf{y}^{(T)}) \}$ with the same distribution as $(\mathbf{Z}, \mathbf{y})$ and $(\mathbf{Z}, \mathbf{y})$ is independent of $\mathcal{D}_T$. Given a constant $C_T$ satisfying $0 < C_T \leq T/e$, the following bound holds:
	\begin{eqnarray*}
		&& \bigg| \mathbb{E}_{\mathcal{D}_T,(\mathbf{Z},\mathbf{y})} 
		y_k \bigg[ B \wedge \log \frac{p_k^0(\mathbf{Z})} {\widehat{p}_k(\mathbf{Z})} \bigg] \mathbb{I}\left( p_{k}^0(\mathbf{Z}) < \frac{C_T}{T} \right) \bigg|
		\leq \frac{C_T \{ \log( T/C_T ) + B \} }{T} \\
		&& \bigg| \mathbb{E}_{\mathcal{D}_T} \left[ y_k^{(t)} \left( B \wedge \log  
		\frac{p_k^0(\mathbf{Z}^{(t)})}{\widehat{p}_k(\mathbf{Z}^{(t)})}  \right) \mathbb{I}\left( p_k^0(\mathbf{Z}^{(t)}) < \frac{C_T}{T} \right) \right] \bigg| \leq  \frac{C_T \{ \log( T/C_T ) + B \} }{T}, 
	\end{eqnarray*}
	for any $t \in \{ 1 \dots T\}$ and $ k \in \{1, \dots, K\}$.
\end{lemma}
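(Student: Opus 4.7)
The key observation is that $\mathbf{y}$ is a one-hot vector, so $y_k \in \{0,1\}$ and $\mathbb{E}[y_k \mid \mathbf{Z}] = p_k^0(\mathbf{Z})$. The plan is to first control the integrand pointwise, and then use this moment identity (or the boundedness of $y_k$) to push the factor $p_k^0$ out front, which together with the indicator $\{p_k^0 < C_T/T\}$ will force the small numerical value we need.

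The crucial pointwise bound is on $B \wedge \log(p_k^0(\mathbf{Z})/\hat p_k(\mathbf{Z}))$. Since $\hat p_k \leq 1$, we have $\log(p_k^0/\hat p_k) \geq \log(p_k^0)$, and clearly $\log(p_k^0) \leq 0 \leq B$, so the truncated log-ratio lies in $[\log(p_k^0), B]$. Taking absolute values gives the clean estimate
\[
\bigl| B \wedge \log(p_k^0/\hat p_k) \bigr| \;\leq\; \max\bigl(B,\ \log(1/p_k^0)\bigr) \;\leq\; B + \log(1/p_k^0).
\]
This bound is the technical heart of the argument and completely avoids invoking the lower-bound assumption on $\hat p_k$, which would otherwise only give a weak $\log T$ control.

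For the first inequality, I condition on $\mathcal{D}_T$ (which fixes $\hat{\mathbf{p}}$), apply $\mathbb{E}[y_k \mid \mathbf{Z}, \mathcal{D}_T] = p_k^0(\mathbf{Z})$ to replace $y_k$ by $p_k^0$, and then apply the pointwise bound above to get an upper envelope of $p_k^0 \bigl(B + \log(1/p_k^0)\bigr)\,\mathbb{I}(p_k^0 < C_T/T)$. Under the indicator, $p_k^0 B \leq (C_T/T)\,B$. For the entropy-like term, I use that $x \mapsto x\log(1/x)$ is increasing on $(0, 1/e]$; the hypothesis $C_T \leq T/e$ ensures $p_k^0 < C_T/T \leq 1/e$, so $p_k^0\log(1/p_k^0) \leq (C_T/T)\log(T/C_T)$. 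Summing gives the claimed $(C_T/T)(B + \log(T/C_T))$.

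For the second inequality, the wrinkle is that $\hat p_k$ depends on the in-sample pair $(\mathbf{Z}^{(t)}, \mathbf{y}^{(t)})$, so I cannot factor through a conditional expectation of $y_k^{(t)}$ before handling $\hat p_k$. The fix is to pull the absolute value inside first and use $y_k^{(t)} \geq 0$:
\[
\bigl| y_k^{(t)} (B \wedge \log(p_k^0/\hat p_k)) \bigr| \,\mathbb{I}(p_k^0 < C_T/T) \;\leq\; y_k^{(t)} \bigl(B + \log(1/p_k^0(\mathbf{Z}^{(t)}))\bigr)\, \mathbb{I}(p_k^0(\mathbf{Z}^{(t)}) < C_T/T),
\]
after which the remaining factor depends only on $\mathbf{Z}^{(t)}$; conditioning on $\mathbf{Z}^{(t)}$ replaces $y_k^{(t)}$ by $p_k^0(\mathbf{Z}^{(t)})$ and the rest of the argument is identical to the first inequality. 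The main (minor) obstacle is exactly this dependence issue between $\hat{\mathbf{p}}$ and $\mathbf{y}^{(t)}$; taking the absolute value before conditioning sidesteps it cleanly because the estimator $\hat{\mathbf{p}}$ then drops out of the upper bound entirely.
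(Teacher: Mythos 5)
Your proposal is correct and follows essentially the same route as the paper's own proof: bound the truncated log-ratio pointwise by $B + \log(1/p_k^0)$ (using $\hat p_k \leq 1$), integrate out the label via $\mathbb{E}[y_k\mid\mathbf{Z}]=p_k^0(\mathbf{Z})$, and then control $p_k^0(B+\log(1/p_k^0))$ under the indicator using the monotonicity of $u\mapsto u\log(1/u)$ on $(0,1/e]$ together with $C_T/T\le 1/e$. Your explicit remark that bounding the integrand first makes the estimator drop out, so the dependence of $\hat{\mathbf{p}}$ on the in-sample pair $(\mathbf{Z}^{(t)},\mathbf{y}^{(t)})$ is harmless, is exactly the (implicit) step the paper performs for the second inequality.
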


\begin{proof}
	Because $p^0_k, \widehat{p}_k \in [0,1]$, we first observe that
	$ \log(p_{k}^0(\mathbf{Z})) \leq B \wedge \log\frac{p_{k}^0(\mathbf{Z})}{\widehat{p}_k(\mathbf{Z})} \leq B
	$.
	Thus, taking absolute values, we have
	\begin{eqnarray*}
		&&  \bigg| \mathbb{E}_{\mathcal{D}_T,(\mathbf{Z},\mathbf{y})}
		y_k \bigg[ B \wedge \log \frac{p_k^0(\mathbf{Z})} {\widehat{p}_k(\mathbf{Z})} \bigg] \mathbb{I}\left( p_{k}^0(\mathbf{Z}) < \frac{C_T}{T} \right) \bigg| \\
		&\leq& \mathbb{E}_{\mathcal{D}_T,(\mathbf{Z},\mathbf{y})} 
		y_k \{ |\log( p_{k}^0(\mathbf{Z}))| + B \} \mathbb{I} \left( p_{k}^0(\mathbf{Z}) < \frac{C_T}{T} \right)  \\
		&\leq& \mathbb{E}_{\mathbf{Z}} \left[ p_{k}^0(\mathbf{Z}) 
		\{|\log( p_{k}^0(\mathbf{Z}))| + B\} \mathbb{I} \left( p_{k}^0(\mathbf{Z}) < \frac{C_T}{T} \right) \right] \\
		&=& \mathbb{E}_{\mathbf{Z}} \left[ p_{k}^0(\mathbf{Z}) 
		|\log( p_{k}^0(\mathbf{Z}))| \mathbb{I} \left( p_{k}^0(\mathbf{Z}) < \frac{C_T}{T} \right) \right] 
		+ B \mathbb{E}_{\mathbf{Z}} \left[ p_{k}^0(\mathbf{Z}) 
		\mathbb{I} \left( p_{k}^0(\mathbf{Z}) \leq \frac{C_T}{T} \right) \right] \\
		&\leq& \mathbb{E}_{\mathbf{Z}} \left[ p_{k}^0(\mathbf{Z}) 
		|\log( p_{k}^0(\mathbf{Z}))| \mathbb{I} \left( p_{k}^0(\mathbf{Z}) < \frac{C_T}{T} \right) \right] + B \frac{C_T}{T}.
	\end{eqnarray*}
	Since the function $u \mapsto u|\log(u)|$ is increasing on $(0,e^{-1})$ and $C_T/T \leq e^{-1}$, it follows that
	\begin{equation*}
		p_{k}^0(\mathbf{Z}) |\log( p_{k}^0(\mathbf{Z}))| \mathbb{I} \left( p_{k}^0(\mathbf{Z}) < \frac{C_T}{T} \right)  
		\leq \frac{C_T}{T}\log\frac{T}{C_T}.
	\end{equation*}
	Consequently, 
	\begin{equation*}
		\bigg| \mathbb{E}_{\mathcal{D}_T,(\mathbf{Z},\mathbf{y})} y_k \bigg[ 
		B \wedge \log \frac{p_k^0(\mathbf{Z})} {\widehat{p}_k(\mathbf{Z})} \bigg] \mathbb{I}\left( p_{k}^0(\mathbf{Z}) < \frac{C_T}{T} \right) \bigg| 
		\leq \frac{C_T \{ \log( T/C_T ) + B \} }{T}.
	\end{equation*}
	Similarly, 
	\begin{eqnarray*}
		&& \bigg| \mathbb{E}_{\mathcal{D}_T} \left[ y_k^{(t)} \left( B \wedge \log  
		\frac{p_k^0(\mathbf{Z}^{(t)})}{\widehat{p}_k(\mathbf{Z}^{(t)})}  \right) \mathbb{I}\left( p_k^0(\mathbf{Z}^{(t)}) < \frac{C_T}{T} \right) \right] \bigg|  \\
		&\leq& \mathbb{E}_{\mathcal{D}_T} \left[ y_k^{(t)} \{ 
		|\log(p_k^0(\mathbf{Z}^{(t)}))|  + B \} \mathbb{I}\left( p_k^0(\mathbf{Z}^{(t)}) < \frac{C_T}{T} \right) \right] \\
		&=& \mathbb{E}_{\mathbf{Z}^{(t)}} \left[ p_k^0(\mathbf{Z}^{(t)}) \{ 
		|\log(p_k^0(\mathbf{Z}^{(t)}))|  + B \} \mathbb{I}\left( p_k^0(\mathbf{Z}^{(t)}) < \frac{C_T}{T} \right) \right] \\
		&\leq& \frac{C_T \{ \log( T/C_T ) + B \} }{T}.
	\end{eqnarray*}
	This completes the proof of Lemma~\ref{Bound for Small Probabilities}. 
\end{proof}

\begin{lemma}\label{P: Upper Bound Risk Delta}
	For any estimator \( \widehat{\mathbf{p}} \in \mathcal{F} \), the following inequality holds:
	\begin{equation*}
		R_{B,T}(\mathbf{p}_0, \widehat{\mathbf{p}}) \leq R_{\infty,T}(\mathbf{p}_0, \widehat{\mathbf{p}}) \leq \inf_{\mathbf{p} \in \mathcal{F}} R(\mathbf{p}_0, \mathbf{p}) + \Delta_T(\mathbf{p}_0, \widehat{\mathbf{p}}).
	\end{equation*}
\end{lemma}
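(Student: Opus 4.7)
The plan is to prove the two inequalities separately, with the first being trivial and the second following from a clean algebraic rearrangement together with a ``min of expectation vs.\ expectation of min'' step. Throughout, I will use the natural untruncated analogue
\[
R_{\infty,T}(\mathbf{p}_0, \widehat{\mathbf{p}})
= \mathbb{E}_{\mathcal{D}_T}\!\left[\frac{1}{T}\sum_{t=1}^{T} \mathbf{y}^{(t)\top}\log\frac{\mathbf{p}_0(\mathbf{Z}^{(t)})}{\widehat{\mathbf{p}}(\mathbf{Z}^{(t)})}\right],
\]
which is implicitly defined by the notation in the statement.

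For the first inequality $R_{B,T}\le R_{\infty,T}$, I would just observe that $B\wedge x \le x$ for every $x\in\mathbb{R}$, and that the coefficients $y_k^{(t)}\in\{0,1\}$ are nonnegative, so the inequality holds pointwise inside the expectation and survives integration.

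The core work is the second inequality. First I would add and subtract the oracle empirical risk to rewrite
\[
R_{\infty,T}(\mathbf{p}_0,\widehat{\mathbf{p}})
= \mathbb{E}_{\mathcal{D}_T}\!\left[\frac{1}{T}\sum_{t=1}^{T}\mathbf{y}^{(t)\top}\log\mathbf{p}_0(\mathbf{Z}^{(t)})\right]
+ \mathbb{E}_{\mathcal{D}_T}\!\left[-\frac{1}{T}\sum_{t=1}^{T}\mathbf{y}^{(t)\top}\log\widehat{\mathbf{p}}(\mathbf{Z}^{(t)})\right].
\]
Using the definition of $\Delta_T(\mathbf{p}_0,\widehat{\mathbf{p}})$ in~\eqref{eq:empirical_risk_gap}, the second term equals $\Delta_T(\mathbf{p}_0,\widehat{\mathbf{p}}) + \mathbb{E}_{\mathcal{D}_T}[\min_{\mathbf{p}\in\mathcal{F}} -\frac{1}{T}\sum_t \mathbf{y}^{(t)\top}\log\mathbf{p}(\mathbf{Z}^{(t)})]$. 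Combining the two terms inside a single expectation over $\mathcal{D}_T$ gives
\[
R_{\infty,T}(\mathbf{p}_0,\widehat{\mathbf{p}})
= \Delta_T(\mathbf{p}_0,\widehat{\mathbf{p}})
+ \mathbb{E}_{\mathcal{D}_T}\!\left[\min_{\mathbf{p}\in\mathcal{F}} \frac{1}{T}\sum_{t=1}^{T}\mathbf{y}^{(t)\top}\log\frac{\mathbf{p}_0(\mathbf{Z}^{(t)})}{\mathbf{p}(\mathbf{Z}^{(t)})}\right].
\]

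Then I would apply the elementary bound $\mathbb{E}[\min_{\mathbf{p}\in\mathcal{F}} F(\mathcal{D}_T;\mathbf{p})] \le \inf_{\mathbf{p}\in\mathcal{F}} \mathbb{E}[F(\mathcal{D}_T;\mathbf{p})]$, valid because for any fixed $\mathbf{p}_\star\in\mathcal{F}$ the infimum inside is no larger than $F(\mathcal{D}_T;\mathbf{p}_\star)$. For a deterministic $\mathbf{p}\in\mathcal{F}$, the i.i.d.\ structure of the tasks and the tower property for $\mathbf{y}^{(t)}$ given $\mathbf{Z}^{(t)}$ (which converts $\mathbf{y}^{(t)\top}$ into $\mathbf{p}_0(\mathbf{Z}^{(t)})^\top$) reduce the expectation to a single-task quantity equal to $R(\mathbf{p}_0,\mathbf{p}) = \mathbb{E}_{\mathbf{Z}}[\mathrm{KL}(\mathbf{p}_0(\mathbf{Z})\,\|\,\mathbf{p}(\mathbf{Z}))]$. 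Taking the infimum over $\mathbf{p}\in\mathcal{F}$ yields the desired bound.

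I do not anticipate any real obstacle: the argument is purely manipulative once $R_{\infty,T}$ is introduced. The only slightly subtle step is the exchange of infimum and expectation, which in our setting is an inequality in the favorable direction and requires no measurability or convexity assumptions beyond the existence of the minimizer used in the definition of $\Delta_T$.
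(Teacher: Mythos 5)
Your proposal is correct and follows essentially the same route as the paper's proof: both arguments rearrange the definition of $\Delta_T$ to isolate the empirical oracle minimum, bound that minimum by its value at an arbitrary fixed $\mathbf{p}^*\in\mathcal{F}$ (equivalently, $\mathbb{E}[\min]\le\inf\mathbb{E}$), use the tower property $\mathbb{E}[\mathbf{y}^{(t)}\mid\mathbf{Z}^{(t)}]=\mathbf{p}_0(\mathbf{Z}^{(t)})$ to collapse to $R(\mathbf{p}_0,\mathbf{p}^*)$, and then take the infimum over $\mathbf{p}^*$. The first inequality $R_{B,T}\le R_{\infty,T}$ is handled identically in both by the pointwise bound $B\wedge x\le x$.
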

\begin{proof}
	The first inequality follows directly from the definition, since for any $a,b \in \mathbb{R}$, we have $a \geq \min(a,b)$.
	
	To show the second inequality, fix any arbitrary $\mathbf{p}^* \in \mathcal{F}$. Since $\Delta_T(\mathbf{p}_0,\mathbf{p}^*) \geq 0$, it follows that
	\begin{align*}
		\mathbb{E}_{\mathcal{D}_T}\left[-\frac{1}{T}\sum_{t=1}^T \mathbf{y}^{(t)\top} \log(\widehat{\mathbf{p}}(\mathbf{Z}^{(t)}))\right] 
		&\leq\  \mathbb{E}_{\mathcal{D}_T}\left[-\frac{1}{T}\sum_{t=1}^T \mathbf{y}^{(t)\top} \log(\mathbf{p}^*(\mathbf{Z}^{(t)}))\right] + \Delta_T(\mathbf{p}_0,\widehat{\mathbf{p}}) \\
		&=\  \mathbb{E}_{\mathbf{Z}}\left[-\mathbf{p}_0(\mathbf{Z})^\top \log(\mathbf{p}^*(\mathbf{Z}))\right] + \Delta_T(\mathbf{p}_0,\widehat{\mathbf{p}}),
	\end{align*}
	where the equality follows from the law of total expectation and the fact that $\mathbb{E}[\mathbf{y}^{(t)}|\mathbf{Z}^{(t)}] = \mathbf{p}_0(\mathbf{Z}^{(t)})$.
	Note that 
	$$ \mathbb{E}_{\mathcal{D}_T}\left[ \frac{1}{T}\sum_{t=1}^T \mathbf{y}^{(t)\top} \log(\mathbf{p}_0(\mathbf{Z}^{(t)})) \right]
	= \mathbb{E}_{\mathbf{Z}} \left[ \mathbf{p}_0(\mathbf{Z})^\top \log(\mathbf{p}_0(\mathbf{Z})) \right]. $$
	Adding $\mathbb{E}_{\mathbf{Z}}[\mathbf{p}_0(\mathbf{Z})^\top\log(\mathbf{p}_0(\mathbf{Z}))]$ to both sides yields the desired inequality.
\end{proof}

Recall that \( \mathbf{p}(\mathbf{Z}) = ( p_1(\mathbf{Z}), \dots, p_K(\mathbf{Z}))^{\top} \)and $ g_{\mathbf{p}}(\mathbf{Z}, \mathbf{y}) = \mathbf{y}^\top \left( B \wedge \log \frac{\mathbf{p}_0(\mathbf{Z})}{\mathbf{p}(\mathbf{Z})} \right)_{\geq C_T / T} $, where \(B \geq 2\). 
We then have the following result, which is used to derive the oracle inequality.

\begin{lemma}\label{lem:excess-risk-bound}
Suppose that for $\mathbf{p} \in \mathcal{F}$, $p_k(\mathbf{Z})
\geq C_1/T^{C_2}$ for some constants $C_1 > 0$ and $0 < C_2 < 1$,
and $C_T \leq C_1 T^{1-C_2}$. Then for each $\epsilon \in (0, \tfrac{1}{2}]$ and $a, b > 0$, we have
	\begin{eqnarray*}
		&& \mathbb{P} \Big\{ \exists \ \mathbf{p} \in \mathcal{F}: \
		\mathbb{E}g_{\bf p}(\mathbf{Z, y}) - \frac{1}{T} \sum_{t=1}^T g_{\bf p} (\mathbf{Z}^{(t)}, \mathbf{y}^{(t)}) \geq \epsilon \{ a + b + \mathbb{E}g_{\bf p}(\mathbf{Z, y}) \} \Big\} \\
		&\leq& 10 \sup_{\mathcal{Z}_T} \mathcal{N}\left( \frac{\epsilon b}{8}, 
		\log(\mathcal{F}), \|\cdot\|_{\infty, \mathcal{Z}_T} \right) 
		\exp\left( -\frac{\epsilon^2 (1-\epsilon) a T}{15 (1+\epsilon) B_T^2} \right).
	\end{eqnarray*}
\end{lemma}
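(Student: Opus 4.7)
\textbf{Proof plan for Lemma~\ref{lem:excess-risk-bound}.}

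My plan is to combine a uniform empirical covering of $\log(\mathcal{F})$, a variance-to-mean bound for the truncated loss $g_{\mathbf{p}}$, and a Bernstein-type deviation inequality. First I would verify two quantitative properties of $g_{\mathbf{p}}$: since $p_k^0 \in [0,1]$, $p_k \geq C_1 T^{-C_2}$, and the indicator $\mathbb{I}(p_k^0 \geq C_T/T)$ imposes a lower cut (compatible with the restriction $C_T \leq C_1 T^{1-C_2}$), we have $\|g_{\mathbf{p}}\|_\infty \leq B_T$ with $B_T = B \vee |\log(C_T/T)|$. Moreover, exploiting that the conditional expectation $\mathbb{E}[g_{\mathbf{p}}\mid\mathbf{Z}]$ is a non-negative truncated KL-type quantity that dominates the corresponding conditional second moment up to the multiplicative factor $B_T$, one obtains a ratio-type variance bound
\begin{align*}
\mathbb{E}\bigl[g_{\mathbf{p}}(\mathbf{Z}, \mathbf{y})^2\bigr] \; \lesssim \; B_T \, \mathbb{E}\bigl[g_{\mathbf{p}}(\mathbf{Z}, \mathbf{y})\bigr],
\end{align*}
which is the engine behind the slack of the form $\epsilon(a + b + \mathbb{E}g_{\mathbf{p}})$.

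Second, I would build an $(\epsilon b / 8)$-net of $\log(\mathcal{F})$ with respect to $\|\cdot\|_{\infty, \mathcal{Z}_T}$ and assign to each $\mathbf{p} \in \mathcal{F}$ its closest cover element $\mathbf{p}^\ast$. Because $g_{\mathbf{p}}$ depends on $\mathbf{p}$ only through $\log \mathbf{p}$ in a coordinatewise $1$-Lipschitz manner, the pointwise discrepancies $|g_{\mathbf{p}} - g_{\mathbf{p}^\ast}|$ and their expected analogues are both controlled by the covering radius (with a symmetrization/ghost-sample argument contributing the universal factor $10$ in the statement). Choosing the scale $\epsilon b/8$ ensures these discrepancies are absorbed into the slack $\epsilon b$, reducing the event in question to one indexed by the finitely many cover elements. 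For each fixed $\mathbf{p}^\ast$, the classical one-sided Bernstein inequality, combined with the linear variance-mean coupling so that the $\mathbb{E}g_{\mathbf{p}^\ast}$ term is absorbed into the deviation target $\epsilon(a + \mathbb{E}g_{\mathbf{p}^\ast})$, yields a tail of order $\exp(-\epsilon^2(1-\epsilon) a T / [15(1+\epsilon) B_T^2])$. A union bound over the cover contributes the covering-number factor and finishes the argument.

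The principal technical difficulty is the variance-to-mean coupling in the first step: it requires a careful use of the truncated log-likelihood structure together with the lower bound $p_k \geq C_1 T^{-C_2}$, and the restriction $C_T \leq C_1 T^{1-C_2}$ is precisely what keeps the lower truncation inactive on the cover, ensuring $g_{\mathbf{p}^\ast} \in [-B_T, B_T]$ uniformly. Once this coupling is in hand, the subsequent covering, symmetrization, and Bernstein steps are standard, though realizing the exact numerical constants $10$, $15$, and $1/8$ in the statement calls for meticulous bookkeeping in the peeling and in the choice of the Bernstein rate.
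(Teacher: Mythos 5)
Your proposal identifies the right ingredients (the uniform bound $\|g_{\mathbf p}\|_\infty\le B_T$, the variance-to-mean coupling $\mathbb E[g_{\mathbf p}^2]\le B_T\mathbb E[g_{\mathbf p}]$, an $(\epsilon b/8)$-cover of $\log(\mathcal F)$, Bernstein, and a union bound), but the assembly has a genuine gap: the covering is empirical, and a net in $\|\cdot\|_{\infty,\mathcal Z_T}$ only controls discrepancies $|g_{\mathbf p}(\mathbf Z^{(t)})-g_{\mathbf p^\ast}(\mathbf Z^{(t)})|$ at the $T$ sample points. It does \emph{not} control $|\mathbb E g_{\mathbf p}-\mathbb E g_{\mathbf p^\ast}|$, since the expectation is over a fresh draw and there is no relation between the uniform empirical covering number and a population-level covering number in general. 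Your claim that ``the pointwise discrepancies $|g_{\mathbf p}-g_{\mathbf p^\ast}|$ and their expected analogues are both controlled by the covering radius'' is therefore false, and without it the reduction to finitely many cover elements breaks down. The related issue is that the cover $\{\mathbf p^\ast\}$ is itself data-dependent (it lives on $\mathcal Z_T$), so Bernstein cannot simply be applied ``for each fixed $\mathbf p^\ast$.'' This is precisely why the paper runs a \emph{double} symmetrization (a ghost-sample symmetrization to remove the population expectation from the fluctuation, followed by a Rademacher symmetrization and conditioning on the sample), and in between an intermediate step that replaces the remaining population slack term $\epsilon\,\mathbb E g_{\mathbf p}$ by sample averages of $g_{\mathbf p}^2$ via the variance-mean bound. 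Only after that can the empirical cover be introduced and Bernstein applied conditionally on the (now fixed) sample. You treat the symmetrization as a side remark responsible for ``the factor 10,'' when it is in fact the structural mechanism that makes empirical covering numbers usable at all.

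A smaller point: the restriction $C_T\le C_1T^{1-C_2}$ is not used, as you suggest, to ``keep the lower truncation inactive on the cover'' or to bound $g_{\mathbf p^\ast}$; the bound $g_{\mathbf p}\in[\log(C_T/T),\,B]$ already follows from $p_k\le 1$ and the indicator $\mathbb I(p_k^0\ge C_T/T)$. The constraint $C_T\le C_1T^{1-C_2}$ is what guarantees that the terms \emph{dropped} by the indicator (those with $p_k^0<C_T/T$) are non-positive in expectation, which is needed to upgrade $\mathbb E[g_{\mathbf p}^2]\le B\,\mathbb E\big[\sum_k p_k^0(B\wedge\log(p_k^0/p_k))\big]$ to the indicator-restricted form $\mathbb E[g_{\mathbf p}^2]\le B_T\,\mathbb E[g_{\mathbf p}]$. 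Getting this right matters because the variance-mean coupling is exactly the lever that produces the self-normalized slack $\epsilon(a+b+\mathbb E g_{\mathbf p})$ you are aiming for.
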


\begin{proof}
The proof of this Lemma follows the same line as Theorem 11.4 of \citet{gyorfi2006distribution} with some extra complications arising from dealing with the truncated loss function $g_{\mathbf{p}}(\mathbf{Z}, \mathbf{y})$ . We here provide a detailed proof for the sake of completeness. Recall that
$$ g_{\mathbf{p}}(\mathbf{Z}, \mathbf{y}) 
   = \mathbf{y}^\top \left( B \wedge \log   
   \frac{\mathbf{p}_0(\mathbf{Z})}{\mathbf{p}(\mathbf{Z})} \right)_{\geq \frac{C_T}{T}}  
   = \sum_{k=1}^K y_k \left[ B \wedge \log \left(   
   \frac{p_k^0(\mathbf{Z})}{p_k(\mathbf{Z})} \right) \right] 
   \mathbb{I} \left( p_k^0(\mathbf{Z}) \geq \frac{C_T}{T} \right).
$$
It is easy to see that
	\begin{align*}
		g_{\mathbf{p}}(\mathbf{Z}, \mathbf{y}) 
		&\leq \ \sum_{k=1}^K y_k B = B, \\[1ex]
		g_{\mathbf{p}}(\mathbf{Z}, \mathbf{y}) 
		&\geq\  \sum_{k=1}^K y_k \left[ B \wedge \log \left( \frac{p_k^0(\mathbf{Z})}{p_k(\mathbf{Z})} \right) \right] \mathbb{I}\left( p_k^0(\mathbf{Z}) \geq \frac{C_T}{T} \right) \\[1ex]
		&\geq \ \sum_{k=1}^K y_k \left[ B \wedge \log \left( \frac{C_T}{T} \right) \right] = \log \left( \frac{C_T}{T} \right).
	\end{align*}
	Write $B_T = B \vee \left| \log ( C_T/T ) \right|$, and it follows that $|g_{\mathbf{p}}(\mathbf{Z}, \mathbf{y})| \leq B_T$.
	The rest of the proof will be divided into six steps to enhance the readability.
	
	{\bf Step 1.} Symmetrization. 
	
	Let $\tilde{\mathcal{D}}_T = \{ (\mathbf{\tilde{Z}}^{(t)}, \mathbf {\tilde{y}}^{(t)}) \}_{t=1}^T$ be i.i.d. random variables with the same distribution as $(\mathbf{Z}, \mathbf{y})$ and independent of $\mathcal{D}_T$.
	Consider the event 
	$$ \bigg\{ \exists \ \mathbf{p} \in \mathcal{F} : \ \mathbb{E}g_{\bf p}(\mathbf{Z, y}) - \frac{1}{T} \sum_{t=1}^T g_{\bf p}(\mathbf{Z}^{(t)}, \mathbf{y}^{(t)}) \geq \epsilon[a + b + \mathbb{E}g_{\bf p}(\mathbf{Z, y}) ] \bigg\}. $$
	Let $\mathbf{p}^T \in \mathcal{F}$ be a function depending on $\mathcal{D}_T = \{ (\mathbf{Z}^{(t)}, \mathbf{y}^{(t)}) \}_{t=1}^T$ such that 
	$$ \bigg\{ \mathbb{E}[g_{\mathbf{p}^T}(\mathbf{Z, y})| \mathcal{D}_T] - \frac{1}{T} \sum_{t=1}^T g_{\mathbf{p}^T}(\mathbf{Z}^{(t)}, \mathbf{y}^{(t)}) \geq \epsilon \{a + b + \mathbb{E}[g_{\mathbf{p}^T}(\mathbf{Z, y})|\mathcal{D}_T] \} \bigg\}. $$
	Note that
	\begin{eqnarray*}
		\mathbb{E}[g_{\mathbf{p}}^2(\mathbf{Z}, \mathbf{y})]
		&=& \mathbb{E} \left( \sum_{k=1}^K y_k \left[ B    
		\wedge  \log \left( \frac{p_k^0(\mathbf{Z})}{p_k(\mathbf{Z})} \right) \right] \mathbb{I} \left( p_k^0(\mathbf{Z}) \geq \frac{C_T}{T} \right) \right)^2  \\
		&=& \mathbb{E} \left( \sum_{k=1}^K y_k \left[ B    
		\wedge  \log \left( \frac{p_k^0(\mathbf{Z})}{p_k(\mathbf{Z})} \right) \right]^2 \mathbb{I} \left( p_k^0(\mathbf{Z}) \geq \frac{C_T}{T} \right) \right)  \\
		&=& \mathbb{E} \left( \sum_{k=1}^K p_k^0(\mathbf{Z}) \left[ B    
		\wedge  \log \left( \frac{p_k^0(\mathbf{Z})}{p_k(\mathbf{Z})} \right) \right]^2 \mathbb{I} \left( p_k^0(\mathbf{Z}) \geq \frac{C_T}{T} \right) \right)  \\ 
		&\leq& \mathbb{E} \left( \sum_{k=1}^K p_k^0(\mathbf{Z}) \left[ B    
		\wedge  \log \left( \frac{p_k^0(\mathbf{Z})}{p_k(\mathbf{Z})} \right) \right]^2 \right)  \\
		&\leq& B \mathbb{E} \left( \sum_{k=1}^K p_k^0(\mathbf{Z}) \left[ B    
		\wedge  \log \left( \frac{p_k^0(\mathbf{Z})}{p_k(\mathbf{Z})} \right) \right] \right),  
	\end{eqnarray*}
	where the second equation is due to the fact that \(\mathbf{y}\) is one-hot label vector, and the second inequality follows from Lemma \ref{lem:trunc-ineq} in the following.
	Since for any $\mathbf{p} \in \mathcal{F}$, we have  $p_k(\mathbf{Z}) \geq C_1/T^{C_2}$ for some constants $C_1 >0$ and $0 < C_2 < 1$, and $C_T \leq  C_1 T^{1-C_2}$, it follows that
	\begin{eqnarray*}
	&& \mathbb{E} \left( \sum_{k=1}^K p_k^0(\mathbf{Z}) \left[ B \wedge 
	\log\left( \frac{p_k^0(\mathbf{Z})}{p_k(\mathbf{Z})} \right) \right] \mathbb{I} \left( p_k^0(\mathbf{Z}) \leq \frac{C_T}{T} \right) \right) \\
	&\leq&  \mathbb{E} \left( \sum_{k=1}^K p_k^0(\mathbf{Z}) \left[ B \wedge \log\left( \frac{C_T/T}{C_1/T^{C_2}} \right) \right] \mathbb{I} \left( p_k^0(\mathbf{Z}) \leq \frac{C_T}{T} \right) \right) \\
	&=& \mathbb{E} \left( \sum_{k=1}^K p_k^0(\mathbf{Z}) \left[ B \wedge \log\left( \frac{C_T}{C_1 T^{1-C_2}} \right) \right] \mathbb{I} \left( p_k^0(\mathbf{Z}) \leq \frac{C_T}{T} \right) \right) \leq  0, 
	\end{eqnarray*}
	where the last inequality follows from $\log(C_T/C_1T^{1-C_2} ) \leq B$.
	Consequently, 
	\begin{eqnarray*}
		\mathbb{E}[g_{\mathbf{p}}^2(\mathbf{Z, y})]
		&\leq& B \mathbb{E} \left( \sum_{k=1}^K p_k^0(\mathbf{Z}) \left[ B    
		\wedge  \log \left( \frac{p_k^0(\mathbf{Z})}{p_k(\mathbf{Z})} \right) \right] \right) \\
		&\leq& B \mathbb{E} \left( \sum_{k=1}^K p_k^0(\mathbf{Z}) \left[ B    
		\wedge  \log \left( \frac{p_k^0(\mathbf{Z})}{p_k(\mathbf{Z})} \right) \right]
		\mathbb{I}(p_k^0(\mathbf{Z}) \geq C_T/T ) \right) \\
		&\leq&  B_T \mathbb{E}[g_{\mathbf{p}}(\mathbf{Z, y})].
	\end{eqnarray*}
	Combining this and Chebyshev's inequality, we have
	\begin{eqnarray*}
		&& \mathbb{P} \left( \mathbb{E}[g_{\mathbf{p}^T}(\mathbf{Z, y})| \mathcal{D}_T] 
		- \frac{1}{T} \sum_{t=1}^T g_{\mathbf{p}^T}(\tilde{\mathbf{Z}}^{(t)}, \tilde{\mathbf{y}}^{(t)}) \geq \frac{\epsilon}{2} \{a + b + \mathbb{E}[g_{\mathbf{p}^T}(\mathbf{Z, y})|\mathcal{D}_T] \} \Big| \mathcal{D}_T \right) \\
		&\leq& \frac{Var\{g_{\mathbf{p}^T}(\mathbf{Z, y})| \mathcal{D}_T \}}{  
			\frac{T \epsilon^2}{4} \{a + b + \mathbb{E}[g_{\mathbf{p}^T}(\mathbf{Z, y})|\mathcal{D}_T]\}^2 } \\
		&\leq& \frac{4 \mathbb{E}[ g^2_{\mathbf{p}^T}(\mathbf{Z, y})| \mathcal{D}_T \}}{  
			T \epsilon^2 \{a + b + \mathbb{E}[g_{\mathbf{p}^T}(\mathbf{Z, y}) |\mathcal{D}_T]\}^2 } \\
		&\leq& \frac{4 B_T \mathbb{E}[ g_{\mathbf{p}^T}(\mathbf{Z, y})| \mathcal{D}_T \}}{  
			T \epsilon^2 \{a + b + \mathbb{E}[g_{\mathbf{p}^T}(\mathbf{Z, y}) |\mathcal{D}_T]\}^2 } \\
		&\leq& \frac{4 B_T}{T \epsilon^2} \frac{1}{4(a + b)} 
		=  \frac{B_T}{T(a+b) \epsilon^2},
	\end{eqnarray*}
	where the last inequality follows from the fact that $ \frac{t}{(a+t)^2} \leq \frac{1}{4a} $ for all $t \geq 0$ and $a >0$.
	Therefore, for $T \geq \frac{8 B_T}{(a + b)\epsilon^2}$,
	\begin{equation}\label{symmetry-inequ}
		\mathbb{P} \left( \mathbb{E}[g_{\mathbf{p}^T}(\mathbf{Z, y})| \mathcal{D}_T] 
		- \frac{1}{T} \sum_{t=1}^T g_{\mathbf{p}^T}(\tilde{\mathbf{Z}}^{(t)}, \tilde{\mathbf{y}}^{(t)}) \leq 
		\frac{\epsilon}{2} \{a + b + \mathbb{E}[g_{\mathbf{p}^T}(\mathbf{Z, y})|\mathcal{D}_T] \} \Big| \mathcal{D}_T \right)  \geq \frac{7}{8}.
	\end{equation}
	Consequently, 
	\begin{align*}
		& \ \mathbb{P} \left( \exists \ \mathbf{p} \in \mathcal{F}: \frac{1}{T} \sum_{t=1}^T 
		g_{\mathbf{p}}(\tilde{\mathbf{Z}}^{(t)}, \tilde{\mathbf{y}}^{(t)}) -  \frac{1}{T} \sum_{t=1}^T g_{\mathbf{p}}(\mathbf{Z}^{(t)}, \mathbf{y}^{(t)}) \geq \frac{\epsilon}{2} \{a + b + \mathbb{E} g_{\mathbf{p}}(\mathbf{Z, y}) \} \right) \\
		\geq & \  \mathbb{P} \left(\frac{1}{T} \sum_{t=1}^T g_{\mathbf{p}^T}  
		(\tilde{\mathbf{Z}}^{(t)}, \tilde{\mathbf{y}}^{(t)}) -  \frac{1}{T} \sum_{t=1}^T g_{\mathbf{p}^T}(\mathbf{Z}^{(t)}, \mathbf{y}^{(t)}) \geq \frac{\epsilon}{2} \{a + b + \mathbb{E}[g_{\mathbf{p}^T}(\mathbf{Z, y})| \mathcal{D}_T] \} \right) \\
		\geq&\   \mathbb{P} \Big( \mathbb{E}[g_{\mathbf{p}^T}(\mathbf{Z, y})| \mathcal{D}_T] -   
		\frac{1}{T} \sum_{t=1}^T g_{\mathbf{p}^T}(\mathbf{Z}^{(t)}, \mathbf{y}^{(t)})  
		\geq \epsilon \{a + b + \mathbb{E}[g_{\mathbf{p}^T}(\mathbf{Z, y})| \mathcal{D}_T] \}, \\
		&\  \mathbb{E}[g_{\mathbf{p}^T}(\mathbf{Z, y})| \mathcal{D}_T] 
		- \frac{1}{T} \sum_{t=1}^T g_{\mathbf{p}^T}(\tilde{\mathbf{Z}}^{(t)}, \tilde{\mathbf{y}}^{(t)})  \leq \frac{\epsilon}{2} \{a + b + \mathbb{E}[g_{\mathbf{p}^T}(\mathbf{Z, y})| \mathcal{D}_T] \}
		\Big) \\
		=&\ \mathbb{E} \Big\{ \mathbb{I}\Big( \mathbb{E}[g_{\mathbf{p}^T}(\mathbf{Z, y})| 
		\mathcal{D}_T] - \frac{1}{T} \sum_{t=1}^T g_{\mathbf{p}^T}(\mathbf{Z}^{(t)}, \mathbf{y}^{(t)})  
		\geq \epsilon \{a + b + \mathbb{E}[g_{\mathbf{p}^T}(\mathbf{Z, y})| \mathcal{D}_T] \} \Big) \\
		&\  \times \mathbb{I} \Big( \mathbb{E}[g_{\mathbf{p}^T}(\mathbf{Z, y})| \mathcal{D}] 
		- \frac{1}{T} \sum_{t=1}^T g_{\mathbf{p}^T}(\tilde{\mathbf{Z}}^{(t)}, \tilde{\mathbf{y}}^{(t)})  \leq \frac{\epsilon}{2} \{a + b + \mathbb{E}[g_{\mathbf{p}^T}(\mathbf{Z, y})| \mathcal{D}_T] \}
		\Big) \Big\} \\
		=&\  \mathbb{E} \Bigg\{ \mathbb{I}\Big( \mathbb{E}[g_{\mathbf{p}^T}(\mathbf{Z, y})| 
		\mathcal{D}_T] - \frac{1}{T} \sum_{t=1}^T g_{\mathbf{p}^T}(\mathbf{Z}^{(t)}, \mathbf{y}^{(t)})  
		\geq \epsilon \{a + b + \mathbb{E}[g_{\mathbf{p}^T}(\mathbf{Z, y})| \mathcal{D}_T] \} \Big) \\
		&\  \times \mathbb{E} \Big\{\mathbb{I} \Big( \mathbb{E}[g_{\mathbf{p}^T}(\mathbf{Z, y})|
		\mathcal{D}_T] 
		- \frac{1}{T} \sum_{t=1}^T g_{\mathbf{p}^T}(\tilde{\mathbf{Z}}^{(t)}, \tilde{\mathbf{y}}^{(t)})  \leq \frac{\epsilon}{2} \{a + b + \mathbb{E}[g_{\mathbf{p}^T}(\mathbf{Z, y})| \mathcal{D}_T] \}
		\Big) \Big| \mathcal{D}_T \Big\} \Bigg\} \\
		\geq&\  \frac{7}{8} \mathbb{P} \Big\{ \mathbb{E}[g_{\mathbf{p}^T}(\mathbf{Z, y})| 
		\mathcal{D}_T] - \frac{1}{T} \sum_{t=1}^T g_{\mathbf{p}^T}(\mathbf{Z}^{(t)}, \mathbf{y}^{(t)})  \geq \epsilon \{a + b + \mathbb{E}[g_{\mathbf{p}^T}(\mathbf{Z, y})| \mathcal{D}_T] \} \Big\} \\
		=&\  \frac{7}{8} \mathbb{P} \Big\{ \exists \ \mathbf{p} \in \mathcal{F} : \ 
		\mathbb{E}g_{\bf p}(\mathbf{Z, y}) 
		- \frac{1}{T} \sum_{t=1}^T g_{\bf p}(\mathbf{Z}^{(t)}, \mathbf{y}^{(t)}) \geq \epsilon \{ a + b + \mathbb{E}g_{\bf p}(\mathbf{Z, y}) \} \Big\},
	\end{align*}
	where the last inequality is due to (\ref{symmetry-inequ}). Thus, we obtain the following symmetrization inequality
	\begin{align}\label{symmetric-inequality}
		&\  \mathbb{P} \Big\{ \exists \ \mathbf{p} \in \mathcal{F} : \ 
		\mathbb{E}g_{\bf p}(\mathbf{Z, y}) - \frac{1}{T} \sum_{t=1}^T g_{\bf p} (\mathbf{Z}^{(t)}, \mathbf{y}^{(t)}) \geq \epsilon \{ a + b + \mathbb{E}g_{\bf p}(\mathbf{Z, y}) \} \Big\} \nonumber \\
		\leq&\ \frac{8}{7} \mathbb{P} \left( \exists \ \mathbf{p} \in \mathcal{F}:
		\frac{1}{T} \sum_{t=1}^T \{ g_{\mathbf{p}}(\tilde{\mathbf{Z}}^{(t)}, \tilde{\mathbf{y}}^{(t)}) - g_{\mathbf{p}}(\mathbf{Z}^{(t)}, \mathbf{y}^{(t)}) \} \geq \frac{\epsilon}{2} \{a + b + \mathbb{E} g_{\mathbf{p}}(\mathbf{Z, y}) \} \right) 
	\end{align}
	for all $T \geq \frac{8 B_T}{(a + b)\epsilon^2}$.

	{\bf Step 2.} Replacement of the expectation $\mathbb{E} g_{\mathbf{p}}(\mathbf{Z, y})$ in (\ref{symmetric-inequality}) by a sample mean of $\tilde{\mathcal{D}}_T$.  
	By (\ref{symmetric-inequality}), it follows that 
	\begin{eqnarray}\label{symmetric-decomp}
		&&  \mathbb{P} \left( \exists \ \mathbf{p} \in \mathcal{F}:  \frac{1}{T} 
		\sum_{t=1}^T g_{\mathbf{p}}(\tilde{\mathbf{Z}}^{(t)}, \tilde{\mathbf{y}}^{(t)}) -  \frac{1}{T} \sum_{t=1}^T g_{\mathbf{p}}(\mathbf{Z}^{(t)}, \mathbf{y}^{(t)}) \geq \frac{\epsilon}{2} \{a + b + \mathbb{E} g_{\mathbf{p}}(\mathbf{Z, y}) \} \right) \nonumber \\
		&\leq& \mathbb{P} \Bigg( \exists \ \mathbf{p} \in \mathcal{F}:  \frac{1}{T} 
		\sum_{t=1}^T g_{\mathbf{p}}(\tilde{\mathbf{Z}}^{(t)}, \tilde{\mathbf{y}}^{(t)}) -  \frac{1}{T} \sum_{t=1}^T g_{\mathbf{p}}(\mathbf{Z}^{(t)}, \mathbf{y}^{(t)}) \geq \frac{\epsilon}{2} \{a + b + \mathbb{E} g_{\mathbf{p}}(\mathbf{Z, y}) \},  \nonumber \\
		&&   \frac{1}{T} \sum_{t=1}^T g^2_{\mathbf{p}}(\mathbf{Z}^{(t)},  
		\mathbf{y}^{(t)}) - \mathbb{E} g^2_{\mathbf{p}}(\mathbf{Z, y}) 
		\leq \epsilon \{a + b + \frac{1}{T} \sum_{t=1}^T g^2_{\mathbf{p}}(\mathbf{Z}^{(t)},  \mathbf{y}^{(t)}) + \mathbb{E} g^2_{\mathbf{p}}(\mathbf{Z, y}) \},\nonumber  \\
		&&   \frac{1}{T} \sum_{t=1}^T g^2_{\mathbf{p}}(\tilde{\mathbf{Z}}^{(t)},  
		\tilde{\mathbf{y}}^{(t)}) - \mathbb{E} g^2_{\mathbf{p}}(\mathbf{Z, y}) 
		\leq \epsilon \{a + b + \frac{1}{T} \sum_{t=1}^T g^2_{\mathbf{p}}(\tilde{\mathbf{Z}}^{(t)}, \tilde{\mathbf{y}}^{(t)}) 
		+ \mathbb{E} g^2_{\mathbf{p}}(\mathbf{Z, y}) \}  \Bigg)  \nonumber \\
		&&   + 2 \mathbb{P} \Bigg( \exists \ \mathbf{p} \in \mathcal{F}: 
		\frac{\frac{1}{T} \sum_{t=1}^T g^2_{\mathbf{p}}(\mathbf{Z}^{(t)},  
			\mathbf{y}^{(t)}) - \mathbb{E} g^2_{\mathbf{p}}(\mathbf{Z, y}) }{a + b + \frac{1}{T} \sum_{t=1}^T g^2_{\mathbf{p}}(\mathbf{Z}^{(t)},  \mathbf{y}^{(t)}) + \mathbb{E} g^2_{\mathbf{p}}(\mathbf{Z, y})}
		> \epsilon \Bigg)  \nonumber \\
		&=:& I_1 + 2 I_2.
	\end{eqnarray}
	
	For \( I_2 \), it follows from Theorem 11.6 of \citet{gyorfi2006distribution} that
	\begin{eqnarray}\label{inequality-I2}
		I_2 
		&=& \mathbb{P} \Bigg( \exists \ \mathbf{p} \in \mathcal{F}: 
		\frac{\frac{1}{T} \sum_{t=1}^T g^2_{\mathbf{p}}(\mathbf{Z}^{(t)},  
			\mathbf{y}^{(t)}) - \mathbb{E} g^2_{\mathbf{p}}(\mathbf{Z, y}) }{a + b + \frac{1}{T} \sum_{t=1}^T g^2_{\mathbf{p}}(\mathbf{Z}^{(t)},  \mathbf{y}^{(t)}) + \mathbb{E} g^2_{\mathbf{p}}(\mathbf{Z, y})} > \epsilon \Bigg) \nonumber \\
		&\leq& 4 \mathbb{E} \Big( \mathcal{N}( \frac{(a + b)\epsilon}{5},    
		\mathcal{F}_1, \|\cdot\|_{L_1, \mathcal{D}_T}) \exp(-\frac{3 \epsilon^2 (a + b) T}{40 B_T^2 }) \Big),
	\end{eqnarray}
	where $\mathcal{F}_1 = \{g_{\mathbf{p}}(\mathbf{Z}, \mathbf{y}):  \mathbf{p} \in \mathcal{F} \}$ and 
	$\|f-g\|_{L_1, \mathcal{D}_T} = \frac{1}{T} \sum_{t=1}^T | f(\mathbf{Z}^{(t)}, \mathbf{y}^{(t)}) - g(\mathbf{Z}^{(t)}, \mathbf{y}^{(t)})|$.

	Next we discuss the first probability $I_1$ in the decomposition (\ref{symmetric-decomp}).
	Note that the second and third inequalities in $I_1$ imply that 
	\begin{eqnarray*}
		(1-\epsilon) \frac{1}{T} \sum_{t=1}^T g^2_{\mathbf{p}}(\mathbf{Z}^{(t)}, 
		\mathbf{y}^{(t)}) - \epsilon(a+b) 
		\leq (1+\epsilon) \mathbb{E} g^2_{\mathbf{p}}(\mathbf{Z, y}),  \\
		(1-\epsilon) \frac{1}{T} \sum_{t=1}^T g^2_{\mathbf{p}} (\tilde{\mathbf{Z}}^{(t)}, \tilde{\mathbf{y}}^{(t)}) - \epsilon(a+b) 
		\leq (1+\epsilon) \mathbb{E} g^2_{\mathbf{p}}(\mathbf{Z, y}).
	\end{eqnarray*}
	Combining these with the fact $ \mathbb{E}[g_{\mathbf{p}}^2(\mathbf{Z, y})]
	\leq  B_T \mathbb{E}[g_{\mathbf{p}}(\mathbf{Z, y})]$, we have
	\begin{align}\label{symmetric-I1}
		I_1 
		=&\  \mathbb{P} \Bigg( \exists \ \mathbf{p} \in \mathcal{F}: 
		\frac{1}{T} \sum_{t=1}^T g_{\mathbf{p}}(\tilde{\mathbf{Z}}^{(t)}, \tilde{\mathbf{y}}^{(t)})
		- \frac{1}{T} \sum_{t=1}^T g_{\mathbf{p}}(\mathbf{Z}^{(t)}, \mathbf{y}^{(t)})
		\geq \frac{\epsilon}{2} \left\{ a + b + \mathbb{E} g_{\mathbf{p}}(\mathbf{Z}, \mathbf{y}) \right\}, \nonumber \\
		& \quad \frac{1}{T} \sum_{t=1}^T g_{\mathbf{p}}^2(\mathbf{Z}^{(t)}, \mathbf{y}^{(t)})
		- \mathbb{E} g_{\mathbf{p}}^2(\mathbf{Z}, \mathbf{y})
		\leq \epsilon \left\{ a + b + \frac{1}{T} \sum_{t=1}^T g_{\mathbf{p}}^2(\mathbf{Z}^{(t)}, \mathbf{y}^{(t)}) 
		+ \mathbb{E} g_{\mathbf{p}}^2(\mathbf{Z}, \mathbf{y}) \right\}, \nonumber \\
		& \quad \frac{1}{T} \sum_{t=1}^T g_{\mathbf{p}}^2(\tilde{\mathbf{Z}}^{(t)}, \tilde{\mathbf{y}}^{(t)})
		- \mathbb{E} g_{\mathbf{p}}^2(\mathbf{Z}, \mathbf{y})
		\leq \epsilon \left\{ a + b + \frac{1}{T} \sum_{t=1}^T g_{\mathbf{p}}^2(\tilde{\mathbf{Z}}^{(t)}, \tilde{\mathbf{y}}^{(t)}) 
		+ \mathbb{E} g_{\mathbf{p}}^2(\mathbf{Z}, \mathbf{y}) \right\} \Bigg) \nonumber \\
		\leq&\  \mathbb{P} \Bigg\{ \exists \ \mathbf{p} \in \mathcal{F}:
		\frac{1}{T} \sum_{t=1}^T \left[ g_{\mathbf{p}}(\tilde{\mathbf{Z}}^{(t)}, \tilde{\mathbf{y}}^{(t)}) 
		- g_{\mathbf{p}}(\mathbf{Z}^{(t)}, \mathbf{y}^{(t)}) \right]
		\geq \frac{\epsilon}{2} (a + b) \nonumber \\
		& \quad + \frac{\epsilon}{2} \Bigg\{
		\frac{1 - \epsilon}{2(1 + \epsilon) B_T} \left( 
		\frac{1}{T} \sum_{t=1}^T g_{\mathbf{p}}^2(\mathbf{Z}^{(t)}, \mathbf{y}^{(t)}) 
		+ \frac{1}{T} \sum_{t=1}^T g_{\mathbf{p}}^2(\tilde{\mathbf{Z}}^{(t)}, \tilde{\mathbf{y}}^{(t)}) 
		\right)
		- \frac{2 \epsilon (a + b)}{2(1 + \epsilon) B_T}
		\Bigg\} \Bigg\} \nonumber \\
		=&\  \mathbb{P} \Bigg\{ \exists \ \mathbf{p} \in \mathcal{F}:
		\frac{1}{T} \sum_{t=1}^T \left[ g_{\mathbf{p}}(\tilde{\mathbf{Z}}^{(t)}, \tilde{\mathbf{y}}^{(t)}) 
		- g_{\mathbf{p}}(\mathbf{Z}^{(t)}, \mathbf{y}^{(t)}) \right]
		\geq \frac{\epsilon}{2} (a + b) \nonumber \\
		& \quad + \frac{\epsilon (1 - \epsilon)}{4(1 + \epsilon) B_T} \cdot 
		\frac{1}{T} \sum_{t=1}^T \left[ g_{\mathbf{p}}^2(\mathbf{Z}^{(t)}, \mathbf{y}^{(t)}) 
		+ g_{\mathbf{p}}^2(\tilde{\mathbf{Z}}^{(t)}, \tilde{\mathbf{y}}^{(t)}) \right]
		- \frac{\epsilon^2 (a + b)}{2(1 + \epsilon) B_T} \Bigg\}.
	\end{align}

	{\bf Step 3.} Second symmetrization.
	
	Now we discuss the symmetrization for the last probability in (\ref{symmetric-I1}). Let $\{U_t: t =1, \cdots, T\}$ be an i.i.d sample following the uniform distribution on $\{-1, 1\}$.
	It is easy to see that 
	$g_{\mathbf{p}}(\tilde{\mathbf{Z}}^{(t)}, \tilde{\mathbf{y}}^{(t)}) -  g_{\mathbf{p}}(\mathbf{Z}^{(t)}, \mathbf{y}^{(t)})$ 
	has the same distribution with 
	$ U_t \{ g_{\mathbf{p}}(\tilde{\mathbf{Z}}^{(t)}, \tilde{\mathbf{y}}^{(t)}) - g_{\mathbf{p}}(\mathbf{Z}^{(t)}, \mathbf{y}^{(t)}) \} $.
	Therefore,
	\begin{eqnarray}\label{sec-symmetric-inequ}
		&&  \mathbb{P} \Bigg\{ \exists \ \mathbf{p} \in \mathcal{F}:  \frac{1}{T} 
		\sum_{t=1}^T \{ g_{\mathbf{p}}(\tilde{\mathbf{Z}}^{(t)}, \tilde{\mathbf{y}}^{(t)}) - g_{\mathbf{p}}(\mathbf{Z}^{(t)}, \mathbf{y}^{(t)}) \} \geq \frac{\epsilon}{2} (a + b) \nonumber \\
		&&   + \frac{\epsilon(1-\epsilon)}{4(1+\epsilon)B_T} \frac{1}{T} \sum_{t=1}^T  
		\{g^2_{\mathbf{p}}(\mathbf{Z}^{(t)}, \mathbf{y}^{(t)}) + g^2_{\mathbf{p}}(\tilde{\mathbf{Z}}^{(t)}, \tilde{\mathbf{y}}^{(t)}) \} - \frac{\epsilon^2 (a+b)}{2(1+\epsilon)B_T}  \Bigg\} \nonumber \\
		&=&  \mathbb{P} \Bigg\{ \exists \ \mathbf{p} \in \mathcal{F}:  \frac{1}{T} 
		\sum_{t=1}^T U_t \{ g_{\mathbf{p}}(\tilde{\mathbf{Z}}^{(t)}, \tilde{\mathbf{y}}^{(t)}) - g_{\mathbf{p}}(\mathbf{Z}^{(t)}, \mathbf{y}^{(t)}) \} \geq \frac{\epsilon}{2} (a + b)  \nonumber \\
		&&   + \frac{\epsilon(1-\epsilon)}{4(1+\epsilon)B_T} \frac{1}{T} \sum_{t=1}^T  
		\{g^2_{\mathbf{p}}(\mathbf{Z}^{(t)}, \mathbf{y}^{(t)}) + g^2_{\mathbf{p}}(\tilde{\mathbf{Z}}^{(t)}, \tilde{\mathbf{y}}^{(t)}) \} - \frac{\epsilon^2 (a+b)}{2(1+\epsilon)B_T}  \Bigg\} \nonumber \\
		&\leq& 2 \mathbb{P} \Bigg\{ \exists \ \mathbf{p} \in \mathcal{F}:  \Big| \frac{1}{T} 
		\sum_{t=1}^T U_t g_{\mathbf{p}}(\mathbf{Z}^{(t)}, \mathbf{y}^{(t)}) \Big| \geq \frac{\epsilon}{4} (a + b) \nonumber \\
		&&   + \frac{\epsilon(1-\epsilon)}{4(1+\epsilon)B_T} \frac{1}{T} \sum_{t=1}^T  
		g^2_{\mathbf{p}}(\mathbf{Z}^{(t)}, \mathbf{y}^{(t)}) - \frac{\epsilon^2 (a+b)}{4(1+\epsilon)B_T}  \Bigg\}. 
	\end{eqnarray}
	
	Now we study the last probability in (\ref{sec-symmetric-inequ}), conditioned on $\mathcal{D}_T$. 
    Fix $\mathcal{D}_T = \{ (\mathbf{Z}^{(t)}, \mathbf{y}^{(t)}): t = 1, \dots, T \}$, 
    let $\delta > 0$, and define $\mathcal{G}_{\delta} = \{ g_i : i = 1, \dots, M \}$ 
    as a $\delta$-cover of $\mathcal{F}_1 = \{ g_{\mathbf{p}} : \mathbf{p} \in \mathcal{F} \}$ 
    with respect to $\|\cdot\|_{L_1,\mathcal{D}_T}$, 
    where $M = N(\delta, \mathcal{F}_1, \|\cdot\|_{L_1,\mathcal{D}_T})$.

	Without loss of generalization, we assume that $\mathcal{G}_{\delta} \subset \mathcal{F}_1$. 
	For a given $\mathbf{p} \in \mathcal{F}$, there exist $g_i \in \mathcal{F}_1$ such that 
	$$ \frac{1}{T} \sum_{t=1}^T \big| g_i(\mathbf{Z}^{(t)}, \mathbf{y}^{(t)}) - g_{\mathbf{p}}(\mathbf{Z}^{(t)}, \mathbf{y}^{(t)}) \big| \leq \delta.
	$$
	This yields 
	\begin{eqnarray*}
		\Big| \frac{1}{T} \sum_{t=1}^T U_t g_{\mathbf{p}}(\mathbf{Z}^{(t)}, 
		\mathbf{y}^{(t)}) \Big|  
		&\leq& \Big| \frac{1}{T} \sum_{t=1}^T U_t g_i(\mathbf{Z}^{(t)}, \mathbf{y}^{(t)}) \Big|
		+ \frac{1}{T} \sum_{t=1}^T \big| g_i(\mathbf{Z}^{(t)}, \mathbf{y}^{(t)}) - g_{\mathbf{p}}(\mathbf{Z}^{(t)}, \mathbf{y}^{(t)}) \big| \\
		&\leq& \Big| \frac{1}{T} \sum_{t=1}^T U_t g_i(\mathbf{Z}^{(t)}, \mathbf{y}^{(t)}) \Big|
		+ \delta.
	\end{eqnarray*}
	Note that $|g_i(\mathbf{Z}^{(t)}, \mathbf{y}^{(t)})| \leq B_T$ and $|g_{\mathbf{p}}(\mathbf{Z}^{(t)}, \mathbf{y}^{(t)})| \leq B_T$. 
	Consequently, 
	\begin{eqnarray*}
		\frac{1}{T} \sum_{t=1}^T g^2_{\mathbf{p}}(\mathbf{Z}^{(t)}, \mathbf{y}^{(t)})
		&=& \frac{1}{T} \sum_{t=1}^T g^2_i(\mathbf{Z}^{(t)}, \mathbf{y}^{(t)})
		+  \frac{1}{T} \sum_{t=1}^T \{ g^2_{\mathbf{p}}(\mathbf{Z}^{(t)}, \mathbf{y}^{(t)}) - g^2_i(\mathbf{Z}^{(t)}, \mathbf{y}^{(t)}) \}  \\
		&\geq& \frac{1}{T} \sum_{t=1}^T g^2_i(\mathbf{Z}^{(t)}, \mathbf{y}^{(t)}) 
		- 2B_T \frac{1}{T} \sum_{t=1}^T |g_{\mathbf{p}}(\mathbf{Z}^{(t)}, \mathbf{y}^{(t)}) - g_i(\mathbf{Z}^{(t)}, \mathbf{y}^{(t)})| \\
		&\geq& \frac{1}{T} \sum_{t=1}^T g^2_i(\mathbf{Z}^{(t)}, \mathbf{y}^{(t)}) - 2 \delta 
		B_T.
	\end{eqnarray*}
	It follows that 
	\begin{eqnarray*}
		&& \mathbb{P} \Bigg\{ \exists \ \mathbf{p} \in \mathcal{F}:  
		\left| \frac{1}{T} \sum_{t=1}^T U_t g_{\mathbf{p}}(\mathbf{Z}^{(t)}, \mathbf{y}^{(t)}) \right| 
		\geq \frac{\epsilon}{4} (a + b) 
		+ \frac{\epsilon(1 - \epsilon)}{4(1 + \epsilon) B_T} 
		\cdot \frac{1}{T} \sum_{t=1}^T g_{\mathbf{p}}^2(\mathbf{Z}^{(t)}, \mathbf{y}^{(t)}) \\
		&& \qquad\qquad\qquad\qquad
		- \frac{\epsilon^2 (a + b)}{4(1 + \epsilon) B_T} 
		\ \Big| \ \mathcal{D}_T \Bigg\} \\[1ex]
		&& \leq \mathbb{P} \Bigg\{ \exists \ g_i \in \mathcal{G}_\delta: 
		\left| \frac{1}{T} \sum_{t=1}^T U_t g_i(\mathbf{Z}^{(t)}, \mathbf{y}^{(t)}) \right| 
		\geq -\delta + \frac{\epsilon}{4}(a + b) 
		- \frac{\epsilon^2(a + b)}{4(1 + \epsilon) B_T} \\
		&& \qquad\qquad\qquad\qquad
		+ \frac{\epsilon(1 - \epsilon)}{4(1 + \epsilon) B_T} \cdot 
		\left( \frac{1}{T} \sum_{t=1}^T g_i^2(\mathbf{Z}^{(t)}, \mathbf{y}^{(t)}) - 2 \delta B_T \right)
		\ \Big| \ \mathcal{D}_T \Bigg\} \\[1ex]
		&& \leq N(\delta, \mathcal{F}_1, \|\cdot\|_{L_1, \mathcal{D}_T}) \cdot 
		\max_{g_i \in \mathcal{G}_\delta} 
		\mathbb{P} \Bigg\{ 
		\left| \frac{1}{T} \sum_{t=1}^T U_t g_i(\mathbf{Z}^{(t)}, \mathbf{y}^{(t)}) \right| 
		\geq -\delta + \frac{\epsilon}{4}(a + b) 
		- \frac{\epsilon^2(a + b)}{4(1 + \epsilon) B_T} \\
		&& \qquad\qquad\qquad\qquad
		+ \frac{\epsilon(1 - \epsilon)}{4(1 + \epsilon) B_T} \cdot 
		\left( \frac{1}{T} \sum_{t=1}^T g_i^2(\mathbf{Z}^{(t)}, \mathbf{y}^{(t)}) - 2 \delta B_T \right) 
		\ \Big| \ \mathcal{D}_T \Bigg\}.
	\end{eqnarray*}
	
	Now set $\delta = \epsilon b/8$. Since $B_T \geq 2$ and $0 < \epsilon \leq 1/2$, it follows that
	$$ -\delta + \frac{\epsilon b}{4}- \frac{\epsilon^2 b} {4(1+\epsilon)B_T} - \frac{\epsilon(1-\epsilon) \delta}{2(1+\epsilon)} > 0.  $$
	
	Consequently, 
	\begin{eqnarray}\label{max-inequ}
		&& \mathbb{P} \Bigg\{ \exists \ \mathbf{p} \in \mathcal{F}:
		\left| \frac{1}{T} \sum_{t=1}^T U_t g_{\mathbf{p}}(\mathbf{Z}^{(t)}, \mathbf{y}^{(t)}) \right| 
		\geq \frac{\epsilon}{4} (a + b) 
		+ \frac{\epsilon(1 - \epsilon)}{4(1 + \epsilon) B_T} \cdot 
		\frac{1}{T} \sum_{t=1}^T g_{\mathbf{p}}^2(\mathbf{Z}^{(t)}, \mathbf{y}^{(t)}) \nonumber \\
		&& \qquad\qquad\qquad\qquad
		- \frac{\epsilon^2 (a + b)}{4(1 + \epsilon) B_T} 
		\ \Big| \ \mathcal{D}_T \Bigg\} \nonumber \\
		&& \leq N\left( \frac{\epsilon b}{8}, \mathcal{F}_1, \|\cdot\|_{L_1, \mathcal{D}_T} \right) 
		\cdot \max_{g_i \in \mathcal{G}_{\epsilon b / 8}} 
		\mathbb{P} \Bigg\{ 
		\left| \frac{1}{T} \sum_{t=1}^T U_t g_i(\mathbf{Z}^{(t)}, \mathbf{y}^{(t)}) \right| 
		\geq \frac{\epsilon a}{4} 
		- \frac{\epsilon^2 a}{4(1 + \epsilon) B_T} \nonumber \\
		&& \qquad\qquad\qquad\qquad
		+ \frac{\epsilon(1 - \epsilon)}{4(1 + \epsilon) B_T} \cdot 
		\frac{1}{T} \sum_{t=1}^T g_i^2(\mathbf{Z}^{(t)}, \mathbf{y}^{(t)})
		\ \Big| \ \mathcal{D}_T \Bigg\}.
	\end{eqnarray}

	{\bf Step 4.}  Applying Bernstein's inequality.
	
	We write 
	\begin{eqnarray*}
		&& W_t = U_t g_i(\mathbf{Z}^{(t)}, \mathbf{y}^{(t)}), \quad
		\sigma^2 = \frac{1}{T} \sum_{t=1}^T var\{W_t| \mathbf{Z}^{(t)}, 
		\mathbf{y}^{(t)}\},  \\
		&& A_1 = \frac{\epsilon a}{4} - \frac{\epsilon^2 a} 
		{4(1+\epsilon)B_T}, \quad 
		A_2 = \frac{\epsilon(1-\epsilon)}{4(1+\epsilon)B_T}
	\end{eqnarray*}
	Since $g_i \in \mathcal{F}_1$, $B_T \geq 2$, and $0 < \epsilon \leq 1/2$, it is easy to see that  $|W_t| \leq B_T$ and $A_1, A_2 > 0$. 
	Applying Bernstein's inequality, we have
	\begin{eqnarray}\label{condit-inequ}
		&&  \mathbb{P} \Bigg\{ \Big| \frac{1}{T} \sum_{t=1}^T U_t g_i(\mathbf{Z}^{(t)},      \mathbf{y}^{(t)}) \Big| \geq \frac{\epsilon a}{4} 
		- \frac{\epsilon^2 a}{4(1+\epsilon)B_T} 
		+ \frac{\epsilon(1-\epsilon)}{4(1+\epsilon)B_T} \frac{1}{T} 
		\sum_{t=1}^T g^2_i(\mathbf{Z}^{(t)}, \mathbf{y}^{(t)}) \Big| \mathcal{D}_T \Bigg\} \nonumber \\
		&=& \mathbb{P} \Bigg\{ \Big| \frac{1}{T} \sum_{t=1}^T W_t \Big| \geq A_1 + A_2 
		\sigma^2 \Big| \mathcal{D}_T \Bigg\} \nonumber \\
		&\leq& 2 \exp\Bigg( - \frac{T (A_1 + A_2 \sigma^2)^2}{2 \sigma^2 + \frac{2}{3} 
			B_T (A_1 + A_2 \sigma^2) } \Bigg) \nonumber \\
		&=&    2 \exp\Bigg(- \frac{TA_2^2}{2 B_T A_2/3} \cdot \frac{(A_1/A_2 + 
			\sigma^2)^2}{A_1/A_2 + (1 + 3/B_T A_2) \sigma^2} \Bigg).
	\end{eqnarray}
	Some elementary calculations show that for any $c_1, x >0$ and $c_2 \geq 2$, 
	\begin{eqnarray*}
		\frac{(c_1+x)^2}{c_1 + c_2x} 
		\geq \frac{[c_1+ c_1(c_2-2)/c_2]^2}{c_1 + c_2 \cdot c_1(c_2-2)/c_2} 
		= 4 c_1 \frac{c_2-1}{c_2^2}. 
	\end{eqnarray*}
	Now let $c_1 = A_1/A_2$, $x = \sigma^2$, and $c_2 = 1 + 3/B_T A_2$. Since $0 < \epsilon \leq 1/2$, it follows that
	$$ 1+ 3/B_T A_2 = 1+ \frac{12(1+\epsilon)}{\epsilon (1-\epsilon)}
	\geq 1 + 48 > 2. $$
	Consequently, 
	\begin{eqnarray*}
		\frac{TA_2^2}{2 B_T A_2/3} \cdot \frac{(A_1/A_2 + \sigma^2)^2}{A_1/A_2 + (1+ 3/B_T A_2) \sigma^2} 
		&\geq& \frac{TA_2^2}{2 B_T A_2/3} \cdot 4 \frac{A_1}{A_2} \frac{3/B_T A_2}{(1 + 3/B_T A_2)^2} \\
		&=&    \frac{18 T A_1A_2}{(3 + B_T A_2)^2}. 
	\end{eqnarray*}
	Also note that $ A_1 = \frac{\epsilon a}{4} - \frac{\epsilon^2 a}
	{4(1+\epsilon)B_T} $ 
	and $A_2 = \frac{\epsilon(1-\epsilon)}{4(1+\epsilon)B_T} $. 
	As $B_T \geq 2$ and $0 < \epsilon \leq 1/2$, it follows that
	$$ A_1 = \frac{\epsilon a}{4} \Big(1 - \frac{\epsilon}
	{(1+\epsilon)B_T} \Big) \geq \frac{\epsilon a}{4} \frac{3}{4} 
	= \frac{3\epsilon a}{16}. $$
	Then we obtain that
	\begin{eqnarray*}
		\frac{18 T A_1A_2}{(3 + B_T A_2)^2}
		&\geq& 18T \cdot \frac{3\epsilon a}{16} \cdot \frac{\epsilon(1-  
			\epsilon)}{4(1+\epsilon)B_T} \cdot \frac{1}{(3 + \frac{\epsilon(1-\epsilon)}{4(1+\epsilon)})^2 } \\
		&\geq& \frac{27T \epsilon^2(1-\epsilon) a}{32(1+\epsilon)B_T}
		\cdot \frac{1}{(3+1/16)^2}  \\
		&=&    \frac{27 \cdot 16^2}{32 \cdot 49^2} \frac{T \epsilon^2(1-\epsilon) 
			a}{(1+\epsilon)B_T} \\
		&\geq& \frac{\epsilon^2(1-\epsilon)}{12 (1+\epsilon)B_T} a T.
	\end{eqnarray*}
	Combining this with (\ref{condit-inequ}), we obtain
	\begin{eqnarray*}
		&&  \mathbb{P} \Bigg\{ \Big| \frac{1}{T} \sum_{t=1}^T U_t g_i(\mathbf{Z}^{(t)},      \mathbf{y}^{(t)}) \Big| \geq \frac{\epsilon a}{4} 
		- \frac{\epsilon^2 a}{4(1+\epsilon)B_T} 
		+ \frac{\epsilon(1-\epsilon)}{4(1+\epsilon)B_T} \frac{1}{T} 
		\sum_{t=1}^T g^2_i(\mathbf{Z}^{(t)}, \mathbf{y}^{(t)}) \Big| \mathcal{D}_T \Bigg\} \\
		&\leq&  2 \exp\Bigg(- \frac{\epsilon^2(1-\epsilon)}{12 (1+\epsilon)B_T} 
		a T \Bigg).
	\end{eqnarray*}
	Applying this result with (\ref{symmetric-I1}), (\ref{sec-symmetric-inequ}), and (\ref{max-inequ}), we obtain
	\begin{equation}\label{inequality-I1}
		I_1 \leq 4 \mathbb{E} \Bigg( N( \epsilon b/8, \mathcal{F}_1, 
		\|\cdot\|_{L_1, \mathcal{D}_T}) \exp\Big(- \frac{\epsilon^2(1-\epsilon)}{12 (1+\epsilon)B_T} 
		a T \Big) \Bigg).
	\end{equation}

	{\bf Step 5.} Bounding the covering numbers in (\ref{inequality-I2}) and (\ref{inequality-I1}). 
	
	Recall that $\mathcal{F}_1 = \{g_{\mathbf{p}}(\mathbf{Z}, \mathbf{y}) : \mathbf{p} \in \mathcal{F} \}$, where 
	$$ 
	g_{\mathbf{p}}(\mathbf{Z}, \mathbf{y}) = \mathbf{y}^\top \left( B \wedge \log   
	\frac{\mathbf{p}_0(\mathbf{Z})}{\mathbf{p}(\mathbf{Z})} \right)_{\geq \frac{C_T}{T}} 
	= \sum_{k=1}^K y_k \left[ B \wedge \log \left(   
	\frac{p_k^0(\mathbf{Z})}{p_k(\mathbf{Z})} \right) \right] 
	\mathbb{I} \left( p_k^0(\mathbf{Z}) \geq \frac{C_T}{T} \right). 
	$$
	Given $\mathcal{Z}_T = \{ \mathcal{Z}_1, \dots, \mathcal{Z}_T \}$, let $\log(\mathbf{p}^1), \dots, \log(\mathbf{p}^M)$ with 
	$ M= \mathcal{N}(\delta, \log({\mathcal{F}}), \|\cdot\|_{\infty, \mathcal{Z}_T})$ 
	be a $\delta$-covering of $\log(\mathcal{F})$ with respect to the norm $\|\cdot\|_{\infty, \mathcal{Z}_T}$. Without loss of generality, we assume that $\mathbf{p}^i \in \mathcal{F}$.
	For any $\mathbf{p} \in \mathcal{F}$, there exists an $\mathbf{p}^i$ such that 
	$$ \| \log(\mathbf{p}) - \log(\mathbf{p}^i) \|_{\infty, \mathcal{Z}_T} \leq \delta, $$
	where 
	$$ \| \log(\mathbf{p}) - \log(\mathbf{p}^i) \|_{\infty, \mathcal{Z}_T} = \max_{1 \leq t \leq T} \max_{1 \leq k \leq K} |\log(\mathbf{p}_k(Z^{(t)})) - \log(\mathbf{p}^i_k(Z^{(t)}))|. $$
	Consequently, 	
	\begin{eqnarray*}
		\| g_{\mathbf{p}} - g_{\mathbf{p}^i} \|_{L_1, \mathcal{D}_T} 
		&=&  \frac{1}{T} \sum_{t=1}^T |g_{\mathbf{p}}(\mathbf{Z}^{(t)}, \mathbf{y}^{(t)}) 
		-g_{\mathbf{p}^i}(\mathbf{Z}^{(t)}, \mathbf{y}^{(t)})|    \\
		&=&  \frac{1}{T} \sum_{t=1}^T \Bigg| \sum_{k=1}^K y_k^{(t)} \left[B 
		\wedge \log\left(\frac{p_k^0(\mathbf{Z}^{(t)})}{p_k(\mathbf{Z}^{(t)})} \right) \right] \mathbb{I} \left( p_k^0(\mathbf{Z}^{(t)}) \geq \frac{C_T}{T} \right) \\
		&&   - \sum_{k=1}^K y_k^{(t)} \left[B \wedge \log\left( 
		\frac{p_k^0(\mathbf{Z}^{(t)})} {p^i_k(\mathbf{Z}^{(t)})} \right) \right] \mathbb{I} \left( p_k^0(\mathbf{Z}^{(t)}) \geq \frac{C_T}{T} \right) \Bigg|  \\
		&\leq& \frac{1}{T} \sum_{t=1}^T \sum_{k=1}^K y_k^{(t)} 
		\Bigg| B\wedge \log\left(\frac{p_k^0(\mathbf{Z}^{(t)})}{p_k(\mathbf{Z}^{(t)})} \right) - B \wedge \log\left( \frac{p_k^0(\mathbf{Z}^{(t)})} {p^i_k(\mathbf{Z}^{(t)})} \right) \Bigg| \\
		&\leq& \frac{1}{T} \sum_{t=1}^T \sum_{k=1}^K y_k^{(t)} 
		\Big| \log\left( p_k(\mathbf{Z}^{(t)}) \right) - \log\left( p^i_k(\mathbf{Z}^{(t)}) \right) \Big| \\
		&\leq& \| \log(\mathbf{p}) - \log(\mathbf{p}^i) \|_{\infty, \mathcal{Z}_T} 
		\leq \delta.
	\end{eqnarray*}
	This implies that 
	$$ \mathcal{N}( \delta, \mathcal{F}_1, \|\cdot\|_{L_1, \mathcal{D}_T}) 
	\leq \mathcal{N}(\delta, \log({\mathcal{F}}), \|\cdot\|_{\infty, \mathcal{Z}_T}).
	$$
	
	{\bf Step 6.} Conclusion.
	
	The assertions (\ref{symmetric-inequality}), (\ref{inequality-I2}) and (\ref{inequality-I1}) imply, for $T \geq \frac{8 B_T}{(a + b)\epsilon^2}$, 
	\begin{eqnarray*}
		&&  \mathbb{P} \Big\{ \exists \ \mathbf{p} \in \mathcal{F} : \ 
		\mathbb{E}g_{\bf p}(\mathbf{Z, y}) - \frac{1}{T} \sum_{t=1}^T g_{\bf p} (\mathbf{Z}^{(t)}, \mathbf{y}^{(t)}) \geq \epsilon \{ a + b + \mathbb{E}g_{\bf p}(\mathbf{Z, y}) \} \Big\} \nonumber \\
		&\leq& \frac{32}{7} \mathbb{E} \Bigg( \mathcal{N}( \epsilon b/8, \mathcal{F}_1, 
		\|\cdot\|_{L_1, \mathcal{D}_T}) \exp\Big(- \frac{\epsilon^2(1-\epsilon)}{12 (1+\epsilon)B_T} a T \Big) \Bigg) \nonumber \\
		&&  + \frac{32}{7} \mathbb{E} \Bigg( \mathcal{N}( \frac{(a+b)\epsilon}{5},
		\mathcal{F}_1, \|\cdot\|_{L_1, \mathcal{D}_T}) \exp(-\frac{3 \epsilon^2 (a+b) T}{40 B_T^2}) \Bigg)  \nonumber \\
		&\leq& \frac{32}{7} \mathbb{E} \Bigg( \mathcal{N}( \epsilon b/8, 
		\log(\mathcal{F}), \|\cdot\|_{\infty, \mathcal{Z}_T}) \exp\Big(- \frac{\epsilon^2(1-\epsilon)}{12 (1+\epsilon)B_T} a T \Big) \Bigg) \nonumber \\
		&&  + \frac{32}{7} \mathbb{E} \Bigg( \mathcal{N}( \frac{(a+b)\epsilon}{5},
		\log(\mathcal{F}), \|\cdot\|_{\infty, \mathcal{Z}_T}) \exp(-\frac{3 \epsilon^2 (a+b) T}{40 B_T^2}) \Bigg)  \nonumber \\
		&\leq& 2 \cdot \frac{32}{7} \sup_{\mathcal{Z}_T} \mathcal{N}( \epsilon b/8, 
		\log(\mathcal{F}), \|\cdot\|_{\infty, \mathcal{Z}_T}) \exp \Bigg( -\frac{\epsilon^2 (1-\epsilon) a T}{15 (1+\epsilon) B_T^2} \Bigg)  \nonumber \\
		&\leq& 10 \sup_{\mathcal{Z}_T} \mathcal{N}( \frac{\epsilon b}{8}, 
		\log(\mathcal{F}), \|\cdot\|_{\infty, \mathcal{Z}_T}) \exp \Bigg( -\frac{\epsilon^2 (1-\epsilon) a T}{15 (1+\epsilon) B_T^2} \Bigg). 
	\end{eqnarray*}
	For $T < \frac{8 B_T}{(a + b)\epsilon^2}$, it is easy to see that 
	$$
	\exp \Bigg( -\frac{\epsilon^2 (1-\epsilon) a T}{15 (1+\epsilon) B_T^2} \Bigg) \geq 
	\exp \Bigg( -\frac{\epsilon^2 (1-\epsilon) a}{15 (1+\epsilon) B_T^2} 
	\cdot \frac{8 B_T}{(a + b)\epsilon^2}
	\Bigg) \geq 
	\exp(-\frac{8}{30}) \geq \frac{1}{10}.
	$$
	Then for all $T$, 
	\begin{eqnarray*}
		&&  \mathbb{P} \Big\{ \exists \ \mathbf{p} \in \mathcal{F} : \ 
		\mathbb{E}g_{\bf p}(\mathbf{Z, y}) - \frac{1}{T} \sum_{t=1}^T g_{\bf p} (\mathbf{Z}^{(t)}, \mathbf{y}^{(t)}) \geq \epsilon \{ a + b + \mathbb{E}g_{\bf p}(\mathbf{Z, y}) \} \Big\} \\
		&\leq& 10 \sup_{\mathcal{Z}_T} \mathcal{N}( \frac{\epsilon b}{8}, 
		\log(\mathcal{F}), \|\cdot\|_{\infty, \mathcal{Z}_T}) \exp \Bigg( -\frac{\epsilon^2 (1-\epsilon) a T}{15 (1+\epsilon) B_T^2} \Bigg). 
	\end{eqnarray*}
	This completes the proof of Lemma \ref{lem:excess-risk-bound}.
\end{proof}

The following lemma, adapted from Lemma 3.7 in \cite{bos2022convergence}, is used in Step 1 of the proof of Lemma \ref{lem:excess-risk-bound} with $m=2$. We call $(p_1,\dots,p_K) \in \mathbb{R}^K$ a probability vector if $\sum_{i=1}^K p_i = 1$ and $p_i \geq 0$ for all $1 \leq i \leq K$. 

\begin{lemma}\label{lem:trunc-ineq}
Given $B > 2$ and the integer $m \geq 2$. Then, for any two probability vectors $(p_1,\dots,p_K)$ and $(q_1,\dots,q_K)$, we have
\begin{equation*}
\sum_{k=1}^K p_k \left| B \wedge \log\frac{p_k}{q_k} \right|^m  \leq 
\max\left\{m!, \frac{B^m}{B - 1}\right\} \sum_{k=1}^K p_k \left(B \wedge \log\frac{p_k}{q_k}\right).
\end{equation*}
\end{lemma}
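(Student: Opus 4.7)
Write $t_k := B \wedge \log(p_k/q_k)$, so $t_k \leq B$ always, and split the sum according to the sign of $t_k$:
\$
\sum_{k=1}^K p_k |t_k|^m = \sum_{k : t_k \geq 0} p_k\, t_k^m + \sum_{k : t_k < 0} p_k\,(-t_k)^m .
\$
The plan is to treat each piece with a tailored pointwise inequality and then combine the two estimates using the simple identity $\sum_k (p_k - q_k) = 0$.

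\noindent\emph{Positive part.} For $k$ with $t_k \geq 0$ we have $t_k \in [0, B]$, so trivially $t_k^m \leq B^{m-1} t_k$. Because $\frac{B^m}{B-1} \geq B^{m-1}$ for $B > 1$, this already gives $t_k^m \leq \frac{B^m}{B-1} t_k$, which accounts for the first branch in the constant $\max\{m!, B^m/(B-1)\}$.

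\noindent\emph{Negative part.} For $k$ with $t_k < 0$, the truncation is inactive, i.e.\ $t_k = \log(p_k/q_k)$, so $p_k e^{-t_k} = q_k$. The key pointwise inequality is the Taylor-remainder bound
\$
y^m \leq m!\bigl(e^y - 1 - y\bigr) \qquad \text{for } y \geq 0,\ m \geq 2,
\$
which follows immediately from $e^y - 1 - y = \sum_{j\geq 2} y^j/j! \geq y^m/m!$. Applying this with $y = -t_k$ and multiplying by $p_k$ gives
\$
p_k(-t_k)^m \leq m!\,\bigl(q_k - p_k + p_k t_k\bigr),
\$
so that $\sum_{t_k < 0} p_k (-t_k)^m \leq m!\bigl(D + E_2\bigr)$, where $E_2 := \sum_{t_k<0} p_k t_k \leq 0$ and $D := \sum_{t_k<0}(q_k - p_k) = \sum_{t_k \geq 0}(p_k - q_k)$ by $\sum_k(p_k-q_k)=0$.

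\noindent\emph{Combination.} The two bounds yield
\$
\sum_k p_k |t_k|^m \leq \frac{B^m}{B-1}\, E_1 + m!\,(D + E_2), \qquad E_1 := \sum_{t_k \geq 0} p_k t_k.
\$
The crucial structural fact to extract the \emph{single} constant $\max\{m!, B^m/(B-1)\}$ on the right-hand side is the Gibbs-type inequality
\$
p_k t_k \geq p_k - q_k \qquad \text{for } t_k \geq 0,
\$
which follows from $\log u \geq 1 - 1/u$ for $u \geq 1$ in the non-truncated case and from $B \geq 1$ in the truncated case. Summing over $\{t_k \geq 0\}$ gives $D \leq E_1$; combined with $\sum_k p_k t_k = E_1 + E_2$ this leads to $D + E_2 \leq \sum_k p_k t_k$ and $E_1 \leq \sum_k p_k t_k + (E_1 - D - E_2 - E_1 + D) = \ldots$ so that a weighted rearrangement of the two contributions produces $\max\{m!, B^m/(B-1)\}\sum_k p_k t_k$ on the right.

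\noindent\textbf{Main obstacle.} The hardest step is the last one: the two pieces of the decomposition scale differently with the ``mass-transfer'' quantity $D$, and naively adding the two bounds only yields the looser constant $B^{m-1} + m!$. Showing that the two contributions can be \emph{merged} (rather than added) into a single $\max$ requires exploiting the first-order cancellation encoded in $D \leq E_1$ together with $D + E_2 \leq \sum_k p_k t_k$, so that the ``$D$-part'' of the negative bound is offset by the slack between $E_1$ and $\sum_k p_k t_k$ in the positive bound. Verifying this combinatorial accounting in both regimes $m! \leq B^m/(B-1)$ and $m! \geq B^m/(B-1)$ is where the constant in the lemma is finally pinned down.
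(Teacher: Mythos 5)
The paper does not supply its own proof of this lemma---it simply cites Lemma~3.7 of \citet{bos2022convergence}---so there is nothing to compare against directly. Judged on its own, your proposal has the right high-level idea (a pointwise Taylor-type estimate followed by summation using $\sum_k(p_k-q_k)=0$), and your negative-part estimate
\[
p_k(-t_k)^m \leq m!\bigl(q_k - p_k + p_k t_k\bigr)
\]
is exactly the correct ingredient. However, the proof does not close, and the gap is not a matter of ``combinatorial accounting'': the positive-part estimate $t_k^m \leq B^{m-1}t_k$ is simply too weak, and the final inequality you are trying to establish, namely $B^{m-1}E_1 + m!(D+E_2) \leq \max\{m!,B^m/(B-1)\}(E_1+E_2)$, is \emph{false} in general. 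To see this, take $E_2 = -D$ (attainable in the limit $p\to q$): the inequality collapses to $B^{m-1}E_1 \leq C(E_1-D)$, and since $E_1 - D \to 0$ of second order while $E_1 \to 0$ only of first order as $p\to q$, the left side dominates. A concrete numerical instance: $K=2$, $m=2$, $B=10$, $p=(0.9,0.1)$, $q=(0.8991,0.1009)$ gives $B^{m-1}E_1 + m!(D+E_2)\approx 9.0\times10^{-3}$ while $C(E_1+E_2)\approx 4.4\times10^{-5}$. The lemma itself holds here ($\sum p_k t_k^2\approx 8.9\times 10^{-6}$), so it is your intermediate bound, not the lemma, that breaks. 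The root cause is that $\sum_k p_k t_k$ is a Pinsker-type quantity, quadratic in $p-q$, whereas $B^{m-1}E_1$ is linear.

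The fix is to use for the positive part the \emph{same} structural estimate you already found for the negative part, so that everything telescopes. Specifically, one should show the pointwise inequality
\[
p_k\,t_k^m \;\leq\; \frac{B^m}{B-1}\bigl(p_k t_k - p_k + q_k\bigr)
\qquad\text{whenever } t_k\ge 0.
\]
If the truncation is active ($t_k=B$), this is immediate since $p_k t_k - p_k + q_k \ge p_k(B-1)$. If the truncation is inactive ($t_k=\log(p_k/q_k)\in[0,B]$), write $g(t):=t-1+e^{-t}$, so that $p_k t_k - p_k + q_k = p_k g(t_k)$, and verify that $t\mapsto t^m/g(t)$ is nondecreasing on $(0,\infty)$ when $m\ge 2$: writing $\phi(t)=mg(t)-tg'(t)$, one checks $\phi(0)=\phi'(0)=0$ and $\phi''(t)=(m-2+t)e^{-t}\ge 0$, hence $\phi\ge 0$ and the ratio is monotone. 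Its supremum over $(0,B]$ is then $B^m/g(B)\le B^m/(B-1)$. Combined with your negative-part bound $p_k(-t_k)^m\le m!\,(p_k t_k - p_k + q_k)$, every term satisfies $p_k|t_k|^m\le \max\{m!,B^m/(B-1)\}\,(p_k t_k - p_k + q_k)$, and summing over $k$ kills the $-p_k+q_k$ terms exactly, giving the lemma in one line---no sign-splitting or $D,E_1,E_2$ bookkeeping required.
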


\section{Proof of Theorem~\ref{thm:condition_approximation}} \label{app:proof_condition_approx}
The following lemma is a key component in the proof of Theorem~\ref{thm:condition_approximation}. It provides an efficient approximation of the logarithm function by a scalar-valued neural network with controlled complexity. This result is adapted from Theorem~4.1 of \citet{bos2022convergence}, with some modifications to the notations.

\begin{lemma}[Theorem~4.1 of \citet{bos2022convergence}]\label{lem:scalar-log}
For any \(\varepsilon \in (0,1)\) and \(\beta > 0\), there exists a scalar-valued ReLU neural network \(G : [0,1] \to \mathbb{R}\) with the depth \(L\), width \(W\), and size \(S\) (the total number of nonzero parameters) satisfying
\[
L \lesssim \log\left(\frac{1}{\varepsilon}\right), \quad
W \lesssim \left(\frac{1}{\varepsilon}\right)^{\frac{1}{\beta}}, \quad
S \lesssim \left(\frac{1}{\varepsilon}\right)^{\frac{1}{\beta}} \log\left(\frac{1}{\varepsilon}\right),
\]
such that
\[
\left| e^{G(x)} - x \right| \leq \varepsilon, \quad \text{and} \quad G(x) \geq \log(\varepsilon), \quad \text{for all } x \in [0,1].
\]
\end{lemma}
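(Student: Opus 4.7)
The strategy is to exploit the additive form of the target error $|e^{G(x)}-x|\le\varepsilon$, which is much more lenient near $x=0$ than a multiplicative log-error would be. Concretely, I would set
\[
G(x) := \max\!\bigl(H(\Pi(x)),\,\log\varepsilon\bigr),\qquad \Pi(x):=\mathrm{ReLU}(x-\varepsilon/2)+\varepsilon/2,
\]
where $\Pi$ is a one-layer ReLU clip into $[\varepsilon/2,1]$ with $|\Pi(x)-x|\le\varepsilon/2$; $H:[\varepsilon/2,1]\to\mathbb R$ is a ReLU network approximating $\log$ to uniform accuracy $\varepsilon/4$; and the outer $\max$ is a single ReLU layer enforcing the pointwise lower bound $G\ge\log\varepsilon$. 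Writing $\eta(y)=H(y)-\log y$ with $|\eta(y)|\le\varepsilon/4$, one has $|e^{H(y)}-y|=y|e^{\eta(y)}-1|\le 2y|\eta(y)|\le y\varepsilon/2$. Combined with $|\Pi(x)-x|\le\varepsilon/2$, this gives $|e^{G(x)}-x|\le\varepsilon$ whenever the outer clip is inactive; when it is active one has $e^{G(x)}=\varepsilon$, and the event $H(\Pi(x))<\log\varepsilon$ forces $\Pi(x)<e^{\varepsilon/4}\varepsilon\le 2\varepsilon$, hence $x\le 2\varepsilon$, so $|\varepsilon-x|\le\varepsilon$ still holds. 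The depth, width, and sparsity bounds for $G$ are therefore inherited from those of $H$ up to $O(1)$ additive terms.

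\textbf{Constructing $H$ on $[\varepsilon/2,1]$.} A direct Yarotsky-type approximation of $\log$ on this interval is too expensive, because the $k$-th derivatives blow up like $\varepsilon^{-k}$ and the Hölder radius after affine rescaling to $[0,1]$ grows polynomially in $\varepsilon^{-1}$, inflating the width well beyond $\varepsilon^{-1/\beta}$. The standard remedy is a dyadic localization: with $K=\lceil\log_2(2/\varepsilon)\rceil$, use the scaling identity $\log y = -k\log 2 + \log(2^k y)$ on $I_k:=[2^{-k-1},2^{-k}]$, where $2^k y\in[1/2,1]$. Restricted to the fixed interval $[1/2,1]$, $\log$ is analytic and so belongs to $C^\beta([1/2,1],Q_\beta)$ with $Q_\beta=O_\beta(1)$ for every $\beta>0$. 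By the Yarotsky/Schmidt-Hieber construction (Theorem~1 of \citet{yarotsky2017error}; Theorem~5 of \citet{schmidt2020nonparametric}), there exists a single reusable ReLU network $\Phi:[1/2,1]\to\mathbb R$ of depth $O(\log(1/\varepsilon))$, width $O(\varepsilon^{-1/\beta})$, and size $O(\varepsilon^{-1/\beta}\log(1/\varepsilon))$ satisfying $\|\Phi-\log\|_{\infty,[1/2,1]}\le\varepsilon/8$.

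\textbf{Assembly and the main obstacle.} The principal difficulty is to stitch the $K+1$ local approximations into one ReLU network whose total size remains $O(\varepsilon^{-1/\beta}\log(1/\varepsilon))$, rather than $O(\varepsilon^{-1/\beta}\log^{2}(1/\varepsilon))$ which would arise from $K$ parallel copies of $\Phi$. The natural normalization map $y\mapsto 2^{k(y)}y$ is discontinuous across adjacent dyadic breakpoints and so is not exactly realizable by a ReLU network. I would circumvent this with a trapezoidal ReLU partition of unity $\{\psi_k\}_{k=0}^K$ of $O(\log(1/\varepsilon))$ total parameters, supported on slightly overlapping enlargements $J_k\supset I_k$ so that on each $J_k$ the rescaled input $2^k y$ lies in $[1/4,1]$; a single shared copy of $\Phi$ (extended at constant cost from $[1/2,1]$ to $[1/4,1]$) together with a shallow ReLU switching gadget then selects the active branch, and the offsets $-k\log 2$ are added as affine bias terms. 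The delicate step is verifying that the switching gadget can realize this continuous branch selection without inflating the parameter count past the budget, and that the overlap widths of $\psi_k$ are small enough that the transition error (bounded by the Lipschitz constant of $\Phi$ times the overlap width) stays below $\varepsilon/8$. This is the only place where careful bookkeeping is needed; once it is done, the standard ReLU composition lemmas aggregate the depth, width, and sparsity contributions of $\Pi$, the partition of unity, the switching gadget, the single copy of $\Phi$, and the outer $\max$ into the claimed bounds $L\lesssim\log(1/\varepsilon)$, $W\lesssim\varepsilon^{-1/\beta}$, $S\lesssim\varepsilon^{-1/\beta}\log(1/\varepsilon)$.
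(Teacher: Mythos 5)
The paper does not prove this lemma; it is imported verbatim as Theorem~4.1 of \citet{bos2022convergence}, so there is no ``paper proof'' to compare against. Your outer reduction --- clipping to $[\varepsilon/2,1]$ via $\Pi$, building $H\approx\log$ there, wrapping with $\max(\cdot,\log\varepsilon)$, and the two-case error check --- is sound, and your diagnosis that a single Yarotsky approximation of $\log$ on $[\varepsilon/2,1]$ blows up the width is also correct.

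The gap is in the assembly: the ``single shared copy of $\Phi$ plus a switching gadget'' cannot work as described. A ReLU network is continuous, so whatever the gadget feeds into $\Phi$ must be a continuous function $\tilde y(y)$, and if it equals $2^k y$ on the bulk of $J_k$ and $2^{k+1}y$ on the bulk of $J_{k+1}$, then across the overlap $\tilde y$ must sweep a range of order $1/2$ (from $\approx 1$ down to $\approx 1/2$) \emph{no matter how narrow the overlap is}. Your claim that the transition error is ``bounded by the Lipschitz constant of $\Phi$ times the overlap width'' is therefore false: the relevant jump is in $\tilde y$, not in $y$, so the error is $\Theta(\mathrm{Lip}(\Phi))=\Theta(1)$, not $O(\text{overlap width})$. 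Concretely, at $y=2^{-k-1}$ with $\psi_k=\psi_{k+1}=1/2$ one gets $\tilde y=3/4$ and offset $-(k+\tfrac12)\log 2$, giving $H(y)-\log y \approx \log(3/4)+\tfrac12\log 2 \approx 0.06$, an $O(1)$ error. This is Jensen's inequality biting: $\Phi\bigl(\sum_k\psi_k\,2^k y\bigr)-\sum_k\psi_k\,k\log 2$ is not $\sum_k\psi_k\bigl[\Phi(2^k y)-k\log 2\bigr]$, and only the latter is within $O(\varepsilon 2^k)$ of $\log y$. The standard repair is exactly what you argued against for budget reasons but in fact fits: keep $K\lesssim\log(1/\varepsilon)$ \emph{separate} copies $\Phi_k$, but give $\Phi_k$ target accuracy $\varepsilon 2^k$ (which is all that $|e^H-y|\le\varepsilon/2$ requires on $\operatorname{supp}\psi_k$), hence width $\sim(\varepsilon 2^k)^{-1/\beta}$. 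The widths are geometrically summable, $\sum_k(\varepsilon 2^k)^{-1/\beta}\lesssim\varepsilon^{-1/\beta}$, so the total width, size, and depth land on the claimed bounds, and the blend $\sum_k\psi_k\Phi_k$ via Yarotsky multiplication gadgets has only $O(\log^2(1/\varepsilon))$ extra parameters.
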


\begin{remark}\label{upp-bound-mlp}
Note that the neural network $G$ constructed in Lemma~\ref{lem:scalar-log} also satisfies $G(x) \leq \log(x+\varepsilon) \leq \log(2)$. This upper bound of $G(x)$ will be used in the proof of Theorem~\ref{thm:condition_approximation}.
\end{remark}

\begin{proof}[Proof of Theorem~\ref{thm:condition_approximation}]
By composing the neural networks established in  Lemma~\ref{lem:transformer_approximation} and Lemma~\ref{lem:scalar-log}, we construct a neural network \(\mathbf{G}(\boldsymbol{\phi}) = \big(G(\phi_1), \dots, G(\phi_K)\big)\) such that, for any $ \varepsilon \in (0,1)$, 
\begin{eqnarray*}
	  \| \boldsymbol{\phi} - \mathbf{p}_0 \|_{\infty} \leq \varepsilon 
	  \quad {\rm and} \quad 
	  \| e^{\mathbf{G}(\boldsymbol{\phi})} - \boldsymbol{\phi} \|_{\infty} \leq \varepsilon,
\end{eqnarray*}
where \(\boldsymbol{\phi} = (\phi_1, \phi_2, \dots, \phi_K)^\top \in \mathcal{T}(M, D, H, W_1, B_1, S_1, \gamma ) \) and $G \in \mathcal{F}_{\mathrm{id}}(L, W, S)$ are provided in Lemma~\ref{lem:transformer_approximation} and Lemma~\ref{lem:scalar-log}, respectively. Here we temporarily allow the notation $\mathcal{F}_{\mathrm{id}}(L, W, S)$ to refer to scalar-valued networks with depth \(L\), width \(W\), and size \(S\).
It is readily seen that $\mathbf{G} \in \mathcal{F}_{\mathrm{id}}(L, KW, KS)$. 
Consequently, for any \(k = 1, \dots, K\), we have
\[
\left\|e^{G(\phi_k)} - p_k^0\right\|_{\infty} \leq \left\|e^{G(\phi_k)} - \phi_k \right\|_{\infty} + \left\|\phi_k - p_k^0\right\|_{\infty} \leq 2\varepsilon.
\]
Now, define the vector-valued function \(\widetilde{\mathbf{q}} = \big(\widetilde{q}_1, \cdots, \widetilde{q}_K\big)^\top\) component-wisely as
\[
\widetilde{q}_k(\mathbf{Z}) = \frac{e^{G(\phi_k(\mathbf{Z}))}}{\sum_{j=1}^{K} e^{G(\phi_j(\mathbf{Z}))}}, \quad k = 1, \dots, K.
\]
To bound \(\left\|\widetilde{q}_k - p_k^0\right\|_{\infty}\), note that \(\mathbf{p}_0 = \big(p_1^0, \dots, p_K^0\big)\) is a probability vector, and apply the triangle inequality to obtain
\begin{eqnarray*}
	 \left\|\widetilde{q}_k - p_k^0\right\|_{\infty}  
	 &\leq& \left\|e^{G(\phi_k)} \left(\frac{1}{\sum_{j=1}^K e^{G(\phi_j)}} - 1 \right)
	        \right\|_{\infty} + \left\|e^{G(\phi_k)} - p_k^0\right\|_{\infty} \\
	 &=& \left\|e^{G(\phi_k)} \left(\frac{\sum_{\ell=1}^K p_\ell^0}{\sum_{j=1}^K 
	     e^{G(\phi_j)}} - 1\right)\right\|_{\infty} + \left\|e^{G(\phi_k)} - p_k^0\right\|_{\infty} \\
	 &\leq& \sum_{\ell=1}^K \left\|p_\ell^0 - e^{G(\phi_\ell)} 
	     \right\|_{\infty}  \left\|\frac{e^{G(\phi_k)}}{\sum_{j=1}^K e^{G(\phi_j)}} \right\|_{\infty} + \left\|e^{G(\phi_k)} - p_k^0\right\|_{\infty} \\
	 &\leq& 2K\varepsilon + 2\varepsilon \\
     &=& 2\varepsilon(K + 1).
\end{eqnarray*}
	 
Next, we investigate the low bound of $ \widetilde{q}_k(\mathbf{Z}) $. 
According to Lemma ~\ref{lem:scalar-log} and Remark~\ref{upp-bound-mlp}, we have 
$\log(\varepsilon) \leq G(\phi_k) \leq \log(2)$. 
Consequently,
\[
\widetilde{q}_k(\mathbf{Z}) 
= \frac{e^{G(\phi_k(\mathbf{Z}))}}{\sum_{j=1}^{K} e^{G(\phi_j(\mathbf{Z}))}}
\geq \frac{\varepsilon}{2K}.
\]
This completes the proof of Theorem~\ref{thm:condition_approximation}.
\end{proof}

\begin{remark}\label{lowbound-F}
According to the proof of Theorem~4.1 of \citet{bos2022convergence}, for any ReLU neural network $G_0$ and $\varepsilon \in (0,1)$, we can always construct a network  $G(x)$ by adding additional ReLU layers to $G_0$ such that 
$ G(x) = \{ G_0(x) \vee \log(\varepsilon) \} \wedge \log(2) $, 
where $a \vee b = \max\{a, b\}$ and $a \wedge b = \min\{a, b\}$. 
If $|e^{G_0(x)} - x| \leq \varepsilon$ for all $x \in [0,1]$, then $|e^{G(x)} - x| \leq \varepsilon$ for all $x \in [0,1]$. This construction also ensures $ \log(\varepsilon) \leq G(x) \leq \log(2) $. Therefore, by applying this construction to the MLP architecture in the network of $\mathcal{F}$ and adapting the arguments in proof of Theorem~\ref{thm:condition_approximation}, we can ensure that for any network $\mathbf{p} = (p_1, \dots, p_K) \in \mathcal{F}$,
$$ p_k \geq \frac{\varepsilon}{2K}, \quad k=1, \dots, K, $$
where $\mathcal{F} = \mathcal{F}(M, H, D, W_1, B_1, S_1, \gamma, L, W, S)$ is defined in~(\ref{eq:function_class}).
\end{remark}

\section{Proof of Theorem~\ref{thm:inf_Risk}}\label{app:proof_inf_Risk}
To establish an upper bound for the approximation error $\inf_{\mathbf{p} \in \mathcal{F}} R(\mathbf{p}_0, \mathbf{p})$ in terms of the Kullback-Leibler (KL) divergence, we first provide two Lemmas that will be used in the proof of Theorem \ref{thm:inf_Risk}.
\begin{lemma}\label{thm:pk}
	Let $\mathbf{p}(\mathbf{Z}) = (p_1(\mathbf{Z}), \dots, p_K(\mathbf{Z}))^{\top}$ be a conditional class probability function. Suppose there exist constants $\alpha \ge 0$ and $C < \infty$ such that
	\[
	\mathbb{P}_{\mathbf{Z}}\bigl(p_k(\mathbf{Z})\le t\bigr) \le C\,t^\alpha,\quad \forall t\in[0,1],\, k\in\{1,\dots,K\}.
	\]
	Then for any $h \in (0,1]$, it holds that
	\[
	\int_{\{p_k(\mathbf{z})\ge h\}}\frac{1}{p_k(\mathbf{z})}\, d\mathbb{P}_{\mathbf{Z}}(\mathbf{z}) \;\le\;
	\begin{cases}
		\dfrac{C\,h^{\alpha-1}}{1-\alpha}, & 0\le \alpha <1,\\[6pt]
		C\bigl(1-\log h \bigr), & \alpha \ge 1.
	\end{cases}
	\]
\end{lemma}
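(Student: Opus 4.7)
\medskip

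\noindent\textbf{Proof proposal for Lemma~\ref{thm:pk}.}
The plan is to reduce the integral to one involving only the distribution function of $p_k(\mathbf{Z})$ and then exploit the small-value bound $F(t) := \mathbb{P}_{\mathbf{Z}}(p_k(\mathbf{Z}) \le t) \le C t^\alpha$. Concretely, I would rewrite
\[
\int_{\{p_k(\mathbf{z})\ge h\}}\frac{1}{p_k(\mathbf{z})}\, d\mathbb{P}_{\mathbf{Z}}(\mathbf{z})
= \int_h^1 \frac{1}{t}\, dF(t)
\]
and then apply integration by parts (using that $p_k\in[0,1]$, so $F(1)\le 1$ and the boundary contribution at $t=1$ is finite) to obtain
\[
\int_h^1 \frac{1}{t}\, dF(t)
= F(1) - \frac{F(h)}{h} + \int_h^1 \frac{F(t)}{t^2}\, dt
\le 1 + \int_h^1 \frac{F(t)}{t^2}\, dt.
\]

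Next, I would insert the SVB bound $F(t)\le C t^\alpha$ into the remaining integral, giving
\[
\int_h^1 \frac{1}{t}\, dF(t) \;\le\; 1 + C\int_h^1 t^{\alpha-2}\, dt,
\]
and evaluate the right-hand side in the two regimes stated in the lemma. For $0\le \alpha<1$ the integral equals $(h^{\alpha-1}-1)/(1-\alpha)$; since $h\le 1$ forces $h^{\alpha-1}\ge 1$, the additive ``$1$'' can be absorbed into the generic constant, yielding the bound $C h^{\alpha-1}/(1-\alpha)$. For $\alpha=1$ the integral equals $-\log h$, producing $1 - C\log h \le C(1-\log h)$. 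For $\alpha>1$ the integrand decays fast enough that $C\int_h^1 t^{\alpha-2}\,dt$ is bounded by a constant independent of $h$, which is trivially dominated by $C(1-\log h)$, so the same bound holds.

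I do not foresee any serious obstacles here; the only subtlety is taking care to redefine the generic constant $C$ when absorbing additive terms, and remembering that the SVB inequality is consistent at $t=1$ only up to enlarging $C$, so the constant in the conclusion may differ from the one in the hypothesis but can be denoted by the same symbol per the paper's convention. The integration-by-parts step is the most delicate one because $F$ may fail to be absolutely continuous, but this is a standard application of the Lebesgue--Stieltjes integration by parts formula for right-continuous non-decreasing $F$ on $[h,1]$ and causes no real difficulty.
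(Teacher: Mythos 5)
Your approach is a valid alternative: you rewrite the integral as a Lebesgue--Stieltjes integral against the law $F$ of $p_k(\mathbf{Z})$ and integrate by parts, whereas the paper uses the layer-cake identity $\int g\,d\mathbb{P} = \int_0^\infty \mathbb{P}(g\ge u)\,du$ applied to $g = p_k^{-1}\mathbb{I}(p_k\ge h)$. The two are closely related (they transform into one another under $u=1/t$), but yours is a genuinely different presentation, and the integration-by-parts step itself is fine for non-decreasing right-continuous $F$ as you note. For $0\le\alpha\le 1$ your computation is correct, and in fact you can keep the \emph{same} constant $C$ rather than enlarging it: the SVB hypothesis evaluated at $t=1$ gives $1=F(1)\le C$, hence $C\ge 1\ge 1-\alpha$, so $1 - C/(1-\alpha)\le 0$ and $1 + C\frac{h^{\alpha-1}-1}{1-\alpha}\le \frac{Ch^{\alpha-1}}{1-\alpha}$; similarly $1 - C\log h\le C(1-\log h)$.

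There is a real gap, however, in the $\alpha>1$ case. Inserting $F(t)\le Ct^\alpha$ directly gives $1 + C\int_h^1 t^{\alpha-2}\,dt = 1 + \frac{C(1-h^{\alpha-1})}{\alpha-1}\le 1 + \frac{C}{\alpha-1}$. You claim this is ``trivially dominated by $C(1-\log h)$,'' but that is false as stated: the quantity $\frac{C}{\alpha-1}$ blows up as $\alpha\downarrow 1$, while $C(1-\log h)$ is $\alpha$-independent and at $h=1$ equals just $C$. You would at best obtain $C'(1-\log h)$ with $C'\asymp C/(\alpha-1)$, which is strictly weaker than the lemma and degenerates near $\alpha=1$. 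The paper avoids this by observing that for $t\in[0,1]$ and $\alpha\ge 1$ one has $Ct^\alpha\le Ct$, so the entire $\alpha\ge 1$ regime reduces to the $\alpha=1$ computation, yielding the clean bound $C(1-\log h)$ with the original $C$. Applying this simple monotonicity step \emph{before} integrating closes the gap in your argument.
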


\begin{proof}
Using the standard layer-cake representation, we have for a nonnegative measurable function $g$:
\[
\int g(\mathbf{z})\,d\mathbb{P}_{\mathbf{Z}}(\mathbf{z}) = \int_{0}^{\infty}\mathbb{P}_{\mathbf{Z}}\bigl(g(\mathbf{Z}) \ge u\bigr)\,du.
\]
Write \( g(\mathbf{z}) = \frac{1}{p_k(\mathbf{z})}\mathbb{I}( p_k(\mathbf{z}) \ge h) \), it follows that
\[
\int_{\{p_k(\mathbf{z})\ge h\}}\frac{1}{p_k(\mathbf{z})}\, d\mathbb{P}_{\mathbf{Z}}(\mathbf{z})
=\int_{0}^{\infty}\mathbb{P}_{\mathbf{Z}}\left(\frac{1}{p_k(\mathbf{Z})}\mathbb{I}( p_k(\mathbf{z}) \ge h) \ge u\right)\,du.
\]
Since
	\[
	\frac{1}{p_k(\mathbf{Z})} \mathbb{I}( p_k(\mathbf{z}) \ge h) \ge u >0
	\quad \Longrightarrow\quad
	h \leq p_k(\mathbf{Z})\le\frac{1}{u},
	\]
it follows that $u \in (0, 1/h]$ and 
	\[
	\int_{\{p_k(\mathbf{z})\ge h\}}\frac{1}{p_k(\mathbf{z})}\, d\mathbb{P}_{\mathbf{Z}}(\mathbf{z}) 
	\le \int_{0}^{1/h}\mathbb{P}_{\mathbf{Z}}\bigl(p_k(\mathbf{Z})\le\tfrac{1}{u}\bigr)\,du.
	\]
To complete the proof, we analyze two cases separately:
	
	\medskip
	\textbf{Case 1:} $0\le \alpha<1$. Note that $C \geq 1$. Then, for all $u \in (0, 1/h]$, we have
	\[
	\mathbb{P}_{\mathbf{Z}}\bigl(p_k(\mathbf{Z})\le\tfrac{1}{u}\bigr)\le C\,u^{-\alpha},
	\]
	and thus
	\[
	 \int_{\{p_k(\mathbf{z})\ge h\}}\frac{1}{p_k(\mathbf{z})}\, d\mathbb{P}_{\mathbf{Z}}(\mathbf{z}) \leq \int_{0}^{1/h}C\,u^{-\alpha}\,du = \frac{C\,h^{\alpha-1}}{1-\alpha}.
	\]
	
	\textbf{Case 2:} $\alpha \geq 1$. It follows that $\mathbb{P}_{\mathbf{Z}}(p_k(\mathbf{Z}) \leq t) \leq C t^{\alpha} \leq C t$ for all $t \in [0,1]$ and $C \geq 1$. Then we can bound the integral as follows:
	\begin{align*}
		\int_{\{p_k(\mathbf{z}) \geq h\}} \frac{d\mathbb{P}_{\mathbf{Z}}(\mathbf{z})}{p_k(\mathbf{z})}
		&\leq \int_0^{1/h} \mathbb{P}_{\mathbf{Z}}\left( p_k(\mathbf{Z}) \leq \frac{1}{u} \right) du \\
		&\leq \int_0^{1/h} \min\left( 1, \frac{C}{u} \right) du \\
		&= \int_0^C 1 \, du + \int_C^{1/h} \frac{C}{u} \, du \\
		&= C + C \left( \log \left( \frac{1}{h} \right) - \log C \right) \\
		&= C \left( 1 - \log h - \log C \right) \\
		&\leq  C (1 - \log h ),
	\end{align*}
	where the last inequality is due to $C \geq 1$.
	Combining both cases completes the proof.
\end{proof}

The next lemma provides the relationship between the KL divergence and the Chi-square divergence.
	
\begin{lemma}[KL-Chi-square Divergence Inequality]\label{KL-Chi-square}
Let $P$ and $Q$ be two probability measures satisfying $P \ll Q$ with the Radon-Nikodym derivative \( g(z) = \frac{dP}{dQ}(z) \).
Then, the following inequality holds:
		\[
		\mathrm{KL}(P \| Q) \leq \chi^2(P \| Q),
		\]
		where $ \mathrm{KL}(P \| Q) = \mathbb{E}_Q[g(z) \log g(z)]$ and $\chi^2(P \| Q) = \mathbb{E}_Q[(g(z)-1)^2]$.
	\end{lemma}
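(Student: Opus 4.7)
The plan is to deduce the inequality from the elementary scalar bound $\log x \le x-1$, valid for all $x>0$, combined with the fact that $g=dP/dQ$ integrates to one under $Q$. This is the standard route for comparing $f$-divergences when the generator function of one divergence dominates that of another.

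First, I would observe that for any $x > 0$ one has $\log x \le x - 1$, and hence, after multiplying both sides by $g(z) \ge 0$,
\[
g(z)\log g(z) \;\le\; g(z)\bigl(g(z)-1\bigr) \;=\; g(z)^2 - g(z),
\]
with the convention $0 \log 0 = 0$, which makes the inequality trivially hold on the set $\{g=0\}$.

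Next I would integrate with respect to $Q$. Since $g = dP/dQ$, one has $\mathbb{E}_Q[g] = \int g\,dQ = \int dP = 1$, so
\[
\mathrm{KL}(P\|Q) \;=\; \mathbb{E}_Q\!\left[g(z)\log g(z)\right]
\;\le\; \mathbb{E}_Q\!\left[g(z)^2\right] - \mathbb{E}_Q\!\left[g(z)\right]
\;=\; \mathbb{E}_Q\!\left[g(z)^2\right] - 1.
\]
On the other hand, expanding the square in the definition of $\chi^2$ yields
\[
\chi^2(P\|Q) \;=\; \mathbb{E}_Q\!\left[(g(z)-1)^2\right]
\;=\; \mathbb{E}_Q\!\left[g(z)^2\right] - 2\,\mathbb{E}_Q[g(z)] + 1
\;=\; \mathbb{E}_Q\!\left[g(z)^2\right] - 1.
\]
Combining the two displays gives $\mathrm{KL}(P\|Q) \le \chi^2(P\|Q)$, as desired.

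There is essentially no substantive obstacle in this proof; the only care required is handling the set where $g$ vanishes (done by the $0\log 0 = 0$ convention) and noting that the second-moment expression $\mathbb{E}_Q[g^2]$ may be infinite, in which case the inequality is vacuously true. Thus the argument reduces to a single-line application of $\log x \le x - 1$ followed by taking expectations under $Q$.
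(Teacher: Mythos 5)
Your proof is correct and takes essentially the same route as the paper: both apply the elementary bound $\log x \le x-1$, multiply by $g$, and integrate under $Q$ using $\mathbb{E}_Q[g]=1$. The only difference is cosmetic algebra (you show both sides equal $\mathbb{E}_Q[g^2]-1$, while the paper rewrites $g(g-1)=(g-1)^2+(g-1)$ before integrating), plus your extra care with $0\log 0$ and the infinite-second-moment case, which the paper leaves implicit.
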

	
	\begin{proof}
		Recall the elementary inequality
		\[
		\log(x) \leq x - 1, \quad \text{for all } x > 0.
		\]
		Multiplying both sides by the non-negative function $g(z)$, we obtain
		\[
		g(z) \log(g(z)) \leq g(z)(g(z)-1).
		\]
		Expanding the right-hand side, we have
		\[
		g(z)(g(z)-1) = (g(z)-1)^2 + (g(z)-1).
		\]
		Taking expectation with respect to $Q$ on both sides yields
		\[
		\mathbb{E}_Q[g(z)\log(g(z))] \leq \mathbb{E}_Q[(g(z)-1)^2] + \mathbb{E}_Q[g(z)-1].
		\]
		Since $\mathbb{E}_Q[g(z)] = 1$, the second expectation vanishes. Consequently, 
		\[
		\mathbb{E}_Q[g(z)\log(g(z))] \leq \mathbb{E}_Q[(g(z)-1)^2].
		\]
		This completes the proof of Lemma~\ref{KL-Chi-square}.
	\end{proof}
	
	\begin{proof}[Proof of Theorem~\ref{thm:inf_Risk}] Recall that 
	$ R(\mathbf{p}_0, \mathbf{p}) = \mathbb{E}_{\mathbf{Z}}\left[\mathbf{p}_0(\mathbf{Z})^\top \log\frac{\mathbf{p}_0(\mathbf{Z})}{\mathbf{p}(\mathbf{Z})} \right]. $
	Applying Lemma \ref{KL-Chi-square} to our setting yields
	\begin{equation*}
		R(\mathbf{p}_0, \mathbf{p}) 
		= \mathbb{E}_{\mathbf{Z}} \left[KL(\mathbf{p}_0(\mathbf{Z}), \mathbf{p}(\mathbf{Z})) \right]
		\leq \mathbb{E}_{\mathbf{Z}}\left[\sum_{k=1}^{K}\frac{(p_k^0(\mathbf{Z}) - p_k(\mathbf{Z}))^2}{p_k(\mathbf{Z})}\right].
	\end{equation*}
     Recall that 
	\[
	\|\mathbf{p} - \mathbf{p}_0\|_{\infty} \leq 2(K+1)\varepsilon,
	\quad \text{and} \quad
	\min_{1 \le k \le K} \inf_{\mathbf{Z} \in [0,1]^d} p_k(\mathbf{Z}) \geq \frac{\varepsilon}{2K}.
	\]
	Therefore, for any $k \in \{1,\dots,K\}$ and any $\mathbf{Z}\in [0,1]^d$, it holds that
	\[
	p_k(\mathbf{Z}) \geq \max\left\{p_k^0(\mathbf{Z}) - 2(K+1)\varepsilon,\;\frac{\varepsilon}{2K}\right\}.
	\]
	Furthermore, if
	\[
	 p_k^0(\mathbf{Z}) - 2(K+1)\varepsilon \geq \frac{\varepsilon}{2K},
	\]
	then we have
	\[
	p_k^0(\mathbf{Z}) - 2(K+1)\varepsilon \geq p_k^0(\mathbf{Z})\left(1 - \frac{2(K+1)}{2(K+1) + \frac{1}{2K}}\right)  =\frac{p_k^0(\mathbf{Z})}{1 + 4K(K+1)}.
	\]
	This gives rise to the following upper bound:
	\begin{equation*}
		\begin{split}
			\frac{\left(p_k^0(\mathbf{Z}) - p_k(\mathbf{Z})\right)^2}{p_k(\mathbf{Z})}
        & \leq 8K(K+1)^2\varepsilon \cdot \mathbb{I}\left\{p_k^0(\mathbf{Z}) \leq 2(K+1)\varepsilon + \frac{\varepsilon}{2K} \right\} \\
        & \quad + \frac{4(K+1)^2\left[1+4K(K+1)\right]\varepsilon^2}{p_k^0(\mathbf{Z})}\cdot \mathbb{I}\left\{p_k^0(\mathbf{Z}) > 2(K+1)\varepsilon + \frac{\varepsilon}{2K} \right\}.
        \end{split}
	\end{equation*}
	Taking the expectation on both sides with respect to $\mathbf{Z}$, we obtain
    \begin{gather*}
    \mathbb{E}_{\mathbf{Z}}\left[\frac{\left(p_k^0(\mathbf{Z}) - p_k(\mathbf{Z})\right)^2}{p_k(\mathbf{Z})}\right]
    \leq 8K(K+1)^2\varepsilon\,\mathbb{P}\left\{p_k^0(\mathbf{Z}) \leq 2(K+1)\varepsilon + \frac{\varepsilon}{2K} \right\} \\
    \quad + 4(K+1)^2\left[1+4K(K+1)\right]\varepsilon^2\,\mathbb{E}_{\mathbf{Z}}\left[\frac{\mathbb{I}\left\{p_k^0(\mathbf{Z}) > 2(K+1)\varepsilon + \frac{\varepsilon}{2K} \right\}}{p_k^0(\mathbf{Z})}\right].
    \end{gather*}

By applying the $\alpha$-SVB condition to bound the first term and using Lemma~\ref{thm:pk} to bound the second term, we obtain
\begin{align*}
&\mathbb{E}_{\mathbf{Z}}\left[\frac{\left(p_k^0(\mathbf{Z}) - p_k(\mathbf{Z})\right)^2}{p_k(\mathbf{Z})}\right] \\
\leq & C \, \varepsilon^{1+(\alpha\wedge 1)} \, 8K(K+1)^2 \left(\frac{4K(K+1)+1}{2K}\right)^{\alpha\wedge 1} \left(2 + \frac{I(\alpha<1)}{1-\alpha} + \log\left(\frac{1}{\varepsilon}\right)\right),
\end{align*}
where $\alpha \wedge \beta = \min\{\alpha, \beta\}$. Also note that $0 \leq \alpha \leq 1$ and $ \frac{4K(K+1)+1}{2K} \leq 3(K+1)$.
	Consequently,
	\begin{align*} \label{eq:sum-bound}
    R(\mathbf{p}_0, \mathbf{p}) 
    &= \mathbb{E}_{\mathbf{Z}}\left[\sum_{k=1}^{K}\frac{\left(p_k^0(\mathbf{Z}) 
    -p_k(\mathbf{Z})\right)^2}{p_k(\mathbf{Z})}\right] \\
    & \leq C \varepsilon^{1+(\alpha\wedge 1)} K^{2}(K+1)^{2} 
    \left(\frac{4K(K+1)+1}{2K}\right)^{\alpha\wedge 1} 
    \left(2 + \frac{I(\alpha<1)}{1-\alpha} + \log\!\left(\frac{1}{\varepsilon}\right)\right)\\
    & \leq C \varepsilon^{1+(\alpha\wedge 1)} K^{2}(K+1)^{2 + (\alpha \wedge 1)}  
    \left(2 + \frac{I(\alpha<1)}{1-\alpha} +\log\left(\frac{1}{\varepsilon} \right)\right) \\
    &= C \varepsilon^{1+ \alpha} K^{2}(K+1)^{2 + \alpha} 
    \left(2 + \frac{I(\alpha<1)}{1-\alpha} +\log\left(\frac{1}{\varepsilon} \right)\right)\\
    &\leq C \varepsilon^{1+ \alpha} K^{4+\alpha} 
    \left(2 + \frac{I(\alpha<1)} {1-\alpha} +\log\left(\frac{1}{\varepsilon} \right)\right).
    \end{align*}

	Now choose 
	$ \varepsilon = C^{-1} T^{-\frac{\beta}{(1+\alpha)\beta + d}} \, K^{-\frac{(3+\alpha)\beta}{(1+\alpha)\beta + d}} $
	with $C > 0$ being a sufficiently small constant.
	This yields
    $$
    \inf_{\mathbf{p} \in \mathcal{F}} R(\mathbf{p}_0, \mathbf{p}) \lesssim T^{-\frac{(1+\alpha)\beta}{(1+\alpha)\beta + d}} K^{4+\alpha} \log(KT).
    $$
	This concludes the proof of Theorem~\ref{thm:inf_Risk}.
	\end{proof}

\section{Proof of Lemma~\ref{Covering_number}}\label{app:proof_covering_number}
\begin{proof}[Proof of Lemma~\ref{Covering_number}]
We first establish a useful Lipschitz continuity property of the log-softmax function. This intermediate result is critical for bounding covering numbers in Lemma~\ref{Covering_number}.
		
\begin{lemma}[Lipschitz Continuity of Log-Softmax]\label{lemma:log_soft}
Let \(\mathbf{y} = (y_1, \dots, y_K)^{\top} \in \mathbb{R}^K\), and define the softmax function \(\mathbf{\Phi} : \mathbb{R}^K \to \mathcal{S}^K \) by
\[
\mathbf{\Phi}(\mathbf{y}) := \frac{1}{\sum_{j=1}^K \exp(y_j)}
\begin{bmatrix}
 \exp(y_1) \\
 \vdots \\
 \exp(y_K)
\end{bmatrix},
\]
where \(\mathcal{S}^K\) is the probability simplex in \(\mathbb{R}^K\). Then \(\log \circ \mathbf{\Phi}\) is \(K\)-Lipschitz with respect to the $\| \cdot \|_{\infty}$-norm, i.e.,
\[
  \|\log(\mathbf{\Phi}(\mathbf{y}_1)) - \log(\mathbf{\Phi}(\mathbf{y}_2))\|_{\infty}
  \leq K \|\mathbf{y}_1 - \mathbf{y}_2\|_{\infty}, \quad \forall\, \mathbf{y}_1, \mathbf{y}_2 \in \mathbb{R}^K.
\]
\end{lemma}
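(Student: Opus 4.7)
The plan is to prove Lemma~\ref{lemma:log_soft} by exploiting the simple closed form of the log-softmax in coordinates. Write
$\log \mathbf{\Phi}(\mathbf{y})_i = y_i - \mathrm{LSE}(\mathbf{y})$, where $\mathrm{LSE}(\mathbf{y}) := \log\!\bigl(\sum_{j=1}^K \exp(y_j)\bigr)$ is the log-sum-exp function. The difference of two log-softmax vectors then decomposes into a linear ``identity'' part and a scalar LSE part that is common to every coordinate, which reduces the Lipschitz estimate to controlling these two pieces separately.

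For the LSE part, I would first compute $\partial \mathrm{LSE}/\partial y_j = \exp(y_j)/\sum_k \exp(y_k) = \Phi(\mathbf{y})_j$. Since $\mathbf{\Phi}(\mathbf{y}) \in \mathcal{S}^K$, one has $\|\nabla \mathrm{LSE}(\mathbf{y})\|_1 = 1$ uniformly in $\mathbf{y}$. The mean value theorem along the segment from $\mathbf{y}_2$ to $\mathbf{y}_1$ together with H\"older's inequality then gives
\[
|\mathrm{LSE}(\mathbf{y}_1) - \mathrm{LSE}(\mathbf{y}_2)| \le \sup_{\mathbf{y}} \|\nabla \mathrm{LSE}(\mathbf{y})\|_1 \, \|\mathbf{y}_1 - \mathbf{y}_2\|_\infty \le \|\mathbf{y}_1 - \mathbf{y}_2\|_\infty.
\]
Combining this with the trivial estimate $|y_{1,i} - y_{2,i}| \le \|\mathbf{y}_1 - \mathbf{y}_2\|_\infty$ for each $i$ yields
\[
|\log \mathbf{\Phi}(\mathbf{y}_1)_i - \log \mathbf{\Phi}(\mathbf{y}_2)_i| \le 2\,\|\mathbf{y}_1 - \mathbf{y}_2\|_\infty.
\]
Taking the maximum over $i$ gives a $2$-Lipschitz bound on $\log \circ \mathbf{\Phi}$ with respect to $\|\cdot\|_\infty$, which, since $K \ge 2$ in the multiclass setting, implies the claimed $K$-Lipschitz bound.

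There is essentially no serious obstacle here; the argument hinges only on the fact that $\nabla \mathrm{LSE}$ lies in the simplex, giving automatic $\ell_1$ control of its gradient. The only minor point to flag is that the proof actually establishes a tighter constant of $2$, and the bound by $K$ is stated simply because it is all that is needed downstream in Lemma~\ref{Covering_number}. One could alternatively give a direct coordinate-wise derivative computation: $\partial \log \Phi_i / \partial y_j = \delta_{ij} - \Phi(\mathbf{y})_j$, from which $\sum_j |\partial \log \Phi_i/\partial y_j| = 2(1-\Phi_i) \le 2$, reaching the same conclusion by the mean value theorem. Either route gives a short, self-contained proof.
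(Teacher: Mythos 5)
Your proposal is correct, and both of your routes (the LSE decomposition and the direct Jacobian row-sum computation) are essentially the same argument as the paper's: the paper also differentiates $\log \Phi(\mathbf{y})_i = y_i - \log\sum_j e^{y_j}$ to get the Jacobian entries $\delta_{ik} - \Phi(\mathbf{y})_k$ and then bounds the $\ell^\infty \to \ell^\infty$ operator norm by the maximum absolute row sum. The only real difference is in the final estimate: the paper simply notes each of the $K$ entries per row is bounded by $1$ in absolute value and concludes a row sum of at most $K$, whereas you observe that the row sum is exactly $|1 - \Phi_i| + \sum_{j\ne i}\Phi_j = 2(1-\Phi_i) \le 2$, giving a Lipschitz constant of $2$ rather than $K$. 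This sharpening is correct (and is what the paper's $\|\nabla\mathrm{LSE}\|_1 = 1$ structure really delivers), and since $K \ge 2$ in the multiclass setting it subsumes the stated bound; your observation that the looser constant $K$ is all that is used downstream is also accurate.
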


\begin{proof}
Consider the log-softmax function
\( \log(\mathbf{\Phi}(\mathbf{y}))_i = y_i - \log \left(\sum_{j=1}^{K} e^{y_j} \right) \). 
Taking derivatives, we have for $i, k \in \{1,\dots,K\}$,
\[
\frac{\partial}{\partial y_k}\log(\mathbf{\Phi}(\mathbf{y}))_i = \delta_{ik} - \mathbf{\Phi}(\mathbf{y})_k,
\]
where $\delta_{ik}$ is the Kronecker delta. As $0 \leq \mathbf{\Phi}(\mathbf{y})_k \leq 1$, the Jacobian's entries $\delta_{ik} - \mathbf{\Phi}(\mathbf{y})_k $ are bounded by 1 in absolute value. Thus, the Lipschitz constant is at most $K$.
\end{proof}
		
		We now define an auxiliary function class excluding the softmax layer: 
		\begin{equation*}
			\mathcal{F}' = \{\mathbf{f}\circ\boldsymbol{\phi}(\mathbf{Z}) : \mathbf{f}\in\mathcal{F}_{id}(L,W,S),\; \boldsymbol{\phi}\in\mathcal{T}(M,D,H,W,S,B_1,B_2,\gamma)\}.
		\end{equation*}
		This allows us to represent the original function class conveniently as
		\[
		\log(\mathcal{F}) = \log(\mathbf{\Phi}(\mathcal{F}')).
		\]
		The next lemma establish the covering number bound of $\log \mathcal{F}$ with respect to the norm $\|\cdot\|_{\infty, \mathcal{Z}_T}$.
		
		\begin{lemma}\label{lem:deltaCover}
			Given a sample \(\mathcal{Z}_T = \{ \mathbf{Z}^{(1)}, \dots, \mathbf{Z}^{(T)} \} \subset [0,1]^d\), then for any \(\delta > 0\), we have
			\[
			\mathcal{N}\left(\delta, \log(\mathcal{F}), \|\cdot\|_{\infty, \mathcal{Z}_T}\right)
			\leq \mathcal{N}\left(\tfrac{\delta}{2K}, \mathcal{F}', \|\cdot\|_{\infty, \mathcal{Z}_T}\right).
			\]
		\end{lemma}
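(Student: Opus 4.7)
The plan is to lift a cover of the pre-softmax class $\mathcal{F}'$ through the map $\log \circ \mathbf{\Phi}$, using the $K$-Lipschitz property of $\log \circ \mathbf{\Phi}$ established in Lemma~\ref{lemma:log_soft}. Since every element of $\log(\mathcal{F})$ has the form $\log(\mathbf{\Phi}(\mathbf{g}))$ for some $\mathbf{g} \in \mathcal{F}'$, any cover of $\mathcal{F}'$ in the sample-based pseudometric $\|\cdot\|_{\infty,\mathcal{Z}_T}$ should push forward to a cover of $\log(\mathcal{F})$, with the radius inflated by the Lipschitz constant $K$.

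Concretely, I would first let $M = \mathcal{N}\!\left(\tfrac{\delta}{2K},\, \mathcal{F}',\, \|\cdot\|_{\infty,\mathcal{Z}_T}\right)$ and pick a minimal $\tfrac{\delta}{2K}$-cover $\{\mathbf{g}_1,\ldots,\mathbf{g}_M\} \subset \mathcal{F}'$. Then I would propose the candidate cover $\{\log(\mathbf{\Phi}(\mathbf{g}_i))\}_{i=1}^{M}$ for $\log(\mathcal{F})$. For any target $\log(\mathbf{\Phi}(\mathbf{g})) \in \log(\mathcal{F})$ with $\mathbf{g} \in \mathcal{F}'$, I choose $\mathbf{g}_i$ with $\|\mathbf{g} - \mathbf{g}_i\|_{\infty,\mathcal{Z}_T} \leq \tfrac{\delta}{2K}$. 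Unwinding the pseudometric, this means
\[
\max_{1\leq t\leq T}\|\mathbf{g}(\mathbf{Z}^{(t)}) - \mathbf{g}_i(\mathbf{Z}^{(t)})\|_{\infty} \leq \tfrac{\delta}{2K}.
\]
Applying Lemma~\ref{lemma:log_soft} pointwise at each $\mathbf{Z}^{(t)}$ then yields
\[
\|\log(\mathbf{\Phi}(\mathbf{g}(\mathbf{Z}^{(t)}))) - \log(\mathbf{\Phi}(\mathbf{g}_i(\mathbf{Z}^{(t)})))\|_{\infty} \leq K\cdot \tfrac{\delta}{2K} = \tfrac{\delta}{2},
\]
and taking the maximum over $t$ gives $\|\log(\mathbf{\Phi}(\mathbf{g})) - \log(\mathbf{\Phi}(\mathbf{g}_i))\|_{\infty,\mathcal{Z}_T} \leq \tfrac{\delta}{2} \leq \delta$. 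Hence $\{\log(\mathbf{\Phi}(\mathbf{g}_i))\}$ is a valid $\delta$-cover, which gives the claimed inequality.

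There is no real obstacle here; the only points requiring care are (i) verifying that taking a maximum over sample indices $t$ commutes with the pointwise Lipschitz estimate (which it does, since Lipschitzness holds at every $\mathbf{Z}^{(t)}$ separately), and (ii) tracking the Lipschitz constant $K$ correctly, so that a radius $\tfrac{\delta}{2K}$ in $\mathcal{F}'$ yields radius $\tfrac{\delta}{2}$ in $\log(\mathcal{F})$. The factor of $2$ in $\tfrac{\delta}{2K}$ (rather than $\tfrac{\delta}{K}$) is a harmless slack that ensures the resulting cover has radius strictly below $\delta$, and is convenient when chaining this bound with the subsequent covering-number estimates for $\mathcal{T}$ and $\mathcal{F}_{\mathrm{id}}$ in the proof of Lemma~\ref{Covering_number}.
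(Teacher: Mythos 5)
Your proof is correct and follows essentially the same Lipschitz-pushforward route as the paper: cover $\mathcal{F}'$, push through $\log\circ\boldsymbol{\Phi}$, and use Lemma~\ref{lemma:log_soft} pointwise at each sample $\mathbf{Z}^{(t)}$. The one small divergence is that the paper spends the factor of $2$ in $\delta/(2K)$ on an extra triangle-inequality step so that the final cover centers lie \emph{inside} $\mathcal{F}'$ (hence inside $\log(\mathcal{F})$), whereas you simply treat it as slack; both readings give the stated bound, since the paper's own definition of covering number does not require internal covers, but your claim that the minimal $\delta/(2K)$-cover can be chosen $\subset\mathcal{F}'$ is not itself justified (it is also not needed for your argument, so nothing breaks).
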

		
		\begin{proof}
			Fix \(\delta > 0\). Let \(\{\mathbf{f}_j\}_{j=1}^J\) be the centers of a minimal \(\delta/(2K)\)-cover of \(\mathcal{F}'\) with respect to \(\|\cdot\|_{\mathcal{Z}_T}\). That is, for every \(\mathbf{f} \in \mathcal{F}'\), there exists \(j \in \{1, \dots, J\}\) such that
			\[
			\|\mathbf{f} - \mathbf{f}_j\|_{\infty, \mathcal{Z}_T} \le \tfrac{\delta}{2K}.
			\]
			For each center \(\mathbf{f}_j\), choose \(\hat{\mathbf{f}}_j \in \mathcal{F}'\) such that
			\[
			\|\mathbf{f}_j - \hat{\mathbf{f}}_j\|_{\infty, \mathcal{Z}_T} \le \tfrac{\delta}{2K}.
			\]
			By the triangle inequality, for any \(\mathbf{f} \in \mathcal{F}'\), there exists \(\hat{\mathbf{f}}_j \in \mathcal{F}'\) such that
			\[
			\|\mathbf{f} - \hat{\mathbf{f}}_j\|_{\infty, \mathcal{Z}_T} \le \|\mathbf{f} - \mathbf{f}_j\|_{\infty, \mathcal{Z}_T} + \|\mathbf{f}_j - \hat{\mathbf{f}}_j\|_{\infty, \mathcal{Z}_T} \le \tfrac{\delta}{K}.
			\]
			Now consider an arbitrary \(\mathbf{g} \in \log(\mathcal{F})\), so that \(\mathbf{g} = \log\circ(\mathbf{\Phi} \circ \mathbf{f})\) for some \(\mathbf{f} \in \mathcal{F}'\). Then for the associated \(\hat{\mathbf{f}}_j\), we have
			\[
			\|\mathbf{g} - \log\circ(\mathbf{\Phi} \circ \hat{\mathbf{f}}_j)\|_{\infty, \mathcal{Z}_T}
			= \|\log\circ(\mathbf{\Phi} \circ \mathbf{f})-\log\circ(\mathbf{\Phi} \circ \hat{\mathbf{f}}_j)\|_{\infty, \mathcal{Z}_T}
			\le K \|\mathbf{f} - \hat{\mathbf{f}}_j\|_{\infty, \mathcal{Z}_T}
			\le \delta,
			\]
			where we used the \(K\)-Lipschitz continuity of \(\log \circ \mathbf{\Phi}\) (Lemma~\ref{lemma:log_soft}) with respect to \( \|\cdot\|_{\infty} \)-norm.
			Since \(\mathbf{g} \in \log(\mathcal{F})\) was arbitrary and each \(\hat{\mathbf{f}}_j \in \mathcal{F}'\), the set \(\{\log\circ(\mathbf{\Phi} \circ \hat{\mathbf{f}}_j)\}_{j=1}^J \subset \log(\mathcal{F})\) forms a \(\delta\)-cover of \(\log(\mathcal{F})\) under \(\|\cdot\|_{\infty, \mathcal{Z}_T}\). The claim of Lemma~\ref{lem:deltaCover} follows.
		\end{proof}
		
		\begin{lemma}[Empirical Covering Bound for Composed Class]\label{lem:composition_covering}
			Let $\mathcal{T}$ be a class of functions from $[0,1]^d$ to $\mathbb{R}^K$, and let $\mathcal{F}_{\mathrm{id}}$ be a class of functions from $\mathbb{R}^K$ to $\mathbb{R}^K$, such that every $\mathbf{f} \in \mathcal{F}_{\mathrm{id}}$ is $L_1$-Lipschitz with respect to $\|\cdot\|_{\infty}$. Define the composed function class
			\[
			\mathcal{F}' := \left\{ \mathbf{f} \circ \boldsymbol{\phi} : \mathbf{f} \in \mathcal{F}_{\mathrm{id}},\; \boldsymbol{\phi} \in \mathcal{T} \right\}.
			\]
			Then, for any empirical sample $\mathcal{Z}_T = \{ \mathbf{Z}^{(1)}, \ldots, \mathbf{Z}^{(T)} \} \subset [0,1]^d$, we have
			\[
			\mathcal{N}\left(\tfrac{\delta}{2K}, \mathcal{F}', \| \cdot \|_{\infty, \mathcal{Z}_T} \right)
			\leq 
			\mathcal{N}\left( \tfrac{\delta}{4K L_1}, \mathcal{T}, \| \cdot \|_{\infty, \mathcal{Z}_T} \right)
			\cdot 
			\mathcal{N}\left( \tfrac{\delta}{4K}, \mathcal{F}_{\mathrm{id}}, \| \cdot \|_{\infty} \right).
			\]
		\end{lemma}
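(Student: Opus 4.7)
The plan is to build a product cover for $\mathcal{F}'$ by separately covering the inner transformer part $\mathcal{T}$ (with respect to the empirical max-norm on the sample $\mathcal{Z}_T$) and the outer MLP part $\mathcal{F}_{\mathrm{id}}$ (with respect to the global $\|\cdot\|_\infty$-norm), and then using the triangle inequality together with the Lipschitz assumption on $\mathcal{F}_{\mathrm{id}}$ to combine the two approximation errors at the composition level. Concretely, first I would fix minimal covers $\{\boldsymbol{\phi}_i\}_{i=1}^{N_1}\subset \mathcal{T}$ of radius $\delta/(4KL_1)$ in $\|\cdot\|_{\infty,\mathcal{Z}_T}$ and $\{\mathbf{f}_j\}_{j=1}^{N_2}\subset \mathcal{F}_{\mathrm{id}}$ of radius $\delta/(4K)$ in $\|\cdot\|_\infty$, whose sizes are exactly the two factors on the right-hand side.

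Next, for an arbitrary $\mathbf{f}\circ\boldsymbol{\phi}\in\mathcal{F}'$, I would pick indices $i,j$ with $\|\boldsymbol{\phi}-\boldsymbol{\phi}_i\|_{\infty,\mathcal{Z}_T}\le \delta/(4KL_1)$ and $\|\mathbf{f}-\mathbf{f}_j\|_\infty\le \delta/(4K)$, and then apply the split
\[
\mathbf{f}\circ\boldsymbol{\phi}-\mathbf{f}_j\circ\boldsymbol{\phi}_i
=\bigl(\mathbf{f}\circ\boldsymbol{\phi}-\mathbf{f}\circ\boldsymbol{\phi}_i\bigr)+\bigl(\mathbf{f}\circ\boldsymbol{\phi}_i-\mathbf{f}_j\circ\boldsymbol{\phi}_i\bigr).
\]
Evaluated at any sample point $\mathbf{Z}^{(t)}$ and any coordinate $k\in[K]$, the first term is controlled via Assumption~\ref{assump:lipschitz_mlp}, giving $L_1\|\boldsymbol{\phi}(\mathbf{Z}^{(t)})-\boldsymbol{\phi}_i(\mathbf{Z}^{(t)})\|_\infty\le L_1\cdot \delta/(4KL_1)=\delta/(4K)$, while the second term is bounded by $\|\mathbf{f}-\mathbf{f}_j\|_\infty\le \delta/(4K)$ since this is a uniform (sup) bound on all of $\mathbb{R}^K$, which in particular holds at the point $\boldsymbol{\phi}_i(\mathbf{Z}^{(t)})$. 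Taking the maximum over $t\in[T]$ and $k\in[K]$ yields
\[
\|\mathbf{f}\circ\boldsymbol{\phi}-\mathbf{f}_j\circ\boldsymbol{\phi}_i\|_{\infty,\mathcal{Z}_T}\le \tfrac{\delta}{4K}+\tfrac{\delta}{4K}=\tfrac{\delta}{2K},
\]
so $\{\mathbf{f}_j\circ\boldsymbol{\phi}_i\}_{i,j}$ forms a $\delta/(2K)$-cover of $\mathcal{F}'$ in $\|\cdot\|_{\infty,\mathcal{Z}_T}$ of cardinality at most $N_1 N_2$, which is exactly the asserted bound.

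The only subtle point, and the one I would flag as the main thing to check carefully, is the mismatch between the two metrics used for the two covers: the outer class $\mathcal{F}_{\mathrm{id}}$ is covered in the uniform $\|\cdot\|_\infty$-norm on $\mathbb{R}^K$, whereas the inner class $\mathcal{T}$ is covered only on the empirical grid $\mathcal{Z}_T$. This asymmetry is exactly what allows us to plug the random points $\boldsymbol{\phi}_i(\mathbf{Z}^{(t)})$ into the $\mathbf{f}$-cover bound without any further measurability or boundedness argument, but it depends on the fact that $\|\mathbf{f}-\mathbf{f}_j\|_\infty$ is a pointwise bound on all of $\mathbb{R}^K$ (otherwise one would need to know that $\boldsymbol{\phi}_i(\mathbf{Z}^{(t)})$ lies in some controlled compact set, which would tighten to an empirical cover but require additional assumptions). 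The Lipschitz assumption on $\mathcal{F}_{\mathrm{id}}$ likewise needs to be applied in $\|\cdot\|_\infty$, matching the metric in which $\mathcal{T}$ is covered, so that the two approximation errors are directly additive without any dimension-dependent constant beyond the $L_1$ factor already placed in the radius of the $\mathcal{T}$-cover.
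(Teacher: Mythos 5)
Your proposal is correct and follows essentially the same argument as the paper's proof: a product cover built from a $\delta/(4KL_1)$-cover of $\mathcal{T}$ in $\|\cdot\|_{\infty,\mathcal{Z}_T}$ and a $\delta/(4K)$-cover of $\mathcal{F}_{\mathrm{id}}$ in $\|\cdot\|_\infty$, with the identical triangle-inequality decomposition through the intermediate point $\mathbf{f}\circ\boldsymbol{\phi}_i$, the $L_1$-Lipschitz bound on the first term, and the global sup-norm bound on the second. Your closing remark correctly identifies why the $\|\cdot\|_\infty$ cover of the outer class is essential — it sidesteps any need to locate $\boldsymbol{\phi}_i(\mathbf{Z}^{(t)})$ in a controlled compact set — a point the paper uses implicitly but does not spell out.
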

		
		\begin{proof}
			Fix $\delta > 0$. Let $\widehat{\mathcal{T}}$ be a $\tfrac{\delta}{4K L_1}$-cover of $\mathcal{T}$ with respect to $\| \cdot \|_{\infty,\mathcal{Z}_T}$, and let $\widehat{\mathcal{F}}_{\mathrm{id}}$ be a $\tfrac{\delta}{4K}$-cover of $\mathcal{F}_{\mathrm{id}}$ with respect to $\| \cdot \|_{\infty}$. Define the composite approximation set
            \[
            \widehat{\mathcal{F}'} := \left\{ \hat{\mathbf{f}} \circ \hat{\boldsymbol{\phi}} : \hat{\mathbf{f}} \in \widehat{\mathcal{F}}_{\mathrm{id}},\; \hat{\boldsymbol{\phi}} \in \widehat{\mathcal{T}} \right\}.
            \]
            Next, we show that $\widehat{\mathcal{F}'}$ is a $\tfrac{\delta}{2K}$-cover of $\mathcal{F}'$ with respect to $\| \cdot \|_{\infty,\mathcal{Z}_T}$. Indeed, for any $\mathbf{f} \in \mathcal{F}_{\mathrm{id}}$ and $\boldsymbol{\phi} \in \mathcal{T}$, there exist $\hat{\mathbf{f}} \in \widehat{\mathcal{F}}_{\mathrm{id}}$ and $\hat{\boldsymbol{\phi}} \in \widehat{\mathcal{T}}$ such that
			Then for each $t \in \{1, \ldots, T\}$, we have
			\begin{align*}
				\| \mathbf{f}(\boldsymbol{\phi}(\mathbf{Z}^{(t)})) - \hat{\mathbf{f}}(\hat{\boldsymbol{\phi}}(\mathbf{Z}^{(t)})) \|_{\infty}
				&\leq \| \mathbf{f}(\boldsymbol{\phi}(\mathbf{Z}^{(t)})) - \mathbf{f}(\hat{\boldsymbol{\phi}}(\mathbf{Z}^{(t)})) \|_{\infty}
				+ \| \mathbf{f}(\hat{\boldsymbol{\phi}}(\mathbf{Z}^{(t)})) - \hat{\mathbf{f}}(\hat{\boldsymbol{\phi}}(\mathbf{Z}^{(t)})) \|_{\infty} \\
				&\leq L_1  \| \boldsymbol{\phi}(\mathbf{Z}^{(t)}) - \hat{\boldsymbol{\phi}}(\mathbf{Z}^{(t)}) \|_{\infty} + \tfrac{\delta}{4K} \\
				&\leq L_1 \tfrac{\delta}{4K L_1} + \tfrac{\delta}{4K} \\
                &= \tfrac{\delta}{2K}.
			\end{align*}
			Taking the maximum over $t = 1, \ldots, T$ yields
			\[
			\| \mathbf{f} \circ \boldsymbol{\phi} - \hat{\mathbf{f}} \circ \hat{\boldsymbol{\phi}} \|_{\infty, \mathcal{Z}_T} \leq \tfrac{\delta}{2K},
			\]
			showing that $\widehat{\mathcal{F}'}$ is indeed a $\tfrac{\delta}{2K}$-cover of $\mathcal{F}'$.
            Consequently, 
			\[
			\mathcal{N}\left(\tfrac{\delta}{2K}, \mathcal{F}', \| \cdot \|_{\infty, \mathcal{Z}_T} \right)
			\leq 
			|\widehat{\mathcal{T}}| \cdot |\widehat{\mathcal{F}}_{\mathrm{id}}|
			= 
			\mathcal{N}\left( \tfrac{\delta}{4K L_1}, \mathcal{T}, \| \cdot \|_{\infty, \mathcal{Z}_T} \right) 
			\cdot 
			\mathcal{N}\left( \tfrac{\delta}{4K}, \mathcal{F}_{\mathrm{id}}, \| \cdot \|_{\infty} \right).
			\]
		    This completes the proof of Lemma~\ref{lem:composition_covering}.	
		\end{proof}
		
		Finally, by Lemma~\ref{lem:deltaCover} and Lemma~\ref{lem:composition_covering}, we obtain 
		\begin{eqnarray*}
		 \mathcal{N}\left(\delta, \log(\mathcal{F}), \|\cdot\|_{\infty, 
		   \mathcal{Z}_T}\right)
		&\leq& \mathcal{N}\left(\tfrac{\delta}{2K}, \mathcal{F}', \|\cdot\|_{\infty, 
		   \mathcal{Z}_T}\right) \\
		&\leq&  \mathcal{N}\left( \tfrac{\delta}{4K L_1}, \mathcal{T}, \| \cdot 
		   \|_{\infty, \mathcal{Z}_T} \right) 
			\cdot \mathcal{N}\left( \tfrac{\delta}{4K}, \mathcal{F}_{\mathrm{id}}, \| \cdot \|_{\infty} \right).
		\end{eqnarray*}
		Applying the logarithm to both sides and taking the supremum over all empirical samples $\mathcal{Z}_T \subset [0,1]^d$, we obtain
		\[
		\mathcal{V}_\infty\left( \delta,\, \log \mathcal{F},\, T \right)
		\leq 
		\mathcal{V}_\infty\left( \tfrac{\delta}{4K L_1},\, \mathcal{T},\, T \right)
		+ \mathcal{V}\left( \tfrac{\delta}{4K},\, \mathcal{F}_{\mathrm{id}},\, \| \cdot \|_{\infty} \right).
		\]
		Thus, we complete the proof of Lemma~\ref{Covering_number} in the main text.
		\end{proof}

\section{Proof of Lemma~\ref{Covering_number_of_Transformer}}
\label{app:proof_transformer_covering}
Before presenting the detailed proof of Lemma~\ref{Covering_number_of_Transformer}, we first introduce the notion of pseudo-dimension which will be used in the proof of Lemma~\ref{Covering_number_of_Transformer}.

\begin{definition}{\bf (Pseudo-dimension)}\label{Pseudo-dimension}
Let $\mathcal{H}$ be a real-valued function class defined on $\mathbb{R}^d$. The  pseudo-dimension of $\mathcal{H}$, denoted by ${\rm Pdim}(\mathcal{H})$, is the largest integer $m$ for which there exist $\{\mathbf{z}_1, \dots, \mathbf{z}_m\} \subset \mathbb{R}^d $ and $\{y_1, \dots, y_m\} \subset \mathbb{R}$ 
such that 
$$ \left| \{(\operatorname{sgn}(g(\mathbf{z}_1) - y_1), \dots, \operatorname{sgn}(g(\mathbf{z}_m) - y_m)): g \in \mathcal{H} \} \right| = 2^m. $$
\end{definition}

\begin{proof}[Proof of Lemma \ref{Covering_number_of_Transformer}.]
Our proof for Lemma~\ref{Covering_number_of_Transformer} is similar to that of Lemma 30 in Jiao et al. (2024), Lemma 8 in \citet{gurevych2022rate}, and Theorem 6 in \citet{ bartlett2019nearly}. The main idea is to bound the pseudo-dimension of subsets of $\mathcal{T}$ and then utilize well-known results developed by \citet{ anthony2009neural} to bound the entropy by pseudo-dimensions. Here $\mathcal{T} = \mathcal{T}(M, H, D, W_1, B_1, S_1, \gamma)$ defined in~(\ref{eq:class_of_transformer}) in the main text. 

Recall that $\mathcal{T}$ consists of functions mapping from $\mathbb{R}^d$ to $\mathbb{R}^K$.
Our objective is to derive an upper bound on the pseudo-dimension of subsets of $\mathcal{T}$, in which the positions of the nonzero parameters are fixed. We first establish the bound for the spacial case $K=1$ and then generalize the result to arbitrary $K \geq 1$.
	
According to the definition of $\mathcal{T}$ with $K=1$, each function \( g \in \mathcal{T} \) has at most \( S_1 \) nonzero parameters. Fixing the positions of these nonzero parameters, we denote their values by \( \boldsymbol{\theta} \in \mathbb{R}^{S_1} \) and define the function class:
	\[
	\mathcal{G} = \{ g(\cdot, \boldsymbol{\theta}): \mathbb{R}^{d} \to \mathbb{R} \mid g \in \mathcal{T}, \boldsymbol{\theta} \in \mathbb{R}^{S_1} \},
	\]
To bound the pseudo-dimension of $\mathcal{G}$, consider a set of input-output pairs $\{(\mathbf{Z}^{(i)}, y_i)\}_{i=1}^{m}\subseteq \mathbb{R}^{d}\times\mathbb{R}$ such that
	\[
	\left|\left\{\left(\operatorname{sgn}(g(\mathbf{Z}^{(1)}, \boldsymbol{\theta})-y_1),\dots,\operatorname{sgn}(g(\mathbf{Z}^{(m)}, \boldsymbol{\theta})-y_m)\right): g\in \mathcal{G}\right\}\right| = 2^m.
	\]
Fixing the inputs $\{\mathbf{Z}^{(i)} \}_{i=1}^{m}$ and viewing the network parameters $\boldsymbol{\theta}$ as a set of $S_1$ real-valued variables, we aim to bound such an integer $m$, which gives an upper bound of $\operatorname{Pdim}(\mathcal{G})$.

Specifically, we define a sequence of partitions 
$ \mathcal{S}_0, \mathcal{S}_1, \mathcal{S}_2, \dots, \mathcal{S}_M, \mathcal{S}_{M+1} $
of the parameter domain $\mathbb{R}^{S_1}$, constructed iteratively through successive refinement, such that in the final partition, for any region \( P \in \mathcal{S}_{M+1} \), the functions
\[
  g(\mathbf{Z}^{(1)}, \boldsymbol{\theta}), \dots, g(\mathbf{Z}^{(m)}, \boldsymbol{\theta})
\]
are polynomials in \( \boldsymbol{\theta} \) of degree at most \( 10^{M+2} \) for all \( \boldsymbol{\theta} \in P \). 
Note that for the partition $\mathcal{S}_{M+1}$, we have the following bound
\begin{eqnarray}\label{m-bound}
	2^m 
	&=& \left|\left\{\left(\operatorname{sgn}(g(\mathbf{Z}^{(1)}, \boldsymbol{\theta})-y_1),\dots,\operatorname{sgn}(g(\mathbf{Z}^{(m)}, \boldsymbol{\theta})-y_m)\right): g\in \mathcal{G}\right\}\right| \nonumber \\
	&\leq& \sum_{P \in \mathcal{S}_{M+1}} \left| \left\{ \left(\operatorname{sgn}(g(\mathbf{Z}^{(1)}, \boldsymbol{\theta}) - y_1), \dots, \operatorname{sgn}(g(\mathbf{Z}^{(m)}, \boldsymbol{\theta}) - y_m) \right) : \boldsymbol{\theta} \in P \right\} \right|.
\end{eqnarray}
Lemma~\ref{lemma:gen_3} in the following ensures that each term in the sum of (\ref{m-bound}) can be effectively bounded, which provides an upper bound for $m$.

Next, we construct the sequence of partitions $ \mathcal{S}_0, \mathcal{S}_1, \mathcal{S}_2, \dots, \mathcal{S}_M, \mathcal{S}_{M+1} $ layer by layer. For the input layer, define \(\mathcal{S}_0 = \{\mathbb{R}^{S_1}\}\). 
Then all components of \( \mathbf{Z}_0 = \mathbf{Z} \) are polynomial as functions of \(\boldsymbol{\theta}\), each with degree $0$, for all \(\boldsymbol{\theta} \in \mathbb{R}^{S_1}\).
	
Consider \(r \in \{1, \ldots, M\}\). Suppose inductively that for each set \(S \in \mathcal{S}_{r-1}\), all components of \(Z_{r-1}\) are polynomial functions in \(\boldsymbol{\theta}\) of degree at most \(10^{r}\) for \(\boldsymbol{\theta} \in S\). Then any component of
	\[
	\boldsymbol{q}_{r-1, s, i} = \mathbf{W}_{Q, r, s}\boldsymbol{z}_{r-1, i}, \quad \boldsymbol{k}_{r-1, s, i} = \mathbf{W}_{K, r, s}\boldsymbol{z}_{r-1, i}
	\]
is a polynomial function in $\mathbf{\theta}$ of degree at most \(10^{r} + 1\) on each set \(S \in \mathcal{S}_{r-1}\).
Consequently, for each \(S \in \mathcal{S}_{r-1}\), the inner product
	\[
	\langle \boldsymbol{q}_{r-1, s, i}, \boldsymbol{k}_{r-1, s, j}\rangle
	\]
is a polynomial function in \(\boldsymbol{\theta}\) with degree at most \(2(10^{r} + 1) \). By applying Lemma \ref{lemma:gen_3}, the differences
	\[
	\left\{ \langle \boldsymbol{q}_{r-1, s, i}, \boldsymbol{k}_{r-1, s, j_1}\rangle - \langle \boldsymbol{q}_{r-1, s, i}, \boldsymbol{k}_{r-1, s, j_2}\rangle: s=1, \ldots, H, \ i, j_1, j_2 = 1, \ldots, N+1 \right\}
	\]
have at most
	\[
	\Delta_1 = 2 \left(\frac{2 e H (N+1)^3 (2 \cdot 10^{r} + 2)}{S_1}\right)^{S_1}
	\]
distinct sign patterns.     
	
	We refine each set in \(\mathcal{S}_{r-1}\) into at most \(\Delta_1\) subsets so that within each subset, these polynomial differences maintain consistent signs. Consequently, on each subset, every component of
	\[
	\mathbf{W}_{O, r, s} \left( \mathbf{W}_{V, r, s} \mathbf{Z}_{r-1} \right) \left[ (\mathbf{W}_{K, r, s} \mathbf{Z}_{r-1})^{\top}(\mathbf{W}_{Q, r, s} \mathbf{Z}_{r-1})
	\odot \sigma_H\left( (\mathbf{W}_{K, r, s} \mathbf{Z}_{r-1})^{\top}(\mathbf{W}_{Q, r, s} \mathbf{Z}_{r-1}) \right) \right]
	\]
	is a polynomial with degree at most \(3 \cdot 10^{r} + 4\). Therefore, all components of
	\[
	\mathbf{Y}_r = F^{(SA)}(\mathbf{Z}_{r-1})
	\]
	are polynomials with degree at most \(3 \cdot 10^{r} + 4\) within each refined region. 
	On each subset within the new partition, every component of
	\begin{align*}
		\mathbf{W}_{F_1} F^{(SA)}(\mathbf{Z}_{r-1})
	\end{align*}
	is represented by a polynomial with degree at most $3 \cdot 10^{r} + 5$. By invoking Lemma~\ref{lemma:gen_3} again, each subset of this partition can be further refined into
	\begin{align*}
		\Delta_2 = 2 \left(\frac{2 e W_1 \left(3 \cdot 10^{r} + 5\right)}{S_1}\right)^{S_1}
	\end{align*}
	subsets, ensuring all components in $ \mathbf{W}_{F_1} F^{(SA)}(\mathbf{Z}_{r-1}) $ maintain consistent sign patterns within this refined partition. The resulting partition, obtained by two successive refinements of $\mathcal{S}_{r-1}$, is denoted as $\mathcal{S}_r$. Since within each subset of $\mathcal{S}_r$, the signs of all components remain unchanged, we conclude that every component of
	\begin{align*}
		\sigma\left( \mathbf{W}_{F_1} F^{(SA)}(\mathbf{Z}_{r-1}) \right)
	\end{align*}
	is either identically zero or represented by a polynomial of degree at most $3 \cdot 10^{r} + 5$. Consequently, within each subset in $\mathcal{S}_r$, every component of the  output
	\begin{align*}
		\mathbf{Z}_r = F^{(\mathrm{FF})}\left( F^{(SA)}(\mathbf{Z}_{r-1}) \right) =  F^{(SA)}(\mathbf{Z}_{r-1}) + \mathbf{W}_{F_2} \sigma\left( \mathbf{W}_{F_1} F^{(SA)}(\mathbf{Z}_{r-1}) \right)
	\end{align*}
	is given by a polynomial whose degree at most 
	\begin{align*}
		3 \cdot 10^{r} + 6 \leq 10^{r+1}.
	\end{align*}
	Repeat the above steps, we obtain a partition $\mathcal{S}_{M}$ of $\mathbb{R}^{S_1}$ such that on each set $S \in \mathcal{S}_{M}$ all components of
	\begin{align*}
		\mathbf{Z}_M
	\end{align*}
	are polynomials as functions of $\boldsymbol{\theta} \in S$ of degree no more than $10^{M+1}$. Consequently, for all $k \in \{1, \ldots, m\}$
	\begin{align*}
		g(\mathbf{Z}^{(k)},  \boldsymbol{\theta}) - y_k
	\end{align*}
	are polynomials of degree at most $10^{M+1}+1 \leq 10^{M+2}$ in $\boldsymbol{\theta}$ for $\boldsymbol{\theta} \in S$.
	
	According to the above refinement process, we have
	\begin{align*}
		\left|\mathcal{S}_{M}\right| = \prod_{r=1}^M \frac{\left|\mathcal{S}_r\right|}{\left|\mathcal{S}_{r-1}\right|} 
		\leq \prod_{r=1}^M 2 \left(\frac{2e H (N+1)^3 \left(2 \cdot 10^{r} + 2\right)}{S_1}\right)^{S_1} 
		2 \left(\frac{2e W_1 \left(3 \cdot 10^{r} + 5\right)}{S_1}\right)^{S_1}.
	\end{align*}
    Note that
	\begin{align*}
		\begin{aligned}
			& \left|\left\{\left(\operatorname{sgn}\left(g(\mathbf{Z}^{(1)}, \boldsymbol{\theta}) - y_1\right), \ldots, \operatorname{sgn}\left(g(\mathbf{Z}^{(m)}, \boldsymbol{\theta}) - y_m\right)\right): g \in \mathcal{G} \right\}\right| \\
			& \leq \sum_{S \in \mathcal{S}_{M}} \left|\left\{\left(\operatorname{sgn}\left(g(\mathbf{Z}^{(1)}, \boldsymbol{\theta}) - y_1\right), \ldots, \operatorname{sgn}\left(g(\mathbf{Z}^{(m)}, \boldsymbol{\theta}) - y_m\right)\right): \boldsymbol{\theta} \in S \right\}\right|.
		\end{aligned}
	\end{align*}
    It follows from Lemma~\ref{lemma:gen_3} that
	\begin{align*}
		\begin{aligned}
			2^m & = \left|\left\{\left(\operatorname{sgn}\left(g(\mathbf{Z}^{(1)}, \boldsymbol{\theta}) - y_1\right), \ldots, \operatorname{sgn}\left(g(\mathbf{Z}^{(m)}, \boldsymbol{\theta}) - y_m\right)\right): g \in \mathcal{G} \right\}\right| \\
			& \leq 2 \left(\frac{2e m 10^{M+2}}{S_1}\right)^{S_1} |\mathcal{S}_{M}| \\
			& \leq 2 \left(\frac{2e m 10^{M+2}}{S_1}\right)^{S_1} 
			\prod_{r=1}^M 2 \left(\frac{2e H (N+1)^3 \left(2 \cdot 10^{r} + 2\right)}{S_1}\right)^{S_1} 
			2 \left(\frac{2e W_1 \left(3 \cdot 10^{r} + 5\right)}{S_1}\right)^{S_1} \\
			& \leq 2^{2M + 1} \left(\frac{m 6e (\max\{H(N+1), W_1\})^3 10^{M+2}}{(2M + 1) S_1}\right)^{(2M + 1) S_1}.
		\end{aligned}
	\end{align*}
	When $m \geq (2M + 1) S_1$, by Lemma~\ref{lemma:general-growth-bound} in the following, we have
	\begin{eqnarray*}
	m &\leq& (2M + 1) + (2M + 1) S_1 \log_2 \Big\{12e (\max\{H(N+1), W_1\})^3 
	         10^{M + 2} \\
	  &&     \times \log_2 \Big( 6e (\max\{H(N+1), W_1\})^3 10^{M+2} \Big) \Big\} \\
	  &\leq& C S_1 M^2 \log\big(\max\{H(N+1), W_1\} \big) \\ 
	  &\leq& C S_1 N M^2 \log\big(\max\{H, W_1\} \big), 
	\end{eqnarray*}
	where $C$ is an universal constant.
	When $m \leq (2M+1)S_1$, it is readily seen that $m \leq C S_1 N M^2 \log\big( \max\{H, W_1\} \big)$ also holds for some positive constant $C$. 
	This implies
	\[
	\operatorname{Pdim}(\mathcal{G}) \leq C S_1 N M^2 \log\big( \max\{H, W_1\} \big).
	\]
	Applying Lemma~\ref{lem:covering-number-pseudo-dimension}, the covering number of the function class \(\mathcal{G}\) with respect to $\|\cdot\|_{\infty, \mathcal{Z}_{T}}$-norm is bounded as
	\[
	\mathcal{N}(\delta_1, \mathcal{G}, \|\cdot\|_{\infty, \mathcal{Z}_T}) 
	\leq \left( \frac{e T B_1}{\delta_1 \operatorname{Pdim}(\mathcal{G})} \right)^{\operatorname{Pdim}(\mathcal{G})}, 
	\ \text{for} \ T \geq \operatorname{Pdim}(\mathcal{G}).
	\]
	
	Next, we consider the case $T \leq \operatorname{Pdim}(\mathcal{G})$. 
	Write $\mathcal{B}_{B_1} = \{ \mathbf{x} = (x_1, \dots, x_T)^{\top} \in \mathbb{R}^T: \|\mathbf{x}\|_{\infty} \leq B_1 \}$, where $\|\mathbf{x}\|_{\infty} = \max_{1 \leq i \leq T} |x_i|$. 
	Similar to the argument of Lemma 1.18 in \cite{rigollet2023high}, we have
	$$ \mathcal{N}(\delta_1, \mathcal{B}_{B_1}, \|\cdot\|_{\infty}) \leq 
	\left( \frac{3B_1}{\delta_1} \right)^T.
	$$
	Since $\sup_{\mathbf{Z}}|g(\mathbf{Z}, \mathbf{\theta})| \leq B_1$, it follows that
	\begin{equation*}
    \mathcal{N}(\delta_1, \mathcal{G}, \|\cdot\|_{\infty, \mathcal{Z}_T})
    \leq \mathcal{N}(\delta_1, \mathcal{B}_{B_1}, \|\cdot\|_{\infty})
    \leq \left( \tfrac{3B_1}{\delta_1} \right)^T
    \leq \left( \tfrac{3B_1}{\delta_1} \right)^{\mathrm{Pdim}(\mathcal{G})},
    \qquad \,T<\mathrm{Pdim}(\mathcal{G})\,.
    \end{equation*}
	Therefore, in all cases, we have
	$$ 
	\mathcal{N}(\delta_1, \mathcal{G}, \|\cdot\|_{\infty, \mathcal{Z}_T})
	\leq \left( \frac{3e T B_1}{\delta_1} \right)^{{\rm Pdim}(\mathcal{G})}.
	$$
	
	Note that the functions in \(\mathcal{T}\) depend on at most
	$ C M H^2 \left(\max\{D, W_1, N+1\}\right)^3 $
	parameters, of which at most \(S_1\) parameters can be nonzero. Thus, the number of ways to choose these parameters is bounded by
	\[
	\binom{C M H^2 \left(\max\{D, W_1, N+1\}\right)^3 }{S_1}
	\leq \left( C M H^2 \left(\max\{D, W_1, N+1\}\right)^3 \right)^{S_1}.
	\]
	Consequently,
	$$ 
	\mathcal{N}(\delta_1, \mathcal{T}, \|\cdot\|_{\infty, \mathcal{Z}_T})
	\leq \left( C M H^2 \left(\max\{D, W_1, N+1\}\right)^3 \right)^{S_1} \left( \frac{3e T B_1}{\delta_1} \right)^{{\rm Pdim}(\mathcal{G})}.
	$$
	Recall that $ \operatorname{Pdim}(\mathcal{G}) \leq C S_1 N M^2 \log\big( \max\{H, W_1\} \big)  $. It follows that 
	\begin{eqnarray*}
	&&    \log \mathcal{N}(\delta_1, \mathcal{T}, \|\cdot\|_{\infty, \mathcal{Z}_T}) \\
	&\leq& {\rm Pdim}(\mathcal{G}) \log\left( \frac{3e T B_1}{\delta_1} \right) 
	       + S_1 \log\left(  C M H^2 \left(\max\{D, W_1, N+1\}\right)^3 \right) \\
	&\leq& C S_1 N M^2 \log(\max\{H, W_1\}) \log\left( \frac{3 e T B_1}{\delta_1} \right)
	       + S_1 \log \left( C M H^2 (\max\{D, W_1, N+1\})^3 \right) \\
	&\lesssim& S_1 M^2 \log(\max\{M, H, W_1\}) 
		\log\left( \frac{T B_1}{\delta_1} \right).
    \end{eqnarray*}
	Consequently,
	$$
    \mathcal{V}_{\infty}(\delta_1, \mathcal{T}, T) \lesssim S_1 M^2 \log(\max\{M, H, W_1\}) \log\left( \frac{T B_1}{\delta_1} \right).
    $$
	Then we establish the upper bound of $\mathcal{V}_{\infty}(\delta_1, \mathcal{T}, T) $ for the special case $K=1$. 
	
	For arbitrary $K \geq 1$, note that the covering number of \(\mathcal{T}\) under the $ \|\cdot\|_{\infty, \mathcal{Z}_T} $-norm can be bounded by combining the covering numbers of the \(K\) scalar-valued function classes. Thus, we obtain
	\[
    \mathcal{V}_{\infty}(\delta_1, \mathcal{T}, T) 
    \lesssim K S_1 M^2 \log(\max\{M, H, W_1\}) \log\left(\frac{T B_1}{\delta_1}\right).
    \]
	This complete the proof of Lemma~\ref{Covering_number_of_Transformer}.
	\end{proof}

    In the end of this subsection, we present some lemmas that are used in the proof of Lemma~\ref{Covering_number_of_Transformer}.
    
    \begin{lemma}[Theorem 8.3 of \citet{anthony2009neural}]
	\label{lemma:gen_3}
	Suppose \(W \leq m\), and consider polynomials \(f_1, \ldots, f_m\) of degree at most \(D\) in \(W\) variables. Define the quantity
	\[
	K := \left| \left\{ \left(\operatorname{sgn} \left(f_1(\boldsymbol{x})\right), \dots, \operatorname{sgn} \left(f_m(\boldsymbol{x})\right) \right) : \boldsymbol{x} \in \mathbb{R}^W \right\} \right|.
	\]
	Then the following bound holds:
	\[
	K \leq 2 \left( \frac{2 e m D}{W} \right)^W.
	\]
    \end{lemma}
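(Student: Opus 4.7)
The plan is to invoke the classical Warren--Milnor--Thom machinery for counting sign patterns of real polynomials. The proof would proceed in three steps.

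First, I would reduce the count of sign patterns to a count of connected components of a semialgebraic set. After an arbitrarily small generic perturbation of the coefficients of $f_1,\ldots,f_m$ (which can only increase $K$), the zero sets $\{\boldsymbol{x}:f_i(\boldsymbol{x})=0\}$ become smooth hypersurfaces meeting transversally. Every sign pattern with all nonzero entries then corresponds to a unique connected component of $\mathbb{R}^W \setminus \bigcup_{i=1}^m\{f_i=0\}$, while sign patterns having some zero coordinates contribute at most a constant-factor overhead; this is where the leading factor of $2$ in the bound originates.

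Second, I would apply Milnor's theorem, which asserts that the sum of the Betti numbers (in particular, the number of connected components) of a real algebraic variety in $\mathbb{R}^W$ cut out by polynomials of total degree at most $\tilde D$ is at most $\tilde D(2\tilde D-1)^{W-1}$. To translate this into a sign-pattern count for $m$ polynomials of degree $\leq D$, one can either introduce auxiliary variables $y_1,\ldots,y_m$ and work with the variety $\{y_i^2 - f_i(\boldsymbol{x})=0\}_{i=1}^m$ in $\mathbb{R}^{W+m}$, or apply Warren's refinement directly to the hyperplane arrangement in the Veronese-lifted coordinates. Either route yields a bound of the form $C(emD/W)^W$ for $m\geq W$, where $C$ is a universal constant.

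Finally, a careful bookkeeping of constants, using the standard inequality $\binom{m}{W}\leq (em/W)^W$ at the combinatorial step and the Milnor constant $2$ at the topological step, gives $C\leq 2e$ and hence the claimed bound $K\leq 2(2emD/W)^W$.

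The principal obstacle is the algebraic-geometric machinery underlying Milnor's theorem together with the transversality/perturbation argument of Step~1; these are classical but non-trivial to develop from scratch. Since the present paper treats this lemma as a direct citation from \citet{anthony2009neural} and only uses it as a black-box tool in the proof of Lemma~\ref{Covering_number_of_Transformer}, I would defer to Chapter~8 of that monograph for the full technical details rather than reproducing the proof.
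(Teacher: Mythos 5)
The paper does not prove this lemma; it is a direct citation of Theorem~8.3 in \citet{anthony2009neural}, used as a black box in the proof of Lemma~\ref{Covering_number_of_Transformer}. Your proposal likewise defers to that monograph, and you correctly identify the Warren--Milnor--Thom circle of ideas as the engine of the proof, so at the level of strategy you and the paper are aligned.

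That said, Step~1 of your sketch contains a genuine error. The claim that ``an arbitrarily small generic perturbation of the coefficients \ldots\ can only increase $K$'' is false. Consider $m = W = 1$ and $f_1(x) = x^2$: the set of attainable sign vectors is $\{0, +\}$, so $K = 2$; but the generic perturbation $x^2 + \epsilon$ (for $\epsilon > 0$) has $K = 1$. More generally, sign patterns containing zeros are unstable under perturbation in both directions, so there is no monotonicity. The argument actually used in \citet{anthony2009neural} (following Warren) does not rely on such monotonicity: it fixes finitely many witnesses $x^{(1)}, \ldots, x^{(K)}$ realizing the $K$ distinct sign vectors, chooses an explicit $\epsilon$ small enough to preserve the sign information at those points, and then compares against connected components of the complement of the zero sets of suitably shifted polynomials. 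The leading factor of~$2$ arises from that component-counting step (it is the leading constant in the Oleinik--Petrovsky--Milnor--Thom bound and its corollary, Theorem~8.2 in the monograph), not from a ``constant-factor overhead'' absorbing the zero-entry patterns as your sketch suggests. Your use of auxiliary variables $y_i^2 - f_i(\boldsymbol{x})$ is one standard alternative route, but then the ambient dimension becomes $W + m$ and one must check that the resulting exponent still collapses to $W$ via the combinatorial step; this requires care and is not automatic. As written, the sketch would not survive being carried out in full, although the conclusion is of course correct and the citation is appropriate.
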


    \begin{lemma}[Lemma 16 of \citet{bartlett2019nearly}]
	\label{lemma:general-growth-bound}
	Suppose that \(2^m \leq 2^L \left(\frac{m R}{w}\right)^w\) for some \(R \geq 16\) and \(m \geq w \geq L \geq 0\). Then
	\[
	m \leq L + w \log_2 \left(2 R \log_2 R\right).
	\]
    \end{lemma}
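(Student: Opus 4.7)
The plan is to take logarithms, reduce the two-variable inequality to a one-variable inequality of the form $f(y) \leq s$ with $f(y) := y - \log_2 y$, and then use monotonicity of $f$ together with the constraint $L \leq w$ to pin $y$ into a narrow range where the desired bound follows by substitution.

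Concretely, taking $\log_2$ of both sides of the hypothesis yields $m \leq L + w\log_2(mR/w)$. Dividing by $w$ and rearranging gives
$$\frac{m}{w} - \log_2\!\left(\frac{m}{w}\right) \;\leq\; \frac{L}{w} + \log_2 R.$$
Introducing the shorthand $y := m/w \geq 1$ (valid since $m \geq w$), $T := \log_2 R \geq 4$ (from $R \geq 16$), and $s := L/w + T$ (so $s \leq 1 + T$ since $L \leq w$), this collapses to $f(y) \leq s$, where $f$ is strictly increasing on $[1/\ln 2, \infty)$.

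The central step is to show $y \leq 2T$. I compute $f(2T) = 2T - 1 - \log_2 T$ and compare with $s$: the inequality $f(2T) \geq T + 1$ is equivalent to $T - 2 \geq \log_2 T$, which holds for every $T \geq 4$ (equality at $T = 4$, and thereafter by a one-line derivative or induction check). Hence $f(2T) \geq T + 1 \geq s \geq f(y)$, and monotonicity of $f$ forces $y \leq 2T$. Feeding this back into the rearranged hypothesis $y \leq s + \log_2 y$ gives $y \leq s + \log_2(2T) = s + 1 + \log_2\log_2 R$, and multiplying through by $w$ yields the target inequality $m \leq L + w\log_2(2R\log_2 R)$.

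The only subtlety is selecting the correct pivot value; $2T$ is essentially forced by the tight extremal case $R = 16$, $L = w$, where both sides equal $8w$. This is also precisely the point at which the hypotheses $R \geq 16$ (used in $T - 2 \geq \log_2 T$) and $L \leq w$ (used in $s \leq T + 1$) are consumed, so the proof should be quite short — no estimation beyond the one elementary monotonicity comparison is needed.
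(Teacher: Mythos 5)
Your argument is correct. Note first that the paper itself does not prove this statement at all: it is quoted verbatim as Lemma~16 of \citet{bartlett2019nearly} and used as an imported tool, so the only comparison available is with that external source. Checking your steps: taking $\log_2$ gives $m\le L+w\log_2(mR/w)$; with $y=m/w$, $T=\log_2 R\ge 4$, $s=L/w+T\le T+1$, the inequality becomes $f(y)=y-\log_2 y\le s$; the pivot computation $f(2T)\ge T+1$ is exactly $T-2\ge\log_2 T$, which holds for $T\ge4$ with equality at $T=4$; since $f$ is strictly increasing on $[1/\ln 2,\infty)$ and $2T\ge 8$, any $y>2T$ would give $f(y)>f(2T)\ge s$, a contradiction, so $y\le 2T$; substituting $\log_2 y\le\log_2(2T)$ back into $y\le s+\log_2 y$ and multiplying by $w$ gives $m\le L+w\bigl(T+1+\log_2 T\bigr)=L+w\log_2(2R\log_2 R)$. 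This is airtight, and your tightness check at $R=16$, $L=w$ (both sides equal $8w$) confirms the constants.

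Two cosmetic caveats, neither of which damages the proof. First, you implicitly divide by $w$, so $w\ge 1$ is assumed; the boundary case $w=0$ permitted by the statement is degenerate (the hypothesis then reads $2^m\le 2^L$, giving $m\le L$ directly) and is worth one sentence. Second, when you say ``monotonicity of $f$ forces $y\le 2T$,'' note that $y=m/w$ may lie in $[1,1/\ln 2)$ where $f$ is not increasing; the argument still works because you only need increasingness on $[2T,\infty)$ to rule out $y>2T$, but phrasing it as the contrapositive makes this explicit. Stylistically, your route (reduce to the one-variable function $y-\log_2 y$, compare against the pivot $2\log_2 R$, then bootstrap once) differs from the usual way such growth bounds are handled in the neural-network VC-dimension literature, where one typically invokes a tangent-line bound of the form $\ln x\le x/t+\ln t-1$ with a judicious choice of $t$ to absorb the $\log_2 m$ term; your version is equally elementary, self-contained, and arguably more transparent about where $R\ge 16$ and $L\le w$ are used.
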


    \begin{lemma}[Theorem 12.2 of \citet{anthony2009neural}]
    \label{lem:covering-number-pseudo-dimension}
    Let $\mathcal{F}$ denote a class of real-valued functions on a domain $\mathcal{X}$,
    with range in $[0,B]$. Let $\epsilon > 0$, and assume $\mathcal{F}$ has pseudo-dimension $d$.
    Then
    $$
    \mathcal{N}_\infty(\epsilon, \mathcal{F}, m) \leq \sum_{i=1}^{d} \binom{m}{i} \left(\frac{B}{\epsilon}\right)^i,
    $$
    which is further bounded by $\left(\frac{emB}{\epsilon d}\right)^d$ for $m \geq d$.
    \end{lemma}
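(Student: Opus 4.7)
The plan is to bound the uniform empirical covering number by counting discretized function patterns on $m$ fixed points, and then invoking a generalization of the Sauer--Shelah lemma to pseudo-dimension. First, I would fix arbitrary points $x_1,\ldots,x_m \in \mathcal{X}$ and, for each $f \in \mathcal{F}$, form the quantized vector $\widetilde f = (\lceil f(x_i)/\epsilon \rceil)_{i=1}^m \in \{0, 1, \ldots, K\}^m$, where $K = \lceil B/\epsilon \rceil$. Any two functions yielding the same $\widetilde f$ lie strictly within $\epsilon$ in the $\|\cdot\|_{\infty,\mathcal{Z}_T}$ pseudo-metric on these $m$ points, so the number of distinct $\widetilde f$ gives an upper bound on $\mathcal{N}(\epsilon, \mathcal{F}, \|\cdot\|_{\infty,\mathcal{Z}_T})$. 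Because the ceiling operation is a monotone coordinatewise transformation, it does not increase the pseudo-dimension, so the restricted class $\mathcal{G}_m := \{\widetilde f : f \in \mathcal{F}\}$ still has pseudo-dimension at most $d$.

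The main technical ingredient is the multi-valued Sauer--Shelah bound: for any class of $\{0, 1, \ldots, K\}$-valued functions on $m$ points with pseudo-dimension at most $d$, the cardinality is at most $\sum_{i=0}^d \binom{m}{i} K^i$. I would prove this by induction on $K$, where the base case $K = 1$ is the classical Sauer--Shelah lemma for binary VC classes, and the inductive step partitions $\mathcal{G}_m$ according to the value at a distinguished coordinate, with each level contributing an extra combinatorial factor while preserving the pseudo-dimension bound (equivalently, one may use the compression/shifting technique of Haussler--Pollard). Applying this to $\mathcal{G}_m$ and taking a supremum over the choice of $m$ points yields $\mathcal{N}_\infty(\epsilon, \mathcal{F}, m) \leq \sum_{i=0}^d \binom{m}{i} (B/\epsilon)^i$, which implies (and slightly sharpens) the first stated inequality.

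For the simplified bound when $m \geq d$, I would combine $(B/\epsilon)^i \leq (B/\epsilon)^d$ for all $0 \leq i \leq d$ (treating the trivial regime $B < \epsilon$ separately) with the standard binomial estimate $\sum_{i=0}^d \binom{m}{i} \leq (em/d)^d$ to obtain $\mathcal{N}_\infty(\epsilon, \mathcal{F}, m) \leq (emB/(\epsilon d))^d$. The most delicate step will be the induction in the multi-valued Sauer--Shelah bound: pseudo-dimension is defined through witness thresholds attached to each coordinate, and one must carefully verify that after restricting to a fixed coordinate value the thresholds and shattering configurations propagate so that the Pascal-style identity $\binom{m-1}{i} + \binom{m-1}{i-1} = \binom{m}{i}$ combines cleanly with a $K$-way split rather than the usual binary split; this is where a naive extension of the binary proof can lose an unwanted factor.
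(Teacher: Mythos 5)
The paper does not prove this lemma; it cites it as Theorem~12.2 of Anthony and Bartlett (2009) and uses it as a black box, so there is no in-paper proof to compare against. Your plan---quantize the function values on $m$ fixed points, observe that the quantized class inherits a pseudo-dimension bound under monotone coordinatewise maps, and apply a finite-valued generalization of Sauer's lemma---is indeed the standard textbook route, and the final step $\sum_{i\le d}\binom{m}{i}\le(em/d)^d$ for $m\ge d$ is fine.

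Two concrete issues would need repair, though. First, with ceiling quantization $\widetilde f=(\lceil f(x_i)/\epsilon\rceil)_i$ the values lie in $\{0,\dots,\lceil B/\epsilon\rceil\}$, so the finite-valued Sauer bound produces $\sum_{i=0}^d\binom{m}{i}\lceil B/\epsilon\rceil^i$, not $\sum_{i=0}^d\binom{m}{i}(B/\epsilon)^i$; your claim that this ``slightly sharpens'' the stated inequality is therefore backwards. The textbook version uses floor quantization $\widetilde f=(\lfloor f(x_i)/\epsilon\rfloor)_i$, which still yields an $\epsilon$-cover (two functions with the same floor pattern differ by strictly less than $\epsilon$ at every point) and gives the smaller count $\lfloor B/\epsilon\rfloor^i$. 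Second, and more importantly, the sketch of the multi-valued Sauer--Shelah lemma does not close as written. Pseudo-dimension is a global property of the class, and partitioning $\mathcal{G}_m$ by its value at one distinguished coordinate does not by itself give the sub-classes pseudo-dimension at most $d-1$, nor is it clear how $K$ such pieces reassemble via the Pascal identity into $\sum_i\binom{m}{i}K^i$. The proofs that do work (Haussler and Long's generalization of Sauer's lemma---not Haussler--Pollard---or a Frankl--Pajor-style shifting argument) induct on the number of points $m$ and use a down-shift operation on a single coordinate that is shown to preserve both cardinality and pseudo-dimension; your proposed induction on $K$ is a different scheme, and the step you call ``delicate'' is in fact the place where your outline would need to be replaced rather than merely filled in. (As a minor aside, the sum in the paper's statement should start at $i=0$, not $i=1$; this appears to be a transcription typo from the source.)
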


\section{Proof of Lemma~\ref{Covering_number_of_MLP}}
\label{app:proof_mlp_covering}

\begin{proof}[Proof of Lemma~\ref{Covering_number_of_MLP}]
	For clarity in our proof, we restate the functional form of networks, i.e., any network $\mathbf{f} \in \mathcal{F}_{id}(L, W, S)$ is rewritten as
	\[
	\mathbf{f}(\mathbf{x}) = \left( W^{(L)}\sigma(\cdot) + b^{(L)} \right) \circ \dots \circ \left( W^{(1)}\mathbf{x} + b^{(1)} \right),
	\]
	where $\sigma(\cdot)$ denotes the ReLU activation function.
	Define the intermediate mappings
	\begin{eqnarray*}
	\mathcal{A}_k(\mathbf{f})(\mathbf{x}) 
	&=& \sigma \circ \left( W^{(k-1)} \sigma(\cdot)+b^{(k-1)} \right) \circ \dots 
	    \circ \left( W^{(1)}\mathbf{x} + b^{(1)} \right)  \quad \text{for } k=2, \dots, L, \\
	\mathcal{B}_k(\mathbf{f})(\mathbf{x}) 
	&=&  \left( W^{(L)}\sigma(\cdot)+b^{(L)} \right) \circ \dots \circ \left( W^{(k)}
	     \sigma(\mathbf{x}) +b^{(k)} \right) \quad \text{for } k=1, \dots, L.
	\end{eqnarray*}
	For completeness, set $\mathcal{A}_1(\mathbf{f})(\mathbf{x})=\mathbf{x}$ and $\mathcal{B}_{L+1}(\mathbf{f})(\mathbf{x})=\mathbf{x}$. Then we can express
	\[
	\mathbf{f}(\mathbf{x})=\mathcal{B}_{k+1}(\mathbf{f})\circ(W^{(k)}\cdot+b^{(k)})\circ\mathcal{A}_k(\mathbf{f})(\mathbf{x}), \quad \forall \ k=1, \dots, L.
	\]
	Let $\mathbf{f},\mathbf{g} \in \mathcal{F}_{id}(L,W,S)$ be two networks given by 
	\begin{eqnarray*}
	\mathbf{f}(\mathbf{x}) 
	&=& \left( W^{(L)}\sigma(\cdot) + b^{(L)} \right) \circ \dots \circ \left( 
	    W^{(1)}\mathbf{x} + b^{(1)} \right)  \\
	\mathbf{g}(\mathbf{x}) 
	&=& \left( W^{'(L)}\sigma(\cdot) + b^{'(L)} \right) \circ \dots \circ \left( 
	    W^{'(1)}\mathbf{x} + b^{'(1)} \right)
	\end{eqnarray*}
	with parameters satisfying $ \| W^{(k)}-{W^{'(k)}} \|_\infty \leq \delta_2 $ and $ \| b^{(k)}-{b^{'(k)}} \|_\infty \leq \delta_2$ 
	for all $1 \leq k \leq L$. 
	Note that 
	$$ \mathcal{B}_{k+1}(\mathbf{g})\circ(W^{(k)}\cdot+b^{(k)})\circ
	   \mathcal{A}_k(\mathbf{f})(\mathbf{x}) 
	  = \mathcal{B}_{k+2}(\mathbf{g})\circ({W^{'(k+1)}} \cdot+{b^{'(k+1)}})\circ\mathcal{A}_k(\mathbf{f})(\mathbf{x}),
	$$
	for $k = 1, \dots, L-1$. 
	This yields 
	\begin{eqnarray*}
	&&  \|\mathbf{f}(\mathbf{x})-\mathbf{g}(\mathbf{x})\|_\infty \\
	&=& \left\| \sum_{k=1}^{L} \mathcal{B}_{k+1}(\mathbf{g}) \circ 
	  (W^{(k)}\cdot+b^{(k)}) \circ \mathcal{A}_k(\mathbf{f}) (\mathbf{x}) -\mathcal{B}_{k+1} 
        (\mathbf{g}) \circ ({W^{'(k)}} \cdot +{b^{'(k)}}) \circ \mathcal{A}_k(\mathbf{f}) (\mathbf{x}) \right \|_\infty \\
	&\leq& \sum_{k=1}^{L} \| \mathcal{B}_{k+1}(\mathbf{g})\|_{Lip} \cdot \| (W^{(k)} \cdot   
        + b^{(k)}) \circ \mathcal{A}_k(\mathbf{f})(\mathbf{x}) - ({W^{'(k)}} \cdot 
	  + {b^{'(k)}}) \circ \mathcal{A}_k(\mathbf{f})(\mathbf{x}) \|_\infty,
	\end{eqnarray*}
    where $\| \mathcal{B}_{k+1}(\mathbf{g}) \|_{Lip}$ denotes the corresponding Lipschitz constant of $\mathcal{B}_{k+1}(\mathbf{g})$.
    Next, we explicitly derive the following uniform bound for each of the intermediate mappings. According to the  definition of $\mathcal{F}_{\rm id}(L, W, S)$ provided in (\ref{eq:MLP_class}), we obtain

	\begin{eqnarray*}
	  \|\mathcal{A}_k(\mathbf{f})\|_\infty 
	  &\leq& \max_j \|W^{(k-1)}_{j,:}\|_1\|\mathcal{A}_{k-1}(\mathbf{f})\|_\infty + \|b^{(k-1)}\|_\infty  \\
	  &\leq& W\|\mathcal{A}_{k-1}(\mathbf{f})\|_\infty + 1,
	\end{eqnarray*}
	where $W^{(k-1)}_{j,:}$ is the $j$-row of the matrix $ W^{(k-1)}$. Consequently,
    $$ 
	 \|\mathcal{A}_k(\mathbf{f})\|_\infty \leq (W+1)^{k-1}.
	$$
	Similarly, we can show that the Lipschitz constant of $\mathcal{B}_{k+1}(\mathbf{g})$ is bounded by
	\[
	\|\mathcal{B}_{k+1}(\mathbf{g})\|_{Lip}\leq W^{L-k}.
	\]
	Using these bounds, we obtain 
	\begin{align*}
	\|\mathbf{f}(\mathbf{y})-\mathbf{g}(\mathbf{y})\|_\infty
	\leq \sum_{k=1}^{L}W^{L-k}[W(W+1)^{k-1}+1] \delta_2 
	\leq L(W+1)^L \delta_2.
	\end{align*}
	Similar to the arguments for Lemma~\ref{Covering_number_of_Transformer} in Appendix~\ref{app:proof_transformer_covering}, we fix  the positions of these nonzero parameters of the network $F_{id}(L, W, S)$ and then its covering number with respect to the $\| \cdot\|_{\infty}$-norm is bounded by
	\[
	\left(\frac{3 \delta_2}{L(W+1)^L}\right)^{-S}.
	\]
	Recall that \(\mathcal{F}_{id}(L, W, S)\) depend on at most	$ (W+1)^L $ parameters, of which at most \(S\) parameters can be nonzero.
	Thus, the number of different chosen patterns of these nonzero parameters is $\binom{(W+1)^L}{S} \leq (W+1)^{LS}$.
	Therefore, we obtain
	\begin{align*}
	\mathcal{N}(\delta_2, \mathcal{F}_{id}(L,W,S), \|\cdot\|_\infty)
	\leq (W+1)^{LS}\left(\frac{3\delta_2}{L(W+1)^L}\right)^{-S} 
	=\left( \frac{L(W+1)^{2L}}{ 3\delta_2} \right)^S.
	\end{align*}
	Taking logarithms, we have
	$$ \log \mathcal{N}(\delta_2, \mathcal{F}_{id}(L,W,S), \|\cdot\|_\infty)
	\leq 2 LS \log \left( \frac{L(W+1)}{3\delta_2} \right).
	$$
	This completes the proof of Lemma~\ref{Covering_number_of_MLP}.
\end{proof}

\section{Proof of Lemma~\ref{lem:hellinger-klb-equivalence}}\label{app:hellinger-klb}
The Kullback-Leibler divergence and Hellinger distance are two classical measures of discrepancy between probability distributions; see \citet[Section~2.4]{tsybakov2008introduction} and \citet[Lemma 5]{birge1998minimum}. These two measures are closely related and satisfy several useful inequalities, which are fundamental in nonparametric estimation theory.
Recall that the squared Hellinger distance between \(P\) and \(Q\) is defined as
\(  H^2(P, Q) = 1 - \int \sqrt{dP\,dQ} \)
\citep[Section~7.3]{birge1998minimum}. 
We proceed to establish the inequality stated in Lemma~\ref{lem:hellinger-klb-equivalence}.

\begin{proof}{[Proof of Lemma~\ref{lem:hellinger-klb-equivalence}]}
We first show that $H^2(P, Q) \leq \frac{1}{2} \mathrm{KL}_2(P \| Q) \leq \frac{1}{2} \mathrm{KL}_B(P \| Q)$. As the second inequality follows from the definition of the truncated KL-divergence, we only show $H^2(P, Q) \leq \frac{1}{2} \mathrm{KL}_2(P \| Q)$. 
Let \(P = P^a + P^s\) be the Lebesgue decomposition of \(P\) with respect to \(Q\), where \(P^a \ll Q\) and \(P^s \perp Q\). Let \(\mu = \frac{1}{2}(P + Q)\) be the dominating measure and let \(p, p^a, p^s, q\) denote the corresponding densities of \(P, P^a, P^s, Q\) with respect to \(\mu\), respectively. Since \(p^s q = 0\), it follows that 
\[ H^2(P, Q) = 1- \int \sqrt{pq} d\mu = \int (p^a + p^s - \sqrt{p^a q}) \, d\mu.
\]
Since $-x \leq - \log(1+x)$ for $x > -1$, it follows that
\begin{eqnarray*}
\mathrm{KL}_2(P \| Q) 
&=& \int \left( 2 \wedge \log \frac{dP}{dQ} \right) dP  \\
&=& \int \left( 2 \wedge 2\log\sqrt{p/q}  \right) p d\mu \\
&=& 2 \int \left[ 1 \wedge \left( -\log \left( 1+\sqrt{p/q} -1 \right) 
    \right) \right] pd\mu \\
&\geq& 2 \int \left[ 1 \wedge \left(1-\sqrt{q/p} \right) \right] pd\mu \\
&=& 2 \int (p - \sqrt{pq}) d\mu \\
&=& 2H^2(P, Q).
\end{eqnarray*}
Therefore, we obtain $H^2(P, Q) \leq \frac{1}{2} KL_2(P \| Q)$.

	The proof of the upper bound on \(\mathrm{KL}_B(P \| Q)\) in terms of \(H^2(P, Q)\) follows a similar approach to that of Theorem 5 of \citet{wong1995probability} and Lemma 3.4 of \citet{bos2022convergence}. For completeness, we also provide a detailed derivation.
	Note that by second order Taylor expansion, we have 
	$ x \log(x) \leq x-1 +\frac{1}{2}(x-1)^2/(x \wedge 1) $ for $x >0$. 
	Consequently, 
	\[
	x \log x = 2\sqrt{x} [\sqrt{x}\log(\sqrt{x})] \leq 2(x - \sqrt{x}) + ( \sqrt{x} \vee 1) (\sqrt{x} - 1)^2, \quad \forall \ x > 0.
	\]
	Applying this inequality to \(x = p^a/q \), we obtain that on the set \(\{p^a/q \leq e^B\}\),
	\[
	\frac{p^a}{q} \log\left( \frac{p^a}{q} \right) \leq 2\left( \frac{p^a}{q} - \sqrt{\frac{p^a}{q}} \right) + e^{B/2} \left( \sqrt{\frac{p^a}{q}} - 1 \right)^2.
	\]
	This implies
	\begin{eqnarray*}
	\mathrm{KL}_B(P \| Q) 
	&=&    \int_{p^a/q \leq e^B} \frac{p^a}{q} \log\left( \frac{p^a}{q} \right) q d\mu + 
	       B \int_{\frac{dP}{dQ} > e^B} dP \\
	&\leq& 2 \int_{p^a/q \leq e^B} (p^a - \sqrt{p^a q}) d\mu + e^{B/2} \int_{p^a/q 
	       \leq e^B} (\sqrt{p^a} - \sqrt{q})^2 d\mu + B\int_{\frac{dP}{dQ} > e^B} dP \\
	&\leq& 2 \int_{p^a/q \leq e^B}(p^a - \sqrt{p^a q}) d\mu + 2 e^{B/2} H^2(P, Q) + B 
	       \int_{\frac{dP}{dQ} > e^B} dP,
	\end{eqnarray*}
	where the last inequality is due to $p^sq =0$ and $H^2(P, Q) = \int (1 - \sqrt{p q}) d\mu$.
	Next, we consider two cases to complete the proof. 
	
    {\it Case 1}: \(\int_{p^a/q \leq e^B} (p^a - \sqrt{p^a q}) d\mu \leq 0\). 
    Note that
		\[
		H^2(P, Q) = \frac{1}{2} \int (\sqrt{p} - \sqrt{q})^2 d\mu 
		\geq \frac{1}{2} \int_{p/q \geq e^B} p(1 - e^{-B/2})^2 d\mu 
		= \frac{1}{2} (1 - e^{-B/2})^2 \int_{\frac{dP}{dQ} > e^B} dP.
		\]
		Consequently, 
		\[
		\mathrm{KL}_B(P \| Q) \leq 2\left(e^{B/2} + B(1 - e^{-B/2})^{-2} \right) H^2(P, Q).
		\]

   {\it Case 2}: \(\int_{p^a/q \leq e^B} (p^a - \sqrt{p^a q}) d\mu > 0\). Some elementary calculations show that 
    $$ B \int_{\frac{dP}{dQ} > e^B} dP \leq B(1 - e^{-B/2})^{-1} \int_{p^a/q > e^B} (p-\sqrt{pq})d\mu. $$
   It follows that 
	\begin{align*}
	  \mathrm{KL}_B(P \| Q)
	  \leq&\  2 \int_{p^a/q \leq e^B}(p^a - \sqrt{p^a q}) d\mu + 2 e^{B/2} H^2(P, Q) + B 
	       \int_{\frac{dP}{dQ} > e^B} dP \\
	  \leq&\  2e^{B/2} H^2(P, Q) +  B(1 - e^{-B/2})^{-1} \int_{p^a/q \leq e^B}(p-\sqrt{p q}) d\mu\\
	  & \ + B(1 - e^{-B/2})^{-1} \int_{p^a/q > e^B} (p-\sqrt{pq})d\mu \\
	  =&\  2e^{B/2} H^2(P, Q) +  B(1 - e^{-B/2})^{-1} \int (p - \sqrt{p q}) d\mu \\
	  =&\ \left( 2e^{B/2} + B(1 - e^{-B/2})^{-1} \right) H^2(P, Q).
	\end{align*}
		
	In both cases, since \(B \geq 2\), it follows that 
	$$	B(1 - e^{-B/2})^{-1} \leq 2e^{B/2},\quad
	B(1 - e^{-B/2})^{-2} \leq 4e^{B/2}.
	$$
	Therefore, we obtain 
	$ \mathrm{KL}_B(P \| Q) \leq 10 e^{B/2} H^2(P, Q) $.
	Consequently,
	\[ \frac{1}{2} \mathrm{KL}_B(P \| Q) \leq 5 e^{B/2} H^2(P, Q). \]
	This completes the proof of Lemma~\ref{lem:hellinger-klb-equivalence}.
\end{proof}

\section{Proof of Lemma~\ref{lem:covering_number_bound}}\label{app:proof_covering_number_bound}
\begin{proof}[Proof of Lemma~\ref{lem:covering_number_bound}]
Recall that $ \mathcal{F}_{\rm mlp}(L,W,S) = \mathbf{\Phi}( \mathcal{F}_{\mathrm{id}}(L,W,S)) $, it follows from Lemma~\ref{lem:deltaCover} that
\[
\mathcal{N}\left( \frac{1}{T}, \log\left(\mathcal{F}_{\mathrm{mlp}}(L, W, S)\right), \|\cdot\|_{\infty, \mathcal{Z}_T} \right)
\leq \mathcal{N}\left( \frac{1}{2KT}, \mathcal{F}_{\mathrm{id}}(L, W, S), \|\cdot\|_{\infty, \mathcal{Z}_T} \right).
\]
Consider the function classes
\begin{eqnarray*}
{\mathcal{F}^k_{\mathrm{id}}(L,W,S)}^{+} 
&=& \{ \sigma(\mathbf{e}_k^{\top} \mathbf{G}): \mathbf{G} = (G_1, \dots, G_K)^{\top} \in 
    \mathcal{F}_{\mathrm{id}}(L,W,S) \}, \\
{\mathcal{F}^k_{\mathrm{id}}(L,W,S)}^{-} 
&=& \{ \sigma(-\mathbf{e}_k^{\top} \mathbf{G}): \mathbf{G} = (G_1, \dots, G_K)^{\top} \in 
    \mathcal{F}_{\mathrm{id}}(L,W,S) \},
\end{eqnarray*}
where $\sigma(x) = \max(x, 0)$ is the ReLU activation function and $\mathbf{e}_k=(0, \dots, 0, 1, 0, \dots, 0)^{\top} \in \mathbb{R}^K$ with $1$ lying in the $k$-th component of $\mathbf{e}_k$.  
It is readily seen that both ${\mathcal{F}^k_{\mathrm{id}}(L,W,S)}^{+}$ and ${\mathcal{F}^k_{\mathrm{id}}(L,W,S)}^{-}$ consist of ReLU networks with depth $L+1$, width $W$, and size $S+1$. 
Since $ \| \mathbf{G} \|_{\infty} \leq B_2 $ for any $\mathbf{G} \in  \mathcal{F}_{\mathrm{id}}(L,W,S)$, it follows that $|\sigma(\mathbf{e}_k^{\top} \mathbf{G})| \leq B_2$ and $| \sigma(-\mathbf{e}_k^{\top} \mathbf{G})| \leq B_2$. 
Therefore, by Lemma~\ref{lem:covering-number-pseudo-dimension} (Theorem~12.2 of~\citet{anthony2009neural}), we obtain
\begin{eqnarray*}
\mathcal{N} \left( \frac{1}{4KT}, {\mathcal{F}^k_{\mathrm{id}}(L, W, S)}^{+}, \|\cdot\|_{\infty, \mathcal{Z}_T} \right)
&\leq& \left( \frac{4eK B_2 T^2}{\mathrm{Pdim}( {\mathcal{F}^k_{\mathrm{id}}(L, W, S)}^{+})} 
       \right)^{\mathrm{Pdim}( {\mathcal{F}^k_{\mathrm{id}}(L, W, S)}^{+})},  
\end{eqnarray*}	
Furthermore, Theorems~3 and 6 of~\citet{bartlett2019nearly} imply that there exist universal constants \(\underline{c}, \overline{c} > 0\) such that
\[
\underline{c} (S+1)(L+1) \log \frac{S+1}{L+1} \leq \mathrm{Pdim}( {\mathcal{F}^k_{\mathrm{id}}(L, W, S)}^{+}) \leq \overline{c} (S+1)(L+1) \log(S+1).
\]
Consequently, 
$$
\mathcal{N} \left( \frac{1}{4KT}, {\mathcal{F}^k_{\mathrm{id}}(L, W, S)}^{+}, \|\cdot\|_{\infty, \mathcal{Z}_T} \right)
\leq \left( \frac{4eK B_2 T^2}{\underline{c} (S+1)(L+1) \log \frac{S+1}{L+1}} 
     \right)^{\overline{c} (S+1)(L+1) \log(S+1)},
$$
Similarly, we have 
$$
\mathcal{N} \left( \frac{1}{4KT}, {\mathcal{F}^k_{\mathrm{id}}(L, W, S)}^{-}, \|\cdot\|_{\infty, \mathcal{Z}_T} \right)
\leq \left( \frac{4eK B_2 T^2}{\underline{c} (S+1)(L+1) \log \frac{S+1}{L+1}} 
     \right)^{\overline{c} (S+1)(L+1) \log(S+1)},
$$
Also note that $G_k = \sigma(\mathbf{e}_k^{\top} \mathbf{G}) - \sigma(-\mathbf{e}_k^{\top} \mathbf{G})$, it follows that 
$$
\mathcal{N} \left( \frac{1}{2KT}, {\mathcal{F}_{\mathrm{id}}(L, W, S)}, \|\cdot\|_{\infty, \mathcal{Z}_T} \right)
\leq \left( \frac{4eK B_2 T^2}{\underline{c} (S+1)(L+1) \log \frac{S+1}{L+1}} 
     \right)^{2\overline{c} K (S+1)(L+1) \log(S+1)}.
$$
Taking the logarithm and the supremum over all $\mathcal{Z}_T \subset [0,1]^d$, we conclude that
$$
\mathcal{V}_\infty\left( \frac{1}{T}, \log\left(\mathcal{F}_{\mathrm{mlp}}(L, W, S)\right), T \right)
\leq C K S L \log(S) \log(B_2 KT),
$$
for some universal constant $C > 0$.
\end{proof}
\vskip 0.2in

\bibliography{references}

\end{document}